\theoremstyle{plain}
\newtheorem{theorem}{Theorem}[section]
\newtheorem{proposition}[theorem]{Proposition}
\newtheorem{lemma}[theorem]{Lemma}
\theoremstyle{definition}
\theoremstyle{remark}
\newtheorem{remark}[theorem]{Remark}
\definecolor{Gray}{gray}{0.9}
\newcommand{\hh}[1]{{\small\color{red}{\bf hh: #1}}}
\newcommand{\zhe}[1]{{\small\color{orange}{\bf zhe: #1}}}
\newcommand{\yz}[1]{{\small\color{blue}{\bf yz: #1}}}
\newcommand{\mh}[1]{{\small\color{cyan}{\bf mh: #1}}}
\newcommand{\hy}[1]{{\small\color{brown}{\bf hy: #1}}}
\newcommand{\RZ}[1]{{\small\color{teal}{\bf ruizhong: #1}}}
\newcommand{\zc}[1]{{\small\color{green}{\bf zhichen: #1}}}
\newcommand{\cleancomments}{
	\renewcommand{\hh}[1]{}
	\renewcommand{\zhe}[1]{}
	\renewcommand{\yz}[1]{}
        \renewcommand{\mh}[1]{}
        \renewcommand{\hy}[1]{}
        \renewcommand{\RZ}[1]{}
        \renewcommand{\zc}[1]{}
}
\title{Discrete-state Continuous-time Diffusion for Graph Generation}
\author{%
  Zhe Xu\thanks{University of Illinois  Urbana-Champaign. \{zhexu3, rq5, zhichenz, htong\}@illinois.edu}\\
  \And
  Ruizhong Qiu\footnotemark[1]\\
  \And
  Yuzhong Chen\thanks{Visa Research. \{yuzchen, hchen, xirafan, menpan, mahdas\}@visa.com}\\
  \And
  Huiyuan Chen\footnotemark[2]\\
  \And
  Xiran Fan\footnotemark[2]\\
  \And
  Menghai Pan\footnotemark[2]\\
  \And
  Zhichen Zeng\footnotemark[1]\\
  \And
  Mahashweta Das\footnotemark[2]\\
  \And
  Hanghang Tong\footnotemark[1]\\
}
\begin{document}

\maketitle

\begin{abstract}
Graph is a prevalent discrete data structure, whose generation has wide applications such as drug discovery and circuit design. Diffusion generative models, as an emerging research focus, have been applied to graph generation tasks. Overall, according to the space of {\em states} and {\em time} steps, diffusion generative models can be categorized into discrete-/continuous-state discrete-/continuous-time fashions. In this paper, we formulate the graph diffusion generation in a discrete-state continuous-time setting, which has never been studied in previous graph diffusion models. The rationale of such a formulation is to preserve the discrete nature of graph-structured data and meanwhile provide flexible sampling trade-offs between sample quality and efficiency. Analysis shows that our training objective is closely related to the generation quality and our proposed generation framework enjoys ideal invariant/equivariant properties concerning the permutation of node ordering. Our proposed model shows competitive empirical performance against state-of-the-art graph generation solutions on various benchmarks and at the same time can flexibly trade off the generation quality and efficiency in the sampling phase. 
\end{abstract}


\section{Introduction}

\begin{wrapfigure}{r}{0.55\textwidth}
  \vspace{-7mm}
  \begin{center}
  \includegraphics[width=0.5\textwidth]{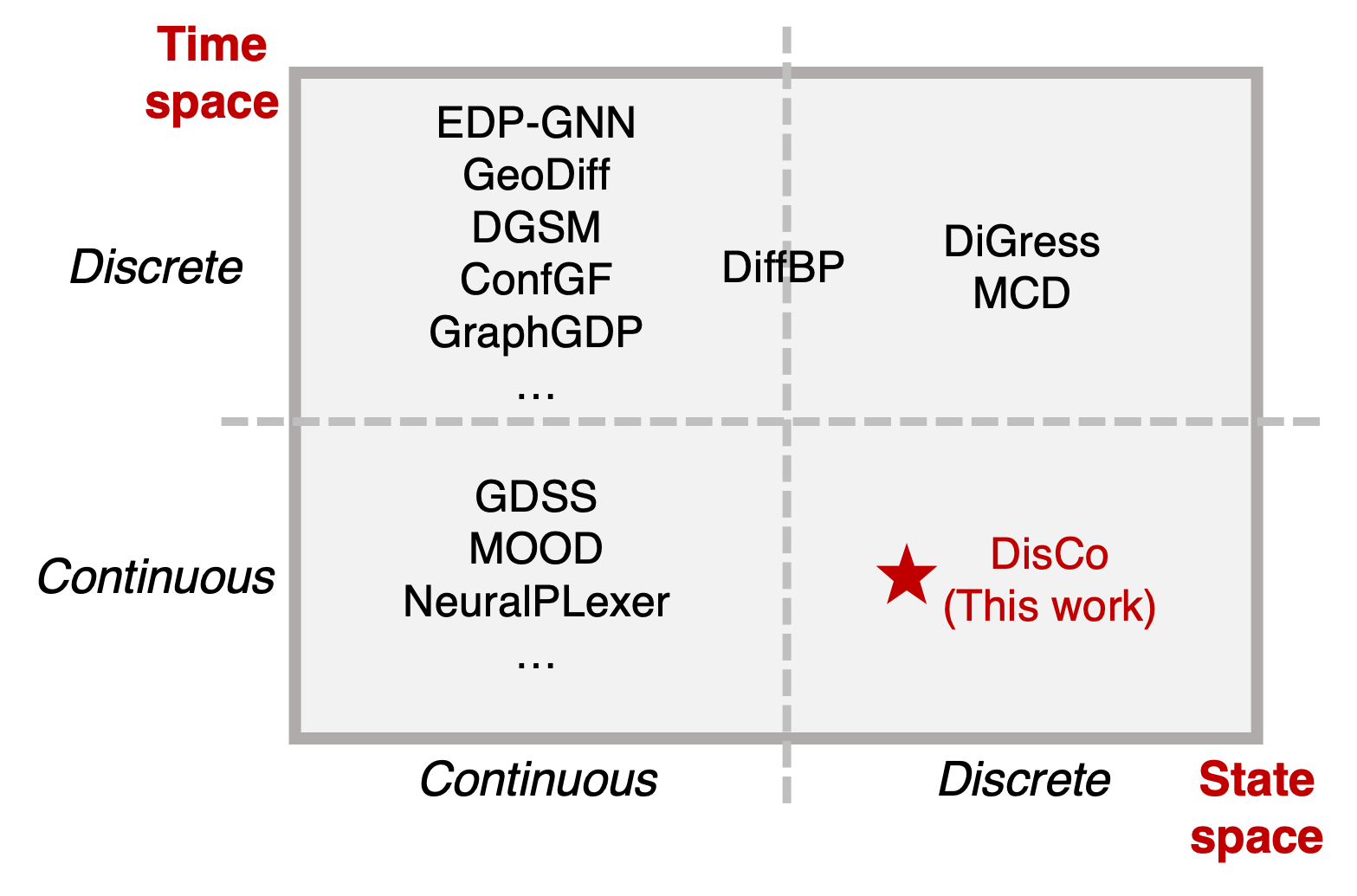}
  \end{center}
  \vspace{-5mm}
  \caption{A taxonomy of graph diffusion models.}
  \vspace{-5mm}
  \label{fig: taxonomy}
\end{wrapfigure}

Graph generation has been studied for a long time with broad applications, based on either the one-shot (i.e., one-step)~\cite{DBLP:journals/corr/abs-1905-11600,DBLP:journals/jcheminf/KwonLCSK20,DBLP:conf/aistats/NiuSSZGE20,DBLP:conf/icml/MartinkusLPW22,DBLP:conf/iclr/VignacF22,DBLP:conf/icml/JoLH22} or auto-regressive generation paradigm~\cite{DBLP:conf/icml/YouYRHL18,DBLP:conf/icml/JinBJ18,DBLP:conf/nips/LiaoLSWHDUZ19,DBLP:journals/mlst/MercadoRLKECB21}. The former generates all the graph components at once and the latter does that sequentially. A recent trend of applying diffusion generative models~\cite{DBLP:conf/icml/Sohl-DicksteinW15,DBLP:conf/nips/HoJA20,DBLP:conf/iclr/0011SKKEP21} to graph generation tasks attracts increasing attentions because of its excellent performance and solid theoretical foundation. In this paper, we follow the one-shot generation paradigm, the same as most graph diffusion generative models.

Some earlier attempts at graph diffusion models treat the graph data in a continuous state space by viewing the graph topology and features as continuous variables~\cite{DBLP:conf/aistats/NiuSSZGE20}. Such a formulation departs from the discrete nature of graph-structured data; e.g., topological sparsity is lost and the discretization in the generation process requires extra hyper-parameters. DiGress~\cite{DBLP:conf/iclr/VignacKSWCF23} is one of the early efforts applying discrete-state diffusion models to graph generation tasks and is the current state-of-the-art graph diffusion generative model. However, DiGress is defined in the discrete time space whose generation is inflexible. This is because, its number of sampling steps must match the number of forward diffusion steps, which is a fixed hyperparameter after the model finishes training. A unique advantage of the continuous-time diffusion models~\cite{DBLP:conf/iclr/0011SKKEP21, DBLP:conf/icml/JoLH22} lies in their flexible sampling process, and its simulation complexity is proportional to the number of sampling steps, determined by the step size of various numerical approaches (e.g., $\tau$-leaping~\cite{gillespie2001approximate,DBLP:conf/nips/CampbellBBRDD22,DBLP:conf/iclr/SunYDSD23}) and decoupled from the models' training. Thus, a discrete-state continuous-time diffusion model is highly desirable for graph generation tasks.


Driven by the recent advance of continuous-time Markov Chain (CTMC)-based diffusion generative model~\cite{DBLP:conf/nips/CampbellBBRDD22}, we incorporate the ideas of CTMC into the corruption and denoising of graph data and propose the first discrete-state continuous-time graph diffusion generative model. It shares the same advantages as DiGress by preserving the discrete nature of graph data and meanwhile overcomes the drawback of the nonadjustable sampling process in DiGress. This \underline{Dis}crete-state \underline{Co}ntinuous-time graph diffusion model is named \textsc{DisCo}.

\textsc{DisCo} bears several desirable properties and advantages. First, despite its simplicity, the training objective has a rigorously proved connection to the sampling error.  Second, its formulation includes a parametric graph-to-graph mapping, named backbone model, whose input-output architecture is shared between \textsc{DisCo} and DiGress. Therefore, the graph transformer (GT)-based backbone model~\cite{DBLP:journals/corr/abs-2202-08455} from DiGress can be seamlessly plugged into \textsc{DisCo}. Third, a concise message-passing neural network backbone model is explored with \textsc{DisCo}, which is simpler than the GT backbone and has decent empirical performance. Last but not least, our analyses show that the forward and reverse diffusion process in \textsc{DisCo} can retain the permutation-equivariant/invariant properties for its training loss and sampling distribution, both of which are critical and practical inductive biases on graph data.

Comprehensive experiments on plain and molecule graphs show that \textsc{DisCo} can obtain competitive or superior performance against state-of-the-art graph generative models and provide additional sampling flexibility. Our main contributions are summarized:
\begin{itemize}[topsep=0pt,noitemsep,leftmargin=*]
    \item {\bf Model.} We propose the first discrete-state continuous-time graph diffusion model, \textsc{DisCo}. We utilize the successful graph-to-graph neural network architecture from DiGress and further explore a new lightweight backbone model with decent efficacy.
    \item {\bf Analysis.} Our analysis reveals (1) the key connection between the training loss and the approximation error (Theorem~\ref{thm: correctness of objective function}) and (2) invariant/equivariant properties of \textsc{DisCo} in terms of the permutation of nodes (Theorems~\ref{thm: permutation-invariant density} and~\ref{thm: permutation-invariant training loss}).
    \item {\bf Experiment.} Extensive experiments validate the empirical performance of \textsc{DisCo}.
\end{itemize}
\section{Preliminaries}

\subsection{Discrete-State Continuous-time Diffusion Models}
\label{sec: CTMC diffusion models}

A  $D$-dimensional discrete state space is represented as $\mathcal{X} = \{1,\dots,C\}^{D}$. A continuous-time Markov Chain (CTMC) $\{\mathbf{x}_t=[x_t^1, \dotsm x_t^D]\}_{t\in[0,T]}$ is characterized by its (time-dependent) rate matrix $\mathbf{R}_t\in\mathbb{R}^{|\mathcal{X}|\times|\mathcal{X}|}$. Here $\mathbf{x}_t$ is the state at the time step $t$. The transition probability $q_{t|s}$ between from time $s$ to $t$ satisfies the Kolmogorov forward equation, for $s<t$,
\begin{align}
    \frac{d}{dt}q_{t|s}(\mathbf{x}_t|\mathbf{x}_s) = \sum_{\xi \in \mathcal{X}} q_{t|s}(\xi|\mathbf{x}_s)\mathbf{R}_t(\xi, \mathbf{x}_t),
    \label{eq: kolmogorov forward}
\end{align}
The marginal distribution can be represented as $q_t(\mathbf{x}_t) = \sum_{\mathbf{x}_0\in\mathcal{X}}q_{t|0}(\mathbf{x}_t|\mathbf{x}_0)\pi_{\texttt{data}}(\mathbf{x}_0)$
where $\pi_{\texttt{data}}(\mathbf{x}_0)$ is the data distribution.
If the CTMC is defined in time interval $[0, T]$ and if the rate matrix $\mathbf{R}_t$ is well-designed, the final distribution $q_T(\mathbf{x}_T)$ can be close to a tractable reference distribution $\pi_{\texttt{ref}}(\mathbf{x}_T)$, e.g., uniform distribution. We notate the reverse stochastic process as $\tilde{\mathbf{x}}_{t} = \mathbf{x}_{T-t}$; a well-known fact (e.g., Section 5.9 in ~\cite{resnick1992adventures}) is that the reverse process $\{\tilde{\mathbf{x}}_t\}_{t\in[0,T]}$ is also a CTMC, characterized by the reverse rate matrix: $\tilde{\mathbf{R}}_t(\mathbf{x},\mathbf{y})=\frac{q(\mathbf{y})}{q(\mathbf{x})}\mathbf{R}_t(\mathbf{y},\mathbf{x})$.
The goal of the CTMC-based diffusion models is an accurate estimation of the reverse rate matrix $\tilde{\mathbf{R}}_t$ so that new data can be generated by sampling the reference distribution $\pi_{\texttt{ref}}$ and then simulating the reverse CTMC~\cite{gillespie1976general,gillespie1977exact,gillespie2001approximate,anderson2007modified}. However, the complexity of the rate matrix is prohibitively high because there are $C^D$ possible states. A reasonable simplification is to factorize the process over dimensions~\cite{DBLP:conf/nips/CampbellBBRDD22,DBLP:conf/iclr/SunYDSD23,DBLP:conf/iclr/VignacKSWCF23,DBLP:conf/nips/AustinJHTB21}. Specifically, the forward process is factorized as $q_{t|s}(\mathbf{x}_t|\mathbf{x}_s) = \prod_{d=1}^D q_{t|s}(x_t^d|x_s^d)$, for $s<t$. Then, the forward diffusion of each dimension is independent and is governed by dimension-specific forward rate matrices $\{\mathbf{R}_t^d\}_{d=1}^{D}$. With such a factorization, the goal is to estimate the dimension-specific reverse rate matrices $\{\tilde{\mathbf{R}}^d_t\}_{d=1}^{D}$.

The dimension-specific reverse rate is represented as $\tilde{\mathbf{R}}_t^d(x^d,y^d) = \sum_{x_0^d}\mathbf{R}^d_{t}(y^d, x^d)\frac{q_{t|0}(y^d|x^d_0)}{q_{t|0}(x^d|x^d_0)}q_{0|t}(x^d_0|\mathbf{x})$. Campbell et al.~\cite{DBLP:conf/nips/CampbellBBRDD22} estimate $q_{0|t}(x^d_0|\mathbf{x})$ via a neural network $p_{\theta}$ such that $p_{\theta}(x^d_0|\mathbf{x},t)\approx q_{0|t}(x^d_0|\mathbf{x})$; Sun et al.~\cite{DBLP:conf/iclr/SunYDSD23} propose another singleton conditional distribution-based objective $\frac{p_{\theta}(y^d|\mathbf{x}^{\setminus d},t)}{p_{\theta}(x^d|\mathbf{x}^{\setminus d},t)} \approx  \frac{q(y^d|\mathbf{x}^{\setminus d})}{q(x^d|\mathbf{x}^{\setminus d})}$ whose rationale is Brook's Lemma~\cite{brook1964distinction,DBLP:conf/uai/Lyu09}.


\subsection{Graph Generation and Notations}
We study the graphs with \emph{categorical} node and edge attributes. A graph with $n$ nodes is represented by its edge type matrix and node type vector: $\mathcal{G}=(\mathbf{E},\mathbf{F})$, where $\mathbf{E} = (e^{(i,j)})_{i,j\in\mathbb{N}^{+}_{\leq n}}\in\{1,\dots, a+1\}^{n\times n}$, $\mathbf{F}=(f^i)_{i\in\mathbb{N}^{+}_{\leq n}}\in\{1,\dots, b\}^{n}$, $a$ and $b$ are the numbers of node and edge types, respectively. Notably, the absence of an edge is viewed as a special edge type, so there are $(a+1)$ edge types in total. The problem we study is graph generation where $N$ graphs $\{\mathcal{G}^i\}_{i\in\mathbb{N}^{+}_{\leq N}}$ from an inaccessible graph data distribution $\mathfrak{G}$ are given and we aim to generate $M$ graphs $\{\mathcal{G}^i\}_{i\in\mathbb{N}^{+}_{\leq M}}$ from $\mathfrak{G}$.


\section{Method}

\begin{figure*}[t!]
\vskip 0.2in
\centering
\includegraphics[width=0.85\textwidth]{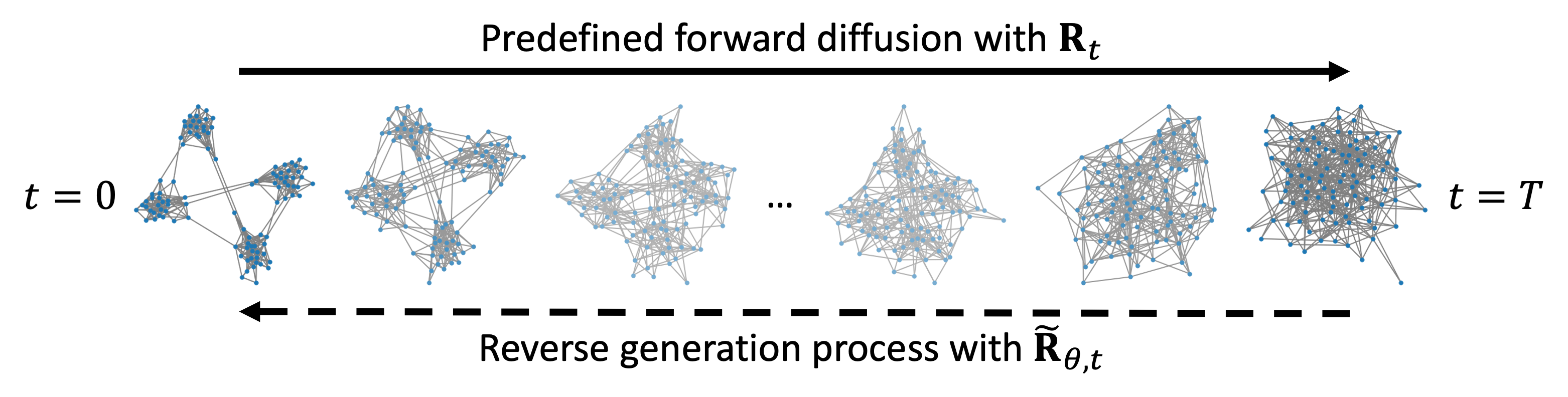}
\vspace{-2mm}
\caption{An overview of \textsc{DisCo}. A transition can happen at any time in $[0, T]$.}
\label{fig: framework}
\vskip -0.2in
\end{figure*}

This section presents the proposed discrete-state continuous-time graph diffusion model, \textsc{DisCo} whose overview is Figure~\ref{fig: framework}. Section~\ref{sec: factorized forward} introduces the necessity to factorize the diffusion process and Section~\ref{sec: forward process} details the forward process. Our training objective and its connection to sampling are introduced in Sections~\ref{sec: parameterization} and~\ref{sec: sampling reverse process}, respectively. Last but not least, a specific neural architecture of the graph-to-graph backbone model and its properties regarding the permutation of node ordering are introduced in Sections~\ref{sec: instantiation} and~\ref{sec: permutation properties}, respectively. \emph{All proofs are in Appendix.}




\subsection{Factorized Discrete Graph Diffusion Process}
\label{sec: factorized forward}
The number of possible states of an $n$-node graph is $(a+1)^{n^2}\times b^n$ which is intractably large. Thus, we follow existing discrete models~\cite{DBLP:conf/nips/AustinJHTB21,DBLP:conf/nips/CampbellBBRDD22,DBLP:conf/iclr/SunYDSD23,DBLP:conf/iclr/VignacKSWCF23} and formulate the forward processes on every node/edge to be independent. Mathematically, the forward diffusion process for $s<t$ is factorized as
\begin{align}
    q_{t|s}(\mathcal{G}_t|\mathcal{G}_s) = \prod_{i,j=1}^n q_{t|s}(e_t^{(i,j)}|e_s^{(i,j)}) \prod_{i=1}^n q_{t|s}(f_t^i|f_s^i)
    \label{eq: factorization on graphs}
\end{align}
where the edge type transition probabilities $\{q_{t|s}(e_t^{(i,j)}|e_s^{(i,j)})\}_{i,j\in\mathbb{N}^{+}_{\leq n}}$ and node type transition probabilities $\{q_{t|s}(f_t^i|f_s^i)\}_{i\in\mathbb{N}^{+}_{\leq n}}$ are characterized by their forward rate matrices $\{\mathbf{R}^{(i,j)}_t\}_{i,j\in\mathbb{N}^{+}_{\leq n}}$ and $\{\mathbf{R}^i_t\}_{i\in\mathbb{N}^{+}_{\leq n}}$, respectively. The forward processes, i.e., the forward rate matrices in our context, are predefined, which will be introduced in Section~\ref{sec: forward process}. Given the factorization of forward transition probability in Eq.~\eqref{eq: factorization on graphs}, a question is raised: \emph{what is the corresponding factorization of the forward rate matrix ($\mathbf{R}_t$) and the reverse rate matrix ($\tilde{\mathbf{R}}_{t}$)?} Remark~\ref{prop: rate factorization} shows such a factorization.


\begin{remark}
\label{prop: rate factorization}
    (Factorization of rate matrices, extended from Proposition 3 of~\cite{DBLP:conf/nips/CampbellBBRDD22}) Given the factorized forward process Eq.~\eqref{eq: factorization on graphs}, the overall rate matrices are factorized as
    \begin{align}
        \mathbf{R}_t(\bar{\mathcal{G}}, \mathcal{G}) &= \sum_i A_t^i
        + \sum_{i,j} B_t^{(i,j)} \label{eq: forward rate matrix}\\
        \tilde{\mathbf{R}}_t (\mathcal{G}, \bar{\mathcal{G}}) &= \sum_i A_t^i \sum_{f_0^i}\frac{q_{t|0}(\bar{f}^i|f^i_0)}{q_{t|0}(f^i|f^i_0)} q_{0|t}(f^i_0|\mathcal{G}) + \sum_{i,j} B_t^{(i,j)} \sum_{e_0^{(i,j)}} \frac{q_{t|0}(\bar{e}^{(i,j)}|e^{(i,j)}_0)}{q_{t|0}(e^{(i,j)}|e^{(i,j)}_0)} q_{0|t}(e^{(i,j)}_0|\mathcal{G})
        \label{eq: reverse rate matrix}
    \end{align}
    where $A_t^i = \mathbf{R}^i_t(\bar{f}^i,f^i)\delta_{\bar{\mathcal{G}}\setminus \bar{f}^i, \mathcal{G}\setminus f^i}$, $B_t^{(i,j)} = \mathbf{R}^{(i,j)}_t(\bar{e}^{(i,j)},e^{(i,j)})\delta_{\bar{\mathcal{G}}\setminus \bar{e}^{(i,j)}, \mathcal{G}\setminus e^{(i,j)}}$, the operator $\delta_{\bar{\mathcal{G}}\setminus \bar{f}^i, \mathcal{G}\setminus f^i}$ (or $\delta_{\bar{\mathcal{G}}\setminus \bar{e}^{(i,j)}, \mathcal{G}\setminus e^{(i,j)}}$) checks whether two graphs $\bar{\mathcal{G}}$ and $\mathcal{G}$ are exactly the same except for node $i$ (or the edge between nodes $i$ and $j$).
\end{remark}

Note that this factorization itself is not our contribution but a necessary part of our framework, so we mention it here for completeness. Its full derivation is in Appendix - Section~\ref{sec: rate factorization appendix}. Next, we detail the design of forward rate matrices.

\subsection{Forward Process}
\label{sec: forward process}
A proper choice of the forward rate matrices $\{\mathbf{R}^{(i,j)}_t\}_{i,j\in\mathbb{N}^{+}_{\leq n}}$ and $\{\mathbf{R}^i_t\}_{i\in\mathbb{N}^{+}_{\leq n}}$ is important because (1) the probability distributions of node and edge types, $\{q(f^i_t)\}_{i\in\mathbb{N}^{+}_{\leq n}}$ and $\{q(e^{(i,j)}_t)\}_{i,j\in\mathbb{N}^{+}_{\leq n}}$, should converge to their reference distributions within $[0, T]$ and (2) the reference distributions should be easy to sample (e.g., uniform distribution). We follow~\cite{DBLP:conf/nips/CampbellBBRDD22} to formulate $\mathbf{R}^{(i,j)}_t=\beta(t)\mathbf{R}^{(i,j)}_e,\ \forall i,j$ and $\mathbf{R}^{i}_t=\beta(t)\mathbf{R}^{i}_f,\ \forall i$, where $\beta(t)$ is a corruption schedule, $\{\mathbf{R}^{(i,j)}_e\}$ and $\{\mathbf{R}^{i}_f\}$ are the base rate matrices. For brevity, we set all the nodes/edges to share a common node/edge rate matrix, i.e., $\mathbf{R}^{(i,j)}_e=\mathbf{R}_e$ and $\mathbf{R}^{i}_f=\mathbf{R}_f$, $\forall i, j$. Then, the forward transition probability for all the nodes and edges are $q_{t|0}(f_t=v|f_0=u)=(e^{\int_{0}^t\beta(s)\mathbf{R}_f ds})_{uv}$ and $q_{t|0}(e_t=v|e_0=u)=(e^{\int_{0}^t\beta(s)\mathbf{R}_e ds})_{uv}$, respectively. We omit the superscript $i$ (or $(i,j)$) because the transition probability is shared by all the nodes (or edges). The detailed derivation of the above analytic forward transition probability is provided in Appendix - Section~\ref{sec: forward appendix}.


For categorical data, a reasonable reference distribution is a uniform distribution, i.e., $\pi_f=\frac{\mathbf{1}}{b}$ for nodes and $\pi_e=\frac{\mathbf{1}}{a+1}$ for edges. In addition, inspired by~\cite{DBLP:conf/iclr/VignacKSWCF23}, we find that node and edge marginal distributions $\mathbf{m}_f$ and $\mathbf{m}_e$ are good choices as the reference distributions. Concretely, an empirical estimation of $\mathbf{m}_f$ and $\mathbf{m}_e$ is to count the number of node/edge types and normalize them. The following proposition shows how to design the rate matrices to guide the forward process to converge to uniform and marginal distributions.

\begin{proposition}
\label{prop: rate matrices for uniform and marginal dist}
    The forward processes for nodes and edges converge to uniform distributions if $\mathbf{R}_f=\mathbf{1}\mathbf{1}^{\top}-b\mathbf{I}$ and $\mathbf{R}_e=\mathbf{1}\mathbf{1}^{\top}-(a+1)\mathbf{I}$; they converge to marginal distributions $\mathbf{m}_f$ and $\mathbf{m}_e$ if $\mathbf{R}_f=\mathbf{1}\mathbf{m}_f^{\top}-\mathbf{I}$ and $\mathbf{R}_e=\mathbf{1}\mathbf{m}_e^{\top}-\mathbf{I}$. $\mathbf{1}$ is an all-one vector and $\mathbf{I}$ is an identity matrix. 
\end{proposition}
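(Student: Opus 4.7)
The plan is to treat the node case and the edge case uniformly, since both reduce to the same algebraic statement about a stochastic-matrix exponential. First I would recall from Section~\ref{sec: forward process} that the marginal at time $t$ starting from state $u$ is
\[
q_{t|0}(\,\cdot\,|u)_v \;=\; \bigl(e^{\bar\beta(t)\mathbf{R}_\ast}\bigr)_{uv},\qquad \bar\beta(t)\;:=\;\int_0^t\beta(s)\,ds,
\]
where $\mathbf{R}_\ast$ is either $\mathbf{R}_f$ or $\mathbf{R}_e$. Assuming the standard condition $\bar\beta(t)\to\infty$ as $t\to T$, it then suffices to identify $\lim_{\alpha\to\infty}e^{\alpha\mathbf{R}_\ast}$ and show each row equals the proposed reference distribution $\pi$.

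Next I would unify the four cases into the single form $\mathbf{R}_\ast=c\bigl(\mathbf{1}\pi^\top-\mathbf{I}\bigr)$: in the uniform-node case, $c=b$ and $\pi=\mathbf{1}/b$ (so $\mathbf{1}\pi^\top=\mathbf{1}\mathbf{1}^\top/b$ and $c\mathbf{1}\pi^\top=\mathbf{1}\mathbf{1}^\top$); in the marginal-node case, $c=1$ and $\pi=\mathbf{m}_f$; and symmetrically with $b$ replaced by $a+1$ and $\mathbf{m}_f$ by $\mathbf{m}_e$ for edges. The crucial observation is that because $\pi$ is a probability vector, $\pi^\top\mathbf{1}=1$, so $P:=\mathbf{1}\pi^\top$ is idempotent ($P^2=\mathbf{1}(\pi^\top\mathbf{1})\pi^\top=P$), and hence so is $\mathbf{I}-P$. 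Before proceeding, I would also sanity-check that each row of $\mathbf{R}_\ast$ sums to zero (which it does, since $\pi^\top\mathbf{1}=1$), confirming $\mathbf{R}_\ast$ is a valid CTMC rate matrix.

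With $\mathbf{R}_\ast=-c(\mathbf{I}-P)$ and $(\mathbf{I}-P)$ idempotent, the matrix exponential collapses into a finite expression: expanding the Taylor series of $e^{-c\alpha(\mathbf{I}-P)}$ and collecting the $(\mathbf{I}-P)$ terms using $(\mathbf{I}-P)^k=\mathbf{I}-P$ for $k\ge1$, I would obtain
\[
e^{\alpha\mathbf{R}_\ast}\;=\;P+e^{-c\alpha}(\mathbf{I}-P).
\]
Plugging in $\alpha=\bar\beta(t)$ and letting $t\to T$, the second term vanishes, so $e^{\bar\beta(t)\mathbf{R}_\ast}\to P=\mathbf{1}\pi^\top$, whose $(u,v)$ entry is $\pi_v$ independent of the starting state $u$. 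Specializing $\pi$ to $\mathbf{1}/b$, $\mathbf{1}/(a+1)$, $\mathbf{m}_f$, and $\mathbf{m}_e$ gives the four convergence statements in the proposition.

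I do not expect a serious obstacle here: the argument hinges entirely on the idempotence of $\mathbf{1}\pi^\top$, which in turn is just the normalization of $\pi$. The only mild subtlety is the implicit assumption that the corruption schedule satisfies $\bar\beta(T)\gg 1/c$ so that the $e^{-c\bar\beta(t)}$ term is effectively negligible at $t=T$; this is a standard and harmless choice of schedule and would be noted in passing rather than treated as a real difficulty.
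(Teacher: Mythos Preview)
Your proposal is correct and takes a genuinely different route from the paper. The paper expands $q_{t+\Delta t|t}\approx\mathbf{I}+\Delta t\,\beta(\xi)\mathbf{R}_\ast$, recognizes the resulting single-step transition as the discrete-time kernel used in D3PM-style noising (uniform or absorbing-state), and then \emph{appeals to the known limiting behavior of those discrete-time chains} to conclude convergence to the claimed reference distribution. In contrast, you compute the full matrix exponential in closed form via the idempotence of $P=\mathbf{1}\pi^\top$, obtaining $e^{\alpha\mathbf{R}_\ast}=P+e^{-c\alpha}(\mathbf{I}-P)$ and reading off the limit directly.

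Your argument is more self-contained and more precise: it does not rely on external results about discrete-time diffusion kernels, it yields an explicit rate of convergence (the $e^{-c\bar\beta(t)}$ factor), and it makes transparent the role of the schedule assumption $\bar\beta(T)\gg 1/c$ that the paper treats only implicitly via ``$T\to\infty$.'' The paper's route, on the other hand, buys a conceptual bridge to the discrete-time literature, which is useful context but not logically necessary for the proposition itself. Both are valid; yours is the cleaner proof.
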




Regarding the selection of $\beta(t)$, we follow~\cite{DBLP:conf/nips/HoJA20, DBLP:conf/iclr/0011SKKEP21, DBLP:conf/nips/CampbellBBRDD22} and set $\beta(t)=\alpha\gamma^t \log(\gamma)$ for a smooth change of the rate matrix. $\alpha$ and $\gamma$ are hyperparameters. Detailed settings are in Appendix~\ref{sec: hyperparameter settings}.

\begin{algorithm}[t]
\caption{Training of \textsc{DisCo}}
\label{alg: training}

\begin{algorithmic}[1]
\State A training graph $\mathcal{G}_0=(\{f_0^i\},\{e_0^{(i,j)}\})$ is given.
\State Sample $t\sim\mathcal{U}_{(0,T)}$; sample $\mathcal{G}_t$ based on transition probabilities $q_{t|0}(f_t=v|f_0=u)=(e^{\int_{0}^t\beta(s)\mathbf{R}_f ds})_{uv}$ and $q_{t|0}(e_t=v|e_0=u)=(e^{\int_{0}^t\beta(s)\mathbf{R}_e ds})_{uv}$, given $\mathcal{G}_0=(\{f_0^i\},\{e_0^{(i,j)}\})$.
\State Predict the clean graph $\hat{\mathcal{G}}_0=(\{\hat{f}_0^i\},\{\hat{e}_0^{(i,j)}\})\gets \Big(\{p^{\theta}_{0|t}(f^i | \mathcal{G}_t)\},
\{p^{\theta}_{0|t}(e^{(i,j)}| \mathcal{G}_t)\}\Big)$, given $\mathcal{G}_t$.
\State Compute cross-entropy loss between $\mathcal{G}_0$ and $\hat{\mathcal{G}}_0$ based on Eq.~\eqref{eq: utility loss} and update $\theta$.
\end{algorithmic}
\end{algorithm}

\subsection{Parameterization and Optimization Objective}
\label{sec: parameterization}

Next, we introduce the estimation of the reverse process from its motivation. The reverse process is essentially determined by the reverse rate matrix $\tilde{\mathbf{R}}_t$ in Eq.~\eqref{eq: reverse rate matrix}, whose computation needs $q_{0|t}(f^i_0|\mathcal{G})$ and $q_{0|t}(e^{(i,j)}_0|\mathcal{G})$, $\forall i,j$; their exact estimation is expensive because according to Bayes' rule, $p_t(\mathcal G)$ is needed, whose computation needs to enumerate all the given graphs: $p_t(\mathcal G) = \sum_{\mathcal G_0}q_{t|0}(\mathcal G|\mathcal G_0)\pi_{\texttt{data}}(\mathcal G_0)$.


Thus, we propose parameterizing the reverse transition probabilities via a neural network $\theta$ whose specific architecture is introduced in Section~\ref{sec: instantiation}. The terms $\{q_{0|t}(f^i_0|\mathcal{G})\}_{i\in\mathbb{N}^{+}_{\leq n}}$ and  $\{q_{0|t}(e^{(i,j)}_0|\mathcal{G})\}_{i,j\in\mathbb{N}^{+}_{\leq n}}$ in Eq.~\eqref{eq: reverse rate matrix} are replaced with the parameterized $\{p^{\theta}_{0|t}(f^i | \mathcal{G})\}_{i\in\mathbb{N}^{+}_{\leq n}}$ and $\{p^{\theta}_{0|t}(e^{(i,j)}| \mathcal{G})\}_{i,j\in\mathbb{N}^{+}_{\leq n}}$. Thus, a parameterized reverse rate matrix $\tilde{\mathbf{R}}_{\theta,t} (\mathcal{G}, \bar{\mathcal{G}})$ is represented as $\tilde{\mathbf{R}}_{\theta,t} (\mathcal{G}, \bar{\mathcal{G}}) = \sum_i \tilde{\mathbf{R}}^i_{\theta,t}(f^i,\bar{f}^i) + \sum_{i,j} \tilde{\mathbf{R}}^{(i,j)}_{\theta,t}(e^{(i,j)},\bar{e}^{(i,j)})$
where $\tilde{\mathbf{R}}^i_{\theta,t}(f^i,\bar{f}^i) = A_t^i \sum_{f_0^i}\frac{q_{t|0}(\bar{f}^i|f^i_0)}{q_{t|0}(f^i|f^i_0)} p^{\theta}_{0|t}(f^i_0| \mathcal{G})$, $\tilde{\mathbf{R}}^{(i,j)}_{\theta,t}(e^{(i,j)},\bar{e}^{(i,j)})=B_t^{(i,j)} \sum_{e_0^{(i,j)}}\frac{q_{t|0}(\bar{e}^{(i,j)}|e^{(i,j)}_0)}{q_{t|0}(e^{(i,j)}|e^{(i,j)}_0)} p^{\theta}_{0|t}(e^{(i,j)}_0| \mathcal{G})$, and the remaining notations are the same as Eq.~\eqref{eq: reverse rate matrix}. Note that all the terms $\{p^{\theta}_{0|t}(f^i | \mathcal{G})\}_{i\in\mathbb{N}^{+}_{\leq n}}$ and $\{p^{\theta}_{0|t}(e^{(i,j)}| \mathcal{G})\}_{i,j\in\mathbb{N}^{+}_{\leq n}}$ can be viewed together as a graph-to-graph mapping $\theta: \mathfrak G \mapsto \mathfrak G$, whose input is the noisy graph $\mathcal{G}_t$ and its output is the predicted clean graph probabilities, concretely, the node/edge type probabilities of all the nodes and edges.

Intuitively, the discrepancy between the groundtruth $\tilde{\mathbf{R}}_{t}$ (from Eq.~\eqref{eq: reverse rate matrix}) and the parametric $\tilde{\mathbf{R}}_{\theta,t}$ should be small. Theorem~\ref{thm: correctness of objective function} establishes a cross-entropy (CE)-based upper bound of such a discrepancy, where the estimated probability vectors (sum is $1$) are notated as $\hat{f}^i_0 = [p^{\theta}_{0|t}(f^i=1 | \mathcal{G}_t),\dots, p^{\theta}_{0|t}(f^i=b | \mathcal{G}_t)]^\top\in[0,1]^b$ and $\hat{e}^{(i,j)}_0=[p^{\theta}_{0|t}(e^{(i,j)}=1| \mathcal{G}_t), \dots, p^{\theta}_{0|t}(e^{(i,j)}=a+1| \mathcal{G}_t)]^\top \in[0,1]^{a+1}$.
\begin{theorem}[Approximation error] for $\mathcal{G} \neq \bar{\mathcal{G}}$
    \label{thm: correctness of objective function}
    \begin{align}
        \Big| \tilde{\mathbf{R}}_{t}(\mathcal{G},\bar{\mathcal{G}})-\tilde{\mathbf{R}}_{\theta,t}(\mathcal{G},\bar{\mathcal{G}}) \Big|^2 &\leq C_t + C^{\texttt{node}}_t \mathbb E_{\mathcal G_0} q_{t|0}(\mathcal G|\mathcal G_0)\sum_i\mathcal{L}_{\mathtt{CE}}\big(\mathtt{One\text-Hot}(f_0^i), \hat{f}_0^i\big)\nonumber \\ 
        &+ C^{\texttt{edge}}_t\mathbb E_{\mathcal G_0} q_{t|0}(\mathcal G|\mathcal G_0)\sum_{i,j}\mathcal{L}_{\mathtt{CE}}\big(\mathtt{One\text-Hot}(e_0^{(i,j)}), \hat{e}_0^{(i,j)}\big)\label{eq: upper bound of R discrepancy}
    \end{align}
    where $C_t$, $C^{\texttt{node}}_t$, and $C^{\texttt{edge}}_t$ are constants independent on $\theta$ but dependent on $t$, $\mathcal G$, and $\bar{\mathcal G}$;  $\mathtt{One\text-Hot}$ transforms $f_0^{i}$ and $e_0^{(i,j)}$ into one-hot vectors.
\end{theorem}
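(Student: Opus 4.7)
The plan is to bound the scalar discrepancy $|\tilde{\mathbf R}_t(\mathcal G,\bar{\mathcal G})-\tilde{\mathbf R}_{\theta,t}(\mathcal G,\bar{\mathcal G})|^2$ by first reducing it to a single-coordinate total-variation distance between $q_{0|t}(\cdot|\mathcal G)$ and $p^\theta_{0|t}(\cdot|\mathcal G)$, then passing from TV to KL via Pinsker's inequality, and finally rewriting KL as the claimed $\mathcal G_0$-expectation of cross-entropy using Bayes' rule. Four short steps, in order, would carry this out.

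Step 1, reduction to one dimension. Each summand in Eq.~\eqref{eq: reverse rate matrix} carries a $\delta_{\bar{\mathcal G}\setminus\cdot,\mathcal G\setminus\cdot}$ that vanishes unless $\mathcal G,\bar{\mathcal G}$ agree except on a single coordinate. Hence, for $\mathcal G\neq\bar{\mathcal G}$, either they differ on two or more coordinates (both $\tilde{\mathbf R}_t$ and $\tilde{\mathbf R}_{\theta,t}$ are zero and the bound is vacuous), or they differ on exactly one coordinate. Without loss of generality take that coordinate to be node $i^\star$ (the edge case is symmetric), so the squared discrepancy collapses to
\begin{equation*}
\bigl|\mathbf R^{i^\star}_t(\bar f^{i^\star},f^{i^\star})\bigr|^2\Bigl|\sum_{u}\tfrac{q_{t|0}(\bar f^{i^\star}|u)}{q_{t|0}(f^{i^\star}|u)}\bigl[q_{0|t}(u|\mathcal G)-p^\theta_{0|t}(u|\mathcal G)\bigr]\Bigr|^2 .
\end{equation*}

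Step 2, TV bound and Pinsker. Dominating the inner sum by $K_t\,\|q_{0|t}(\cdot|\mathcal G)-p^\theta_{0|t}(\cdot|\mathcal G)\|_1$, where $K_t:=\max_{u} q_{t|0}(\bar f^{i^\star}|u)/q_{t|0}(f^{i^\star}|u)$, and invoking Pinsker, one obtains
\begin{equation*}
\bigl|\tilde{\mathbf R}_t(\mathcal G,\bar{\mathcal G})-\tilde{\mathbf R}_{\theta,t}(\mathcal G,\bar{\mathcal G})\bigr|^2\le 2K_t^2\bigl|\mathbf R^{i^\star}_t\bigr|^2\,\mathrm{KL}\bigl(q_{0|t}(\cdot|\mathcal G)\,\|\,p^\theta_{0|t}(\cdot|\mathcal G)\bigr) .
\end{equation*}
Splitting $\mathrm{KL}$ as cross-entropy minus entropy, the entropy of $q_{0|t}(\cdot|\mathcal G)$ is $\theta$-free and is absorbed into $C_t$.

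Step 3, Bayes' rule to convert to $\mathcal G_0$-expectation. Using $q_{0|t}(u|\mathcal G)=\frac{1}{p_t(\mathcal G)}\sum_{\mathcal G_0:f_0^{i^\star}=u}q_{t|0}(\mathcal G|\mathcal G_0)\pi_{\mathtt{data}}(\mathcal G_0)$, the cross-entropy rewrites as
\begin{equation*}
\mathbb E_{u\sim q_{0|t}(\cdot|\mathcal G)}\bigl[-\log p^\theta_{0|t}(u|\mathcal G)\bigr]=\tfrac{1}{p_t(\mathcal G)}\,\mathbb E_{\mathcal G_0}\bigl[q_{t|0}(\mathcal G|\mathcal G_0)\,\mathcal L_{\mathtt{CE}}\bigl(\texttt{One-Hot}(f_0^{i^\star}),\hat f_0^{i^\star}\bigr)\bigr].
\end{equation*}
Because every CE term is nonnegative, extending the single-index $i^\star$ summand to $\sum_i$ and appending a zero-weighted $\sum_{i,j}$ edge term is a valid slackening that matches the structure of the RHS with $C^{\mathtt{node}}_t=2K_t^2|\mathbf R^{i^\star}_t|^2/p_t(\mathcal G)$ and $C^{\mathtt{edge}}_t$ defined analogously from the symmetric edge-only derivation.

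The main obstacle is the uniform control of the ratio $q_{t|0}(\bar f|u)/q_{t|0}(f|u)$: if some entry of the forward kernel is zero for $t>0$, $K_t$ blows up. This is where the closed form $q_{t|0}=\exp\!\bigl(\int_0^t\beta(s)\,ds\,\mathbf R\bigr)$ from Section~\ref{sec: forward process} and the rate matrices in Proposition~\ref{prop: rate matrices for uniform and marginal dist} are essential: they ensure strict positivity of the kernel on $(0,T)$, so $K_t$ is finite and depends only on $(t,\mathcal G,\bar{\mathcal G})$, not on $\theta$. A secondary subtlety, worth noting but not substantive, is that the LHS concerns a single entry while the RHS sums CE across all coordinates; this is loose in principle but costs nothing for the stated inequality.
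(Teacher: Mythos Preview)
Your argument is correct and rests on the same three ingredients as the paper's proof: (i) pulling out the supremum of the likelihood ratio to reduce to an $\ell_1$/TV distance, (ii) Pinsker's inequality to pass to KL, and (iii) Bayes' rule to rewrite the cross-entropy of $q_{0|t}$ against $p^\theta_{0|t}$ as an expectation over $\mathcal G_0$ weighted by $q_{t|0}(\mathcal G|\mathcal G_0)$. The main difference is that you first observe that for fixed $\mathcal G\neq\bar{\mathcal G}$ at most one of the indicators $\delta_{\bar{\mathcal G}\setminus\cdot,\mathcal G\setminus\cdot}$ survives, so the entire discrepancy collapses to a single coordinate $i^\star$ (or $(i^\star,j^\star)$); from there the passage to the full $\sum_i$ and $\sum_{i,j}$ is just nonnegativity of cross-entropy. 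The paper instead carries the sums $\sum_i A_t^i(\cdots)$ and $\sum_{i,j}B_t^{(i,j)}(\cdots)$ throughout and invokes Cauchy--Schwarz twice (once to merge $\sum_i C_i\sqrt{x_i}$ into $\sqrt{n}\sup_i C_i\sqrt{\sum_i x_i}$, and once at the end to combine the node and edge square roots). Your route is shorter and avoids both CS steps. One small trade-off: because you slacken only the $i^\star$-th cross-entropy to $\sum_i$ while keeping $C_t$ tied to $H(q_{0|t}(f_0^{i^\star}\mid\mathcal G))$ alone, your bound does not reduce to zero when $\hat f_0^i=q_{0|t}(f_0^i\mid\mathcal G)$ for all $i$, whereas the paper's constants (with $C_2=\sum_i\sum_{f_0^i}q_{0|t}\log q_{0|t}$) do; this tightness claim is stated just after the theorem but is not part of the theorem itself, and you could recover it by slackening $\mathrm{KL}_{i^\star}\le\sum_i\mathrm{KL}_i$ before splitting into cross-entropy and entropy.
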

The bound in Theorem~\ref{thm: correctness of objective function} is tight, i.e., the right-hand side of Eq.~\eqref{eq: upper bound of R discrepancy} is $0$, whenever $\hat{f}_0^i = q_{0|t}(f_0^i|\mathcal G_t), \forall i$ and $\hat{e}_0^{(i,j)} = q_{0|t}(e_0^{(i,j)}|\mathcal G_t), \forall i,j$. Guided by Theorem~\ref{thm: correctness of objective function}, we (1) take expectation of $t$ by sampling $t$ from a uniform distribution $t\sim\mathcal{U}_{(0,T)}$ and (2) simplify the right-hand side of Eq.~\eqref{eq: upper bound of R discrepancy} by using the unweighted CE loss as our training objective:
\begin{align}
    \min_\theta\   T\mathbb{E}_{t}\mathbb{E}_{\mathcal{G}_0}\mathbb{E}_{q_{t|0}(\mathcal{G}_t|\mathcal{G}_0)}\Big[\sum_i \mathcal{L}_{\mathtt{CE}}(\mathtt{One\text-Hot}(f_0^i),\hat{f}_0^i) +  \sum_{i,j} \mathcal{L}_{\mathtt{CE}}(\mathtt{One\text-Hot}(e_0^{(i,j)}),\hat{e}_0^{(i,j)})\Big] \label{eq: utility loss}
\end{align}
A step-by-step training algorithm is in Algorithm~\ref{alg: training}. Note that the above CE loss has been used in some diffusion models (e.g.,~\cite{DBLP:conf/nips/AustinJHTB21, DBLP:conf/nips/CampbellBBRDD22}) but lacks a good motivation, especially in the continuous-time setting. We motivate it based on the rate matrix discrepancy, as a unique contribution of this paper.

\subsection{Sampling Reverse Process}
\label{sec: sampling reverse process}

Given the parametric reverse rate matrix $\tilde{\mathbf{R}}_{\theta,t} (\mathcal{G}, \bar{\mathcal{G}})$, the graph generation process can be implemented by two steps: (1) sampling the reference distribution $\pi_{\texttt{ref}}$ (i.e., $\pi_f$ for nodes and $\pi_e$ for edges) and (2) numerically simulating the CTMC from time $T$ to $0$. The exact simulation of a CTMC has been studied for a long time, e.g.,~\cite{gillespie1976general,gillespie1977exact,anderson2007modified}. However, their simulation strategies only allow one transition (e.g., one edge/node type change) per step, which is highly inefficient for graphs as the number of nodes and edges is typically large; once a(n) node/edge is updated, $\tilde{\mathbf{R}}_{\theta,t}$ requires recomputation. A practical approximation is to assume $\tilde{\mathbf{R}}_{\theta,t}$ is fixed during a time interval $[t-\tau,t]$, i.e., delaying the happening of transitions in $[t-\tau,t]$ and triggering them all together at the time $t-\tau$; this strategy is also known as $\tau$-leaping~\cite{gillespie2001approximate,DBLP:conf/nips/CampbellBBRDD22,DBLP:conf/iclr/SunYDSD23}, and \textsc{DisCo} adopts it.

We elaborate on $\tau$-leaping for transitions of node types; the transitions of edge types are similar. The rate matrix of the $i$-th node is fixed as $\tilde{\mathbf{R}}^i_{\theta,t}(f^i,\bar{f}^i) = \mathbf{R}_t^i(\bar{f}^i,f^i) \sum_{f_0^i}\frac{q_{t|0}(\bar{f}^i|f^i_0)}{q_{t|0}(f^i|f^i_0)} p^{\theta}_{0|t}(f^i| \mathcal{G}_t)$, during $[t-\tau,t]$. According to the definition of rate matrix, in $[t-\tau,t]$, the number of transitions from $f^i$ to $\bar{f}^i$, namely $J_{f^i,\bar{f}^i}$, follows the Poisson distribution, i.e., $J_{f^i,\bar{f}^i}\sim\textrm{Poisson}(\tau \tilde{\mathbf{R}}^i_{\theta,t}(f^i,\bar{f}^i))$. For categorical data (e.g., node type), multiple transitions in $[t-\tau,t]$ are invalid and meaningless. In other words, for the $i$-th node, if the total number of transitions $\sum_{\bar{f}^i}J_{f^i,\bar{f}^i}>1$, $f^i$ keeps unchanged in $[t-\tau,t]$; otherwise, if $\sum_{\bar{f}^i}J_{f^i,\bar{f}^i}=1$ and $J_{f^i,s}=1$, i.e., there is exact $1$ transition, $f^i$ jumps to $s$. A step-by-step sampling algorithm (Algorithm~\ref{alg: tau leaping}) is in Appendix.

\begin{remark}
\label{remark: tau-leaping error}
     The sampling error of $\tau$-leaping is linear to $C_\mathtt{err}$~\cite{DBLP:conf/nips/CampbellBBRDD22}, the approximation error of the reverse rates: $\sum_{\mathcal{G}\neq \bar{\mathcal{G}}} \Big| \tilde{\mathbf{R}}_{t}(\mathcal{G},\bar{\mathcal{G}})-\tilde{\mathbf{R}}_{\theta,t}(\mathcal{G},\bar{\mathcal{G}}) \Big|\leq C_\mathtt{err}$. Interested readers are referred to Theorem 1 from~\cite{DBLP:conf/nips/CampbellBBRDD22}. Our Theorem~\ref{thm: correctness of objective function} shows the connection between our training loss and $C_\mathtt{err}$, which further verifies the correctness of our training loss.
\end{remark}

\subsection{Model Instantiation}
\label{sec: instantiation}
As mentioned in Section~\ref{sec: parameterization}, the parametric backbone $p^{\theta}_{0|t}(\mathcal{G}_0 | \mathcal{G}_t)$ is a graph-to-graph mapping whose input is the noisy graph $\mathcal{G}_t$ and its output is the predicted denoised graph $\mathcal{G}_0$. There exists a broad range of neural network architectures. Notably, DiGress~\cite{DBLP:conf/iclr/VignacKSWCF23} uses a graph Transformer (GT) as $p^{\theta}_{0|t}$, a decent reference for our continuous-time framework. We name our model with the GT backbone as \textsc{DisCo}-GT and its detailed configuration is in Appendix~\ref{sec: hyperparameter settings}. The main advantage of the GT is its long-range interaction thanks to the complete self-attention graph; however, the architecture is very complex and includes multi-head self-attention modules, leading to expensive computation.


Beyond GTs, in this paper, we posit that a regular message-passing neural network (MPNN)~\cite{DBLP:conf/icml/GilmerSRVD17} should be a promising choice for $p^{\theta}_{0|t}(\mathcal{G}_0 | \mathcal{G}_t)$. It is recognized that the MPNNs' expressiveness might not be as good as GTs'~\cite{DBLP:conf/nips/KimNMCLLH22, DBLP:conf/icml/CaiHY023}, e.g., in terms of long-range interactions. However, in our setting, the absence of an edge is viewed as a special type of edge and the whole graph is complete; therefore, such a limitation of MPNN is naturally mitigated, which is verified by our empirical evaluations.

Concretely, an MPNN-based graph-to-graph mapping is presented as follows, and \textsc{DisCo} with MPNN backbone is named \textsc{DisCo}-MPNN. Given a graph $\mathcal{G}=(\mathbf{E},\mathbf{F})$, where $\mathbf{E}\in\{1,\dots, a,a+1\}^{n\times n}$, $\mathbf{F}\in\{1,\dots, b\}^{n}$, we first transform both the matrix $\mathbf{E}$ and $\mathbf{F}$ into one-hot embeddings $\mathbf{E}_{\texttt{OH}}\in\{0,1\}^{n\times n\times (a+1)}$ and $\mathbf{F}_{\texttt{OH}}\in\{0,1\}^{n\times b}$. 
Then, some auxiliary features (e.g., the \# of specific motifs) are extracted: $\mathbf{F}_{\texttt{aux}}, \mathbf{y}_{\texttt{aux}}=\mathtt{Aux}(\mathbf{E}_{\texttt{OH}})$ to overcome the expressiveness limitation of MPNNs~\cite{DBLP:conf/nips/Chen0VB20}. Here $\mathbf{F}_{\texttt{aux}}$ and $\mathbf{y}_{\texttt{aux}}$ are the node and global auxiliary features, respectively. Note that a similar auxiliary feature engineering is also applied in DiGress~\cite{DBLP:conf/iclr/VignacKSWCF23}. More details about the $\mathtt{Aux}$ can be found in Appendix~\ref{sec: auxiliary features}. Then, three multi-layer perceptrons (MLPs) are used to map node features $\mathbf{F}_{\texttt{OH}}\oplus\mathbf{F}_{\texttt{aux}}$, edge features $\mathbf{E}_{\texttt{OH}}$, and global features $\mathbf{y}_{\texttt{aux}}$ into a common hidden space as $\mathbf{F}_{\texttt{hidden}} = \mathtt{MLP}(\mathbf{F}_{\texttt{OH}}\oplus\mathbf{F}_{\texttt{aux}})$, $\mathbf{E}_{\texttt{hidden}} = \mathtt{MLP}(\mathbf{E}_{\texttt{OH}})$, $\mathbf{y}_{\texttt{hidden}} = \mathtt{MLP}(\mathbf{y}_{\texttt{aux}})$, where $\oplus$ is a concatenation operator. The following formulas present the update of node embeddings (e.g., $\mathbf{r}^i=\mathbf{F}(i,:)$), edge embedding (e.g., $\mathbf{r}^{(i,j)}=\mathbf{E}(i,j,:)$), and global embedding $\mathbf{y}$ in an MPNN layer, where we omit the subscript $\texttt{hidden}$ if it does not cause ambiguity:
\begin{align}
    \mathbf{r}^i &\gets \mathtt{FiLM}\bigg(\mathtt{FiLM}\bigg(\mathbf{r}^i, \mathtt{MLP}\bigg(\sum_{j=1}^n \mathbf{r}^{(j,i)}/n\bigg)\bigg), \mathbf y\bigg),\ \ \mathbf{r}^{(i,j)} \gets \mathtt{FiLM} \big( \mathtt{FiLM} (\mathbf{r}^{(i,j)}, \mathbf{r}^{i} \odot \mathbf{r}^{j}), \mathbf{y}\big)\label{eq: update edge},\\
    \mathbf{y} &\gets \mathbf{y} + \mathtt{PNA}\big(\{\mathbf{r}^{i}\}_{i=1}^{n}\big) + \mathtt{PNA}\big(\{\mathbf{r}^{(i,j)}\}_{i,j=1}^{n}\big)\label{eq: update global}.
\end{align}
The edge embeddings are aggregated by mean pooling (i.e., $\sum_{j=1}^n \mathbf{r}^{(j,i)}/n$); the node pair embeddings are passed to edges by Hadamard product (i.e., $\mathbf{r}^{i} \odot \mathbf{r}^{j}$); edge/node embeddings are merged to the global embedding $\mathbf y$ via the PNA module~\cite{DBLP:conf/nips/CorsoCBLV20}; Some FiLM modules~\cite{DBLP:conf/aaai/PerezSVDC18} are used for the interaction between node/edge/global embeddings. More details about the PNA and FiLM are in Appendix~\ref{sec: auxiliary features}. In this paper, we name Eqs.~\eqref{eq: update edge} and~\eqref{eq: update global} on all nodes/edges together as an MPNN layer, $\mathbf{F},\mathbf{E},\mathbf{y}\gets\mathtt{MPNN}(\mathbf{F},\mathbf{E},\mathbf{y})$. Stacking multiple MPNN layers leads to larger model capacity. Finally, two readout MLPs are used to project the node/edge embeddings into input dimensions, $\mathtt{MLP}(\mathbf{F})\in\mathbb{R}^{n
\times b}$ and $\mathtt{MLP}(\mathbf{E})\in\mathbb{R}^{n
\times n\times (a+1)}$, which are output after wrapped with $\mathtt{softmax}$.

Both the proposed MPNN and the GT from DiGress~\cite{DBLP:conf/iclr/VignacKSWCF23} use the PNA and FiLM to merge embeddings, but MPNN does not have multi-head self-attention layers so that the computation overhead is lower.

\subsection{Permutation Equivariance and Invariance}
\label{sec: permutation properties}

Reordering the nodes keeps the property of a given graph, which is known as permutation invariance. In addition, for a given function if its input is permuted and its output is permuted accordingly, such a behavior is known as permutation equivariance. In this subsection, we analyze permutation-equivariance/invariance of the (1) diffusion framework (Lemmas~\ref{lemma: permutation equivalent layer},~\ref{lemma: permutation-invariant rate}, and~\ref{lemma: permutation-invariant transition}), (2) sampling density (Theorem~\ref{thm: permutation-invariant density}), and (3) training loss (Theorem~\ref{thm: permutation-invariant training loss}).

\begin{lemma} [Permutation-equivariant layer]
\label{lemma: permutation equivalent layer}
    The proposed MPNN layer (Eqs.~\eqref{eq: update edge} and~\eqref{eq: update global}) is permutation-equivariant.
\end{lemma}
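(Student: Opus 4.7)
The plan is to fix an arbitrary node permutation $\pi$ and verify that each of the three updates in Eqs.~\eqref{eq: update node}--\eqref{eq: update global} transforms covariantly under $\pi$: a node slot $\mathbf r^i$ gets relabeled to $\mathbf r^{\pi^{-1}(i)}$, an edge slot $\mathbf r^{(i,j)}$ gets relabeled to $\mathbf r^{(\pi^{-1}(i),\pi^{-1}(j))}$, and the global vector $\mathbf y$ is invariant. I would first observe that every $\mathtt{MLP}$ and $\mathtt{FiLM}$ block shares parameters across slots and acts on one slot at a time (with $\mathbf y$ as a shared side input); these blocks therefore commute with any relabeling and preserve whatever equivariance/invariance they inherit from their inputs. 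Consequently, only the genuinely non-pointwise ingredients — the neighborhood mean in Eq.~\eqref{eq: update node}, the Hadamard product of endpoint embeddings in Eq.~\eqref{eq: update edge}, and the two $\mathtt{PNA}$ aggregators in Eq.~\eqref{eq: update global} — need to be inspected.

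For the node update, the only cross-slot term is $\tfrac{1}{n}\sum_{j=1}^n \mathbf r^{(j,i)}$; after permutation this becomes $\tfrac{1}{n}\sum_{j=1}^n \mathbf r^{(\pi^{-1}(j),\pi^{-1}(i))}$, and the change of summation index $k=\pi^{-1}(j)$ turns it into $\tfrac{1}{n}\sum_{k=1}^n \mathbf r^{(k,\pi^{-1}(i))}$, which is exactly the original aggregate at slot $\pi^{-1}(i)$. For the edge update, the only cross-slot term is $\mathbf r^i \odot \mathbf r^j$, which at slot $(i,j)$ after permutation equals $\mathbf r^{\pi^{-1}(i)}\odot \mathbf r^{\pi^{-1}(j)}$, matching the pre-permutation value at slot $(\pi^{-1}(i),\pi^{-1}(j))$. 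Wrapping each in the pointwise $\mathtt{FiLM}$/$\mathtt{MLP}$ layers then yields node- and edge-equivariance.

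For the global update, $\mathtt{PNA}$ composes symmetric aggregators (mean/max/min/std) with degree-dependent scalers, so $\mathtt{PNA}(\{\mathbf r^{\pi^{-1}(i)}\}_{i=1}^n)=\mathtt{PNA}(\{\mathbf r^i\}_{i=1}^n)$ and likewise for the edge multiset; the $\mathbf y$ term carries no node index and is trivially unchanged. Combining the three cases yields the claimed permutation equivariance of the whole layer. The only real obstacle is index bookkeeping: once one is careful that every dummy summation can be relabeled without altering any shared parameter, the argument reduces to the standard facts that pointwise maps are equivariant and symmetric aggregators are invariant.
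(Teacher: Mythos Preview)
Your proposal is correct and follows essentially the same route as the paper's proof: fix a permutation, handle the node, edge, and global updates separately, and reduce each case to the invariance of the symmetric aggregators (mean, $\mathtt{PNA}$) together with the slot-wise, parameter-shared nature of $\mathtt{MLP}$ and $\mathtt{FiLM}$. One small inaccuracy: in this paper $\mathtt{PNA}$ is implemented purely as $\mathtt{MLP}(\min\oplus\max\oplus\mathrm{mean}\oplus\mathrm{std})$ without the degree-dependent scalers you mention, but this does not affect your argument since the aggregators are still symmetric.
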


The auxiliary features from the $\mathtt{Aux}$ are also permutation-equivariant (see Appendix~\ref{sec: auxiliary features}). Thus, the whole MPNN-based backbone $p^{\theta}_{0|t}$ is permutation-equivariant. Note that the GT-based backbone from DiGress~\cite{DBLP:conf/iclr/VignacKSWCF23} is also permutation-equivariant whose proof is omitted as it is not our contribution. Next, we show the permutation invariance of the rate matrices.

\begin{lemma}[Permutation-invariant rate matrices]
\label{lemma: permutation-invariant rate}
    The forward rate matrix of \textsc{DisCo} is permutation-invariant if it is factorized as Eq.~\eqref{eq: forward rate matrix}. The parametric reverse rate matrix of \textsc{DisCo} ($\tilde{\mathbf{R}}_{\theta,t}$) is permutation-invariant whenever the graph-to-graph backbone $p^{\theta}_{0|t}$ is permutation-equivariant.
\end{lemma}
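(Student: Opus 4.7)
Let $\pi$ denote an arbitrary permutation acting on the $n$ node indices, and let $\pi\mathcal{G}$ denote the graph obtained from $\mathcal{G}=(\mathbf{E},\mathbf{F})$ by reindexing: $(\pi\mathbf{F})^i = f^{\pi^{-1}(i)}$ and $(\pi\mathbf{E})^{(i,j)} = e^{(\pi^{-1}(i),\pi^{-1}(j))}$. The goal is to establish $\mathbf{R}_t(\pi\bar{\mathcal{G}},\pi\mathcal{G}) = \mathbf{R}_t(\bar{\mathcal{G}},\mathcal{G})$ and, under the hypothesis on $p^\theta_{0|t}$, the analogous identity for $\tilde{\mathbf{R}}_{\theta,t}$. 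The plan is to expand each side using the factorizations in Eq.~\eqref{eq: forward rate matrix} and Eq.~\eqref{eq: parameterized reverse rate matrix}, then relabel the summation indices via $j=\pi^{-1}(i)$ (and $(j,k)=(\pi^{-1}(i),\pi^{-1}(\ell))$ for edges) to recover the unpermuted expression.

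\textbf{Forward rate matrix.} First I would substitute $(\pi\bar{\mathcal{G}},\pi\mathcal{G})$ into Eq.~\eqref{eq: forward rate matrix}. The node-term $A_t^i$ evaluated at the permuted pair equals $\mathbf{R}^i_t\bigl((\pi\bar f)^i,(\pi f)^i\bigr)\,\delta_{\pi\bar{\mathcal{G}}\setminus (\pi\bar f)^i,\;\pi\mathcal{G}\setminus (\pi f)^i}$. Two key observations make the relabeling work: (i) the section~\ref{sec: forward process} convention $\mathbf{R}^i_t=\beta(t)\mathbf{R}_f$ is independent of $i$, so $\mathbf{R}^i_t$ does not see the index; (ii) the indicator $\delta$ tests structural identity of two graphs outside one node, a property preserved under simultaneous relabeling of both graphs, so $\delta_{\pi\bar{\mathcal{G}}\setminus (\pi\bar f)^i,\pi\mathcal{G}\setminus (\pi f)^i} = \delta_{\bar{\mathcal{G}}\setminus \bar f^{\pi^{-1}(i)},\mathcal{G}\setminus f^{\pi^{-1}(i)}}$. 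Combining these and substituting $j=\pi^{-1}(i)$ turns $\sum_i A_t^i$ at the permuted pair into $\sum_j A_t^j$ at the original pair. The same argument applies to $\sum_{i,\ell} B_t^{(i,\ell)}$ via the change of variables $(j,k)=(\pi^{-1}(i),\pi^{-1}(\ell))$ and the shared edge rate matrix $\mathbf{R}^{(i,\ell)}_t=\beta(t)\mathbf{R}_e$.

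\textbf{Parametric reverse rate matrix.} Next I would attack Eq.~\eqref{eq: parameterized reverse rate matrix}, which adds two kinds of factors to each term of the forward rate: the ratio $q_{t|0}(\bar f^i|f_0^i)/q_{t|0}(f^i|f_0^i)$ and the predicted clean-node probability $p^\theta_{0|t}(f_0^i\mid\mathcal G)$. The $q_{t|0}$ ratio is, under the factorized forward process of Section~\ref{sec: forward process}, a function only of the symbolic values $(\bar f^i,f^i,f_0^i)$ with no dependence on the index $i$, hence it behaves exactly like $\mathbf{R}_f$ under the $i\mapsto\pi^{-1}(i)$ relabeling. The nontrivial piece is the backbone factor: by the permutation-equivariance hypothesis on $p^\theta_{0|t}$, the predicted marginal at node $i$ of the permuted input equals the predicted marginal at node $\pi^{-1}(i)$ of the original input, so after the substitution $j=\pi^{-1}(i)$ and relabeling the dummy summation variable $f_0^i\leftrightarrow f_0^j$, the whole reverse-node sum returns to $\sum_j \tilde{\mathbf{R}}^j_{\theta,t}(f^j,\bar f^j)$. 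The edge sum is handled identically using equivariance of the edge outputs of $p^\theta_{0|t}$.

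\textbf{Main obstacle.} The conceptual content is light; the work is almost entirely careful bookkeeping. The part most prone to error is tracking which objects depend on the \emph{index} $i$ (only $\delta$, which transforms predictably) versus only on the \emph{values} $f^i,\bar f^i,f_0^i$ (the shared $\mathbf{R}_f$, $\mathbf{R}_e$, and $q_{t|0}$, which are unaffected). I would therefore state the argument by exhibiting a single node-summand bijection under $j=\pi^{-1}(i)$ and a single edge-summand bijection under $(j,k)=(\pi^{-1}(i),\pi^{-1}(\ell))$, then invoke the equivariance of $p^\theta_{0|t}$ once to close the reverse case; this isolates the only substantive hypothesis and keeps the relabeling transparent.
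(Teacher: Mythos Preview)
Your proposal is correct and follows essentially the same approach as the paper's proof: reindex each summand in Eqs.~\eqref{eq: forward rate matrix} and~\eqref{eq: parameterized reverse rate matrix} via the bijection $i\mapsto\pi^{-1}(i)$ (the paper uses $\sigma$), invoke the sharedness of $\mathbf{R}_f,\mathbf{R}_e$ and the forward transition kernels across indices, and appeal to permutation-equivariance of $p^\theta_{0|t}$ for the reverse case. Your write-up is in fact more explicit than the paper's---which for the forward case says only ``the summation is permutation-invariant''---particularly in isolating which factors depend on the index versus only on values, and in correctly flagging that the shared-rate assumption from Section~\ref{sec: forward process} is what makes the reindexing close.
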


\begin{lemma} [Permutation-invariant transition probability]
\label{lemma: permutation-invariant transition}
    For CTMC satisfying the Kolmogorov forward equation (Eq.~\eqref{eq: kolmogorov forward}), if the rate matrix is permutation-invariant (i.e., $\mathbf R_t(\mathbf x_i,\mathbf x_j) = \mathbf R_t(\mathcal P (\mathbf x_i),\mathcal P (\mathbf x_j))$, the transition probability is permutation-invariant (i.e., $q_{t|s}(\mathbf{x}_t|\mathbf{x}_s) = q_{t|s}(\mathcal P(\mathbf{x}_t)|\mathcal P(\mathbf{x}_s))$, where $\mathcal P$ is a permutation.
\end{lemma}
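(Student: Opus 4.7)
The plan is to apply a uniqueness-of-solutions argument to the Kolmogorov forward equation. Concretely, I define the ``permuted'' transition $\tilde q_{t|s}(\mathbf x_t|\mathbf x_s) := q_{t|s}(\mathcal P(\mathbf x_t)|\mathcal P(\mathbf x_s))$ and aim to show that $\tilde q$ satisfies the exact same forward equation with the exact same initial condition as $q$. Invoking uniqueness, I conclude $\tilde q = q$, which is precisely the desired permutation invariance.

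First I would verify the initial condition at $t = s$. Since $q_{s|s}(\mathbf x_t|\mathbf x_s) = \delta_{\mathbf x_t,\mathbf x_s}$ and $\mathcal P$ is a bijection on $\mathcal X$, we have
\begin{equation*}
\tilde q_{s|s}(\mathbf x_t|\mathbf x_s) = q_{s|s}(\mathcal P(\mathbf x_t)|\mathcal P(\mathbf x_s)) = \delta_{\mathcal P(\mathbf x_t),\mathcal P(\mathbf x_s)} = \delta_{\mathbf x_t,\mathbf x_s},
\end{equation*}
so $\tilde q$ and $q$ share the same initial data. Next I would differentiate $\tilde q$ with respect to $t$ and apply the Kolmogorov forward equation~\eqref{eq: kolmogorov forward} to $q_{t|s}(\mathcal P(\mathbf x_t)|\mathcal P(\mathbf x_s))$, then perform a change of summation variable $\xi = \mathcal P(\eta)$ (valid because $\mathcal P$ is a bijection on the finite state space $\mathcal X$) and invoke the hypothesis $\mathbf R_t(\mathcal P(\eta),\mathcal P(\mathbf x_t)) = \mathbf R_t(\eta,\mathbf x_t)$. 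Schematically,
\begin{align*}
\frac{d}{dt}\tilde q_{t|s}(\mathbf x_t|\mathbf x_s) &= \sum_{\xi\in\mathcal X} q_{t|s}(\xi|\mathcal P(\mathbf x_s))\,\mathbf R_t(\xi,\mathcal P(\mathbf x_t))\\
&= \sum_{\eta\in\mathcal X} q_{t|s}(\mathcal P(\eta)|\mathcal P(\mathbf x_s))\,\mathbf R_t(\mathcal P(\eta),\mathcal P(\mathbf x_t))\\
&= \sum_{\eta\in\mathcal X} \tilde q_{t|s}(\eta|\mathbf x_s)\,\mathbf R_t(\eta,\mathbf x_t),
\end{align*}
which is exactly Eq.~\eqref{eq: kolmogorov forward} with $\tilde q$ in place of $q$.

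Finally I would conclude by uniqueness: Eq.~\eqref{eq: kolmogorov forward} is a finite-dimensional first-order linear ODE system (since $|\mathcal X|<\infty$) whose right-hand side is Lipschitz in the solution with continuous time-dependence through $\mathbf R_t$; the Picard--Lindel\"of theorem then gives a unique solution for any fixed initial condition $q_{s|s}(\cdot|\mathbf x_s)$. Since $q$ and $\tilde q$ solve the same initial-value problem, $\tilde q \equiv q$, yielding $q_{t|s}(\mathbf x_t|\mathbf x_s) = q_{t|s}(\mathcal P(\mathbf x_t)|\mathcal P(\mathbf x_s))$ as claimed.

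The only subtlety worth flagging is the change-of-variable step in the summation: one must be precise that $\mathcal P$ acts as a bijection on the whole state space $\mathcal X$ (not merely on indices of a single graph), so that summing over $\xi\in\mathcal X$ is equivalent to summing over $\eta = \mathcal P^{-1}(\xi)\in\mathcal X$. Once this is clean, the rest is a direct ODE-uniqueness argument and should not present any real obstacle.
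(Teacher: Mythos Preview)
Your proof is correct and follows essentially the same approach as the paper: show that the permuted transition satisfies the same Kolmogorov forward equation with the same rate matrix, then invoke uniqueness of solutions. Your version is in fact more carefully written, since you explicitly verify the initial condition at $t=s$ and name the Picard--Lindel\"of theorem, whereas the paper simply appeals to the fact that the rate matrix determines the CTMC.
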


Based on Lemmas~\ref{lemma: permutation-invariant rate} and~\ref{lemma: permutation-invariant transition}, \textsc{DisCo}'s parametric reverse transition probability is permutation-invariant. The next theorem shows the permutation-invariance of the sampling probability.
\begin{theorem}[Permutation-invariant sampling probability]
\label{thm: permutation-invariant density}
    If both the reference distribution $\pi_{\texttt{ref}}$ and the reverse transition probability are permutation-invariant, the parametric sampling distribution $p^{\theta}_0(\mathcal{G}_0)$ is permutation-invariant.
\end{theorem}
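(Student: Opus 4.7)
The plan is to realize the sampling distribution as the marginalization of $\pi_{\texttt{ref}}$ through the reverse transition, and then move the permutation across the sum via a bijective change of summation variable. Concretely, by the construction of the reverse process, $p^{\theta}_0$ is the time-$0$ marginal of a CTMC started from $\pi_{\texttt{ref}}$ at time $T$ and evolved with rate $\tilde{\mathbf{R}}_{\theta,t}$, so
\begin{align*}
p^{\theta}_0(\mathcal{G}_0) = \sum_{\mathcal{G}_T \in \mathcal{X}} p^{\theta}_{0|T}(\mathcal{G}_0 \mid \mathcal{G}_T)\, \pi_{\texttt{ref}}(\mathcal{G}_T).
\end{align*}
(For the $\tau$-leaping sampler, the same identity holds after replacing $p^{\theta}_{0|T}$ by the composed product of one-step transitions $\prod_k p^{\theta}_{(k-1)\tau|k\tau}$ and marginalizing the intermediate $\mathcal{G}_{k\tau}$'s, which does not change the argument that follows.)

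Next, I would substitute $\mathcal{P}(\mathcal{G}_0)$ for $\mathcal{G}_0$ and reindex the sum using the bijection $\mathcal{G}_T = \mathcal{P}(\mathcal{G}_T')$ of $\mathcal{X}$:
\begin{align*}
p^{\theta}_0(\mathcal{P}(\mathcal{G}_0))
&= \sum_{\mathcal{G}_T \in \mathcal{X}} p^{\theta}_{0|T}(\mathcal{P}(\mathcal{G}_0) \mid \mathcal{G}_T)\, \pi_{\texttt{ref}}(\mathcal{G}_T) \\
&= \sum_{\mathcal{G}_T' \in \mathcal{X}} p^{\theta}_{0|T}(\mathcal{P}(\mathcal{G}_0) \mid \mathcal{P}(\mathcal{G}_T'))\, \pi_{\texttt{ref}}(\mathcal{P}(\mathcal{G}_T')).
\end{align*}
Applying the hypothesized invariance of $\pi_{\texttt{ref}}$ and of the reverse transition probability (for the $\tau$-leaping variant, invariance of each factor $p^{\theta}_{(k-1)\tau|k\tau}$ propagates to their product by a common-permutation argument), both $\mathcal{P}$'s strip away simultaneously from the conditional and the reference, leaving $p^{\theta}_0(\mathcal{G}_0)$. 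This completes the argument.

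I expect the calculation itself to be routine; the only real subtlety is the interpretation of the hypothesis "the reverse transition probability is permutation-invariant". In the CTMC formulation this is supplied by Lemmas~\ref{lemma: permutation-invariant rate} and~\ref{lemma: permutation-invariant transition}, but in the $\tau$-leaping sampler used in practice, one is composing many invariant one-step kernels, and I would have to note explicitly that invariance under a \emph{single} shared permutation $\mathcal{P}$ (rather than independent permutations at each time level) is what propagates through the composition and through the bijective reindexing above. Once that bookkeeping is settled, the statement reduces to the two-line calculation shown.
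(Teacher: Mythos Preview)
Your proposal is correct and follows essentially the same route as the paper: express $p^{\theta}_0$ as the marginal of $\pi_{\texttt{ref}}$ through the reverse transition (first for a single time-homogeneous kernel, then for the $\tau$-leaping composition), reindex the sum by the bijection $\mathcal{G}_T\mapsto\mathcal{P}(\mathcal{G}_T)$, and apply the assumed invariance of both factors. Your remark about a single shared permutation propagating through the composed one-step kernels is exactly the bookkeeping the paper also invokes for the $\tau$-leaping case.
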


In addition, the next theorem shows the permutation invariance of the training loss.

\begin{theorem}[Permutation-invariant training loss]
\label{thm: permutation-invariant training loss}
    The proposed training loss Eq.~\eqref{eq: utility loss} is invariant to any permutation of the input graph $\mathcal{G}_0$ if $p^{\theta}_{0|t}$ is permutation-equivariant.
\end{theorem}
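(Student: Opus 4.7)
The plan is to fix an arbitrary permutation $\mathcal{P}$ of the $n$ nodes and show that, for every fixed $t$ and every training graph $\mathcal{G}_0$, the inner loss
$$L_t(\mathcal{G}_0) = \mathbb{E}_{q_{t|0}(\mathcal{G}_t|\mathcal{G}_0)}\bigg[\sum_i \mathcal{L}_{\mathtt{CE}}(\mathtt{One\text-Hot}(f_0^i),\hat{f}_0^i(\mathcal{G}_t)) + \sum_{i,j} \mathcal{L}_{\mathtt{CE}}(\mathtt{One\text-Hot}(e_0^{(i,j)}),\hat{e}_0^{(i,j)}(\mathcal{G}_t))\bigg]$$
satisfies $L_t(\mathcal{P}(\mathcal{G}_0)) = L_t(\mathcal{G}_0)$. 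Taking the outer expectations over $t\sim\mathcal{U}_{(0,T)}$ and $\mathcal{G}_0$ then immediately yields invariance of the full objective in Eq.~\eqref{eq: utility loss}.

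The first step is a change of variable inside the expectation. Because the factorized forward rate matrix (Eq.~\eqref{eq: forward rate matrix}) is permutation-invariant by Lemma~\ref{lemma: permutation-invariant rate}, Lemma~\ref{lemma: permutation-invariant transition} gives $q_{t|0}(\mathcal{P}(\mathcal{G}_t)\mid \mathcal{P}(\mathcal{G}_0)) = q_{t|0}(\mathcal{G}_t\mid \mathcal{G}_0)$. Substituting $\mathcal{G}_t \mapsto \mathcal{P}(\mathcal{G}_t)$ in the definition of $L_t(\mathcal{P}(\mathcal{G}_0))$ — which is a bijective relabeling of the summation index and hence preserves the sum — rewrites the expectation over $q_{t|0}(\cdot\mid \mathcal{P}(\mathcal{G}_0))$ as an expectation over $q_{t|0}(\cdot\mid \mathcal{G}_0)$ applied to the integrand evaluated at $\mathcal{P}(\mathcal{G}_t)$.

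The second step invokes permutation-equivariance of the backbone $p^{\theta}_{0|t}$: for any node index $i$ and edge index $(i,j)$,
$$\hat{f}_0^i(\mathcal{P}(\mathcal{G}_t)) = \hat{f}_0^{\mathcal{P}^{-1}(i)}(\mathcal{G}_t), \qquad \hat{e}_0^{(i,j)}(\mathcal{P}(\mathcal{G}_t)) = \hat{e}_0^{(\mathcal{P}^{-1}(i),\mathcal{P}^{-1}(j))}(\mathcal{G}_t).$$
The labels drawn from $\mathcal{P}(\mathcal{G}_0)$ obey the matching identities $\mathcal{P}(f_0)^i = f_0^{\mathcal{P}^{-1}(i)}$ and $\mathcal{P}(e_0)^{(i,j)} = e_0^{(\mathcal{P}^{-1}(i),\mathcal{P}^{-1}(j))}$. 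Thus each summand of the integrand is exactly the original CE term evaluated at the permuted index, and reindexing the sums by $k = \mathcal{P}^{-1}(i)$ and $(k,\ell) = (\mathcal{P}^{-1}(i),\mathcal{P}^{-1}(j))$ (which is a bijection on $\{1,\dots,n\}$ and $\{1,\dots,n\}^2$ respectively) restores the original expression. This gives $L_t(\mathcal{P}(\mathcal{G}_0)) = L_t(\mathcal{G}_0)$, completing the argument.

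I do not expect any deep obstacle here; the proof is a careful bookkeeping of two facts established earlier in the paper — namely, permutation-invariance of the forward transition probability (Lemmas~\ref{lemma: permutation-invariant rate} and~\ref{lemma: permutation-invariant transition}) and permutation-equivariance of the backbone network. The only subtlety worth flagging is the distinction between permuting \emph{entries} of the graph tuple and permuting the \emph{summation indices} of the CE loss, and making sure the two instances of $\mathcal{P}^{-1}$ (on the prediction and on the label) align so they can be jointly eliminated by a bijective reindexing. That the outer sums over nodes and edges are unweighted is precisely what makes the reindexing exact, which is why the unweighted CE loss motivated in Theorem~\ref{thm: correctness of objective function} is crucial for this invariance to hold.
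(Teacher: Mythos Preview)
Your proposal is correct and follows essentially the same approach as the paper's own proof: both arguments rely on the permutation invariance of the forward transition probability (Lemmas~\ref{lemma: permutation-invariant rate} and~\ref{lemma: permutation-invariant transition}) to perform a bijective change of variable in the expectation over $\mathcal{G}_t$, and then use the permutation equivariance of $p^{\theta}_{0|t}$ together with a bijective reindexing of the node/edge sums to recover the original loss. Your treatment is in fact slightly more explicit than the paper's in tracking the $\mathcal{P}^{-1}$ index correspondence, and your closing remark about the unweighted sums enabling exact reindexing is a nice observation that the paper leaves implicit.
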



\section{Experiments}
\label{sec: experiments}
This section includes: an effectiveness evaluation on plain graphs  (Section~\ref{sec: plain graph experiments}) and molecule graphs (Section~\ref{sec: molecule graph experiments}), an efficiency study (Section~\ref{sec: efficiency study}), and an ablation study (Section~\ref{sec: ablation study}). Detailed settings (Sections~\ref{sec: hardware and implmentations}-\ref{sec: hyperparameter settings}), additional effectiveness evaluation (Sections~\ref{sec: plain graph results appendix}, additional ablation study (Section~\ref{sec: ablation study appendix}), convergence study (Section~\ref{sec: convergence study}), and visualization (Section~\ref{sec: visualization}) are in Appendix. Our code is released~\footnote{\url{https://github.com/pricexu/DisCo}}.


\subsection{Plain Graph Generation}
\label{sec: plain graph experiments}
\textbf{Datasets and metrics.}
Datasets SBM, Planar~\cite{DBLP:conf/icml/MartinkusLPW22}, and Community~\cite{DBLP:conf/icml/YouYRHL18} are used. The relative squared Maximum Mean Discrepancy (MMD) for degree distributions (Deg.), clustering coefficient distributions (Clus.), and orbit counts (Orb.) distributions (the number of occurrences of substructures with $4$ nodes), Uniqueness(\%), Novelty(\%), and Validity(\%) are chosen as metrics. Details about the datasets, metrics, baselines (Section~\ref{sec: detailed settings on plain data}), and results on Community (Table~\ref{tab: effectiveness comparison on community}) are in Appendix.


\textbf{Results. }Table~\ref{tab: effectiveness comparison on plain graphs} shows the effectiveness evaluation on SBD and Planar from which we observe:
\begin{itemize}[topsep=0pt,noitemsep,leftmargin=*]
    \item \textsc{DisCo}-GT can obtain competitive performance against the SOTA, DiGress, which is reasonable because both models share the graph Transformer backbone. Note that DiGress's performance in terms of Validity is not the statistics reported in the paper but from their latest model checkpoint~\footnote{\url{https://github.com/cvignac/DiGress/blob/main/README.md}}. In fact, we found it very hard for DiGress and \textsc{DisCo}-GT to learn to generate valid SBM/Planar graphs. These two datasets have only $200$ graphs, but sometimes only after $>10,000$ epochs training, the Validity percentage can be $>50\%$. Additionally, \textsc{DisCo}-GT provides extra flexibility during sampling by adjusting the $\tau$. This is important: our models can still trade-off between the sampling efficiency and quality even after the model is trained and frozen. 
    \item In general, \textsc{DisCo}-MPNN has competitive performance against \textsc{DisCo}-GT in terms of Deg., Clus., and Orb. However, its performance is worse compared to \textsc{DisCo}-GT in terms of Validity, which might be related to the different model expressiveness. Studying the graph-to-graph model expressiveness would be an interesting future direction, e.g., generating valid Planar graphs.
\end{itemize}

\begin{table}[t!]
\centering
\caption{Performance (mean$\pm$std) on SBM and Planar datasets.}
\label{tab: effectiveness comparison on plain graphs}
\resizebox{0.96\textwidth}{!}{%
\begin{tabular}{llllllll}
\toprule
Dataset & Model & Deg.$\downarrow$ & Clus.$\downarrow$ & Orb.$\downarrow$ & Unique $\uparrow$ & Novel $\uparrow$ & Valid $\uparrow$ \\\midrule
\multirow{9}{*}{SBM} & GraphRNN~\cite{DBLP:conf/icml/YouYRHL18}  & 6.9 & 1.7 & 3.1 & \textbf{100.0}   & \textbf{100.0}  & 5.0\\
& GRAN~\cite{DBLP:conf/nips/LiaoLSWHDUZ19}  & 14.1 & 1.7 & 2.1 & \textbf{100.0}   & \textbf{100.0}  & 25.0\\
& GG-GAN~\cite{krawczuk2020gg}    & 4.4 & 2.1 & 2.3 & \textbf{100.0}   & \textbf{100.0}  & 0.0\\
& MolGAN~\cite{DBLP:journals/corr/abs-1805-11973}    & 29.4 & 3.5 & 2.8 & 95.0    & \textbf{100.0}  & 10.0\\
& SPECTRE~\cite{DBLP:conf/icml/MartinkusLPW22}   & 1.9 & 1.6 & \textbf{1.6} & \textbf{100.0}   & \textbf{100.0}  & 52.5\\
& ConGress~\cite{DBLP:conf/iclr/VignacKSWCF23}  & 34.1 & 3.1 & 4.5 & 0.0 & 0.0 & 0.0\\
& DiGress~\cite{DBLP:conf/iclr/VignacKSWCF23}   & 1.6 & 1.5 & 1.7 & \textbf{100.0}   & \textbf{100.0}  & \textbf{67.5}\\
& \cellcolor{Gray}\textsc{DisCo}-MPNN & \cellcolor{Gray}1.8\scriptsize$\pm$0.2 & \cellcolor{Gray}\textbf{0.8\scriptsize$\pm$0.1} & \cellcolor{Gray}2.7\scriptsize$\pm$0.4 & \cellcolor{Gray}\textbf{100.0\scriptsize$\pm$0.0}  & \cellcolor{Gray}\textbf{100.0\scriptsize$\pm$0.0} & \cellcolor{Gray}41.9\scriptsize$\pm$2.2  \\
& \cellcolor{Gray}\textsc{DisCo}-GT   & \cellcolor{Gray}\textbf{0.8\scriptsize$\pm$0.2} & \cellcolor{Gray}\textbf{0.8\scriptsize$\pm$0.4} & \cellcolor{Gray}2.0\scriptsize$\pm$0.5 & \cellcolor{Gray}\textbf{100.0\scriptsize$\pm$0.0}  & \cellcolor{Gray}\textbf{100.0\scriptsize$\pm$0.0} & \cellcolor{Gray}66.2\scriptsize$\pm$1.4  \\\midrule
\multirow{9}{*}{Planar} & GraphRNN~\cite{DBLP:conf/icml/YouYRHL18}  & 24.5 & 9.0 & 2508.0 & \textbf{100.0}  & \textbf{100.0}  & 0.0\\
& GRAN~\cite{DBLP:conf/nips/LiaoLSWHDUZ19}      & 3.5 & 1.4 & 1.8 & 85.0   & 2.5    & \textbf{97.5}\\
& GG-GAN~\cite{krawczuk2020gg}    & 315.0 & 8.3 & 2062.6 & \textbf{100.0}  & \textbf{100.0}  & 0.0\\
& MolGAN~\cite{DBLP:journals/corr/abs-1805-11973}    & 4.5 & 10.2 & 2346.0 & 25.0   & \textbf{100.0}  & 0.0\\
& SPECTRE~\cite{DBLP:conf/icml/MartinkusLPW22}   & 2.5 & 2.5 & 2.4 & \textbf{100.0}  & \textbf{100.0}  & 25.0\\
& ConGress~\cite{DBLP:conf/iclr/VignacKSWCF23}  & 23.8 & 8.8 & 2590.0 & 0.0 & 0.0 & 0.0\\
& DiGress~\cite{DBLP:conf/iclr/VignacKSWCF23}   & 1.4 & \textbf{1.2} & \textbf{1.7} & \textbf{100.0}  & \textbf{100.0}  & 85.0\\
& \cellcolor{Gray}\textsc{DisCo}-MPNN & \cellcolor{Gray}1.4\scriptsize$\pm$0.3 & \cellcolor{Gray}1.4\scriptsize$\pm$0.4 & \cellcolor{Gray}6.4\scriptsize$\pm$1.6 & \cellcolor{Gray}\textbf{100.0\scriptsize$\pm$0.0} & \cellcolor{Gray}\textbf{100.0\scriptsize$\pm$0.0} & \cellcolor{Gray}33.8\scriptsize$\pm$2.7 \\
& \cellcolor{Gray}\textsc{DisCo}-GT   & \cellcolor{Gray}\textbf{1.2}\scriptsize$\pm$0.5 & \cellcolor{Gray}1.3\scriptsize$\pm$0.5 & \cellcolor{Gray}\textbf{1.7\scriptsize$\pm$0.7} & \cellcolor{Gray}\textbf{100.0\scriptsize$\pm$0.0}  & \cellcolor{Gray}\textbf{100.0\scriptsize$\pm$0.0} & \cellcolor{Gray}83.6\scriptsize$\pm$2.1 \\\bottomrule
\end{tabular}
}
\end{table}

\begin{table}[t!]
\centering
\caption{Performance (mean$\pm$std\%) on QM9 dataset. V., U., and N. mean Valid, Unique, and Novel.}
\label{tab: effectiveness comparison on qm9}
\begin{tabular}{llll}
\toprule
Model       & Valid $\uparrow$ & V.U. $\uparrow$ & V.U.N. $\uparrow$ \\\midrule
CharacterVAE~\cite{DBLP:journals/corr/Gomez-Bombarelli16} & 10.3             & 7.0             & 6.3               \\
GrammarVAE\cite{DBLP:journals/corr/KusnerH16}   & 60.2             & 5.6             & 4.5               \\
GraphVAE~\cite{DBLP:conf/icann/SimonovskyK18}     & 55.7             & 42.0            & 26.1              \\
GT-VAE~\cite{DBLP:journals/corr/abs-2104-04345}       & 74.6             & 16.8            & 15.8              \\
Set2GraphVAE~\cite{DBLP:conf/iclr/VignacF22} & 59.9             & 56.2            & -                 \\
GG-GAN~\cite{krawczuk2020gg}       & 51.2             & 24.4            & 24.4              \\
MolGAN~\cite{DBLP:journals/corr/abs-1805-11973}       & 98.1             & 10.2            & 9.6               \\
SPECTRE~\cite{DBLP:conf/icml/MartinkusLPW22}      & 87.3             & 31.2            & 29.1              \\
GraphNVP~\cite{DBLP:journals/corr/abs-1905-11600}     & 83.1             & 82.4            & -                 \\
GDSS~\cite{DBLP:conf/icml/JoLH22}         & 95.7             & 94.3            & -                 \\
EDGE~\cite{chen2023efficient} & 99.1 & \textbf{99.1} & - \\
ConGress~\cite{DBLP:conf/iclr/VignacKSWCF23}     & 98.9             & 95.7            & 38.3              \\
DiGress~\cite{DBLP:conf/iclr/VignacKSWCF23}      & 99.0    & 95.2            & 31.8              \\
GRAPHARM~\cite{DBLP:conf/icml/KongCSZPZ23}     & 90.3             & 86.3            & -                 \\
\rowcolor{Gray}\textsc{DisCo}-MPNN    & 98.9\scriptsize$\pm$0.7 & 98.7\scriptsize$\pm$0.5 & \textbf{68.7\scriptsize$\pm$0.2}     \\
\rowcolor{Gray}\textsc{DisCo}-GT      & \textbf{99.3\scriptsize$\pm$0.6} & 98.9\scriptsize$\pm$0.6 & 56.2\scriptsize$\pm$0.4              \\
\bottomrule 
\end{tabular}
\end{table}

\subsection{Molecule Graph Generation}
\label{sec: molecule graph experiments}

\textbf{Dataset and metrics.}
The datasets QM9~\cite{ramakrishnan2014quantum}, MOSES~\cite{DBLP:journals/corr/abs-1811-12823}, and GuacaMol~\cite{DBLP:journals/jcisd/BrownFSV19} are chosen. For MOSES, metrics including Uniquess, Novelty, Validity, Filter, FCD, SNN, and Scaf are reported in Table~\ref{tab: effectiveness comparison on MOSES}. For QM9, metrics include Uniqueness, Novelty, and Validity. For GuacaMol, metrics include Valid, Unique, Novel, KL div, and FCD. Details about the datasets, metrics, and baseline methods are in Appendix~\ref{sec: detailed settings on molecule data}.


\begin{table}[t]
\centering
\caption{Performance on MOSES. VAE, JT-VAE, and GraphINVENT have hard-coded rules to ensure high validity.}
\label{tab: effectiveness comparison on MOSES}
\begin{tabular}{llllllll}
\toprule
Model & Valid $\uparrow$ & Unique $\uparrow$ & Novel $\uparrow$ & Filters $\uparrow$ & FCD $\downarrow$ & SNN $\uparrow$ & Scaf $\uparrow$ \\\midrule
VAE~\cite{gomez2018automatic} & 97.7 & 98.8 & 69.5 & 99.7 & 0.57 & 0.58 & 5.9 \\
JT-VAE~\cite{DBLP:conf/icml/JinBJ18} & 100.0 & 100.0 & 99.9 & 97.8 & 1.00 & 0.53 & 10.0 \\
GraphINVENT~\cite{mercado2021graph} & 96.4 & 99.8 & N/A & 95.0 & 1.22 & 0.54 & 12.7 \\
ConGress~\cite{DBLP:conf/iclr/VignacKSWCF23} & 83.4 & 99.9 & 96.4 & 94.8 & 1.48 & 0.50 & 16.4 \\
DiGress~\cite{DBLP:conf/iclr/VignacKSWCF23} & 85.7 & 100.0 & 95.0 & 97.1 & 1.19 & 0.52 & 14.8 \\
\rowcolor{Gray}\textsc{DisCo}-MPNN & 83.9 & 100.0 & 98.8 & 87.3 & 1.63 & 0.48 & 13.5 \\
\rowcolor{Gray}\textsc{DisCo}-GT & 88.3 & 100.0 & 97.7 & 95.6 & 1.44 & 0.50 & 15.1 \\\bottomrule
\end{tabular}
\end{table}

\begin{table}[t]
\centering
\caption{Performance on GuacaMol. LSTM, NAGVAE, and MCTS are tailored for molecule datasets; ConGress, DiGress, and \textsc{DisCo} are general graph generation models.}
\label{tab: effectiveness comparison on GuacaMol}
\begin{tabular}{llllll}
\toprule
Model & Valid $\uparrow$ & Unique $\uparrow$ & Novel $\uparrow$ & KL div $\uparrow$ & FCD $\uparrow$ \\\midrule
LSTM~\cite{segler2018generating} & 95.9 & 100.0 & 91.2 & 99.1 & 91.3 \\
NAGVAE~\cite{kwon2020compressed} & 92.9 & 95.5 & 100.0 & 38.4 & 0.9 \\
MCTS~\cite{jensen2019graph} & 100.0 & 100.0 & 95.4 & 82.2 & 1.5 \\
ConGress~\cite{DBLP:conf/iclr/VignacKSWCF23} & 0.1 & 100.0 & 100.0 & 36.1 & 0.0 \\
DiGress~\cite{DBLP:conf/iclr/VignacKSWCF23} & 85.2 & 100.0 & 99.9 & 92.9 & 68.0 \\
\rowcolor{Gray}\textsc{DisCo}-MPNN & 68.7 & 100.0 & 96.4 & 77.0 & 36.4 \\
\rowcolor{Gray}\textsc{DisCo}-GT & 86.6 & 100.0 & 99.9 & 92.6 & 59.7 \\
\bottomrule
\end{tabular}
\end{table}

\textbf{Results. }
Table~\ref{tab: effectiveness comparison on qm9} shows the performance on QM9 dataset. Our observation is consistent with the performance comparison on plain datasets: (1) \textsc{DisCo}-GT obtains slightly better or at least competitive performance against DiGress due to the shared graph-to-graph backbone, but our framework offers extra flexibility in the sampling process; (2) \textsc{DisCo}-MPNN obtains decent performance in terms of Validity, Uniqueness, and Novelty comparing with \textsc{DisCo}-GT.

Tables~\ref{tab: effectiveness comparison on MOSES} and~\ref{tab: effectiveness comparison on GuacaMol} show the performance on MOSES and GuacaMol which further verifies that (1) performance of \textsc{DisCo}-GT is on par with the SOTA general graph generative models, DiGress and (2) \textsc{DisCo}-MPNN has decent performance, but worse than \textsc{DisCo}-GT and DiGress.


\subsection{Efficiency Study}
\label{sec: efficiency study}

\begin{wraptable}{r}{0.45\textwidth}
\centering
\vspace{-5mm}
\caption{Efficiency comparison in terms of number of parameters, forward and backpropagation time (second/iteration).}
\label{tab: efficiency study}
\begin{tabular}{lll}
\toprule
         & GT         & MPNN      \\\midrule
\# Parameters & $14\times 10^6$ & $7\times 10^6$ \\
Forward        & $0.065$      & $0.022$     \\
Backprop.  & $0.034$      & $0.018$    
\\\bottomrule
\end{tabular}
\end{wraptable}

A major computation bottleneck is the graph-to-graph backbone $p_{0|t}^{\theta}$, which is GT or MPNN. We compare the number of parameters, the forward and back-propagation time of GT and MPNN in Table~\ref{tab: efficiency study}. For a fair comparison, we set all the hidden dimensions of GT and MPNN as $256$ and the number of layers as $5$. We use the Community~\cite{DBLP:conf/icml/YouYRHL18} dataset and set the batch size as $64$. Table~\ref{tab: efficiency study} shows that GT has a larger capacity and more parameters at the expense of more expensive training.


\begin{wraptable}{r}{0.61\textwidth}
\centering
\vspace{-7mm}
\caption{Ablation study (mean$\pm$std\%) with GT backbone. V., U., and N. mean Valid, Unique, and Novel.}
\label{tab: ablation study GT}
\begin{tabular}{lllll}
\toprule
Ref. Dist. & Steps & Valid $\uparrow$ & V.U. $\uparrow$ & V.U.N. $\uparrow$ \\\midrule
\multirow{6}{*}{Marginal} & 500 & 99.3\scriptsize$\pm$0.6 & 98.9\scriptsize$\pm$0.6 & 56.2\scriptsize$\pm$0.4 \\
& 100 & 98.7\scriptsize$\pm$0.5 & 98.5\scriptsize$\pm$0.4 & 58.8\scriptsize$\pm$0.4 \\
& 30 & 97.9\scriptsize$\pm$1.2 & 97.6\scriptsize$\pm$1.1 & 59.2\scriptsize$\pm$0.8 \\
& 10 & 95.3\scriptsize$\pm$1.9 & 94.8\scriptsize$\pm$1.6 & 62.1\scriptsize$\pm$0.9 \\
& 5 & 93.0\scriptsize$\pm$1.7 & 92.4\scriptsize$\pm$1.3 & 64.9\scriptsize$\pm$1.1 \\
& 1 & 76.1\scriptsize$\pm$2.3 & 73.9\scriptsize$\pm$1.6 & 62.9\scriptsize$\pm$1.8 \\\midrule
\multirow{6}{*}{Uniform} & 500 & 94.1\scriptsize$\pm$0.9 & 92.9\scriptsize$\pm$0.5 & 56.6\scriptsize$\pm$0.4 \\
& 100 & 91.5\scriptsize$\pm$1.0 & 90.3\scriptsize$\pm$0.9 & 54.4\scriptsize$\pm$1.2 \\
& 30 & 88.7\scriptsize$\pm$1.6 & 86.9\scriptsize$\pm$1.0 & 58.6\scriptsize$\pm$2.1 \\
& 10 & 84.5\scriptsize$\pm$2.3 & 80.4\scriptsize$\pm$1.7 & 59.8\scriptsize$\pm$1.8 \\
& 5 & 77.0\scriptsize$\pm$2.5 & 69.9\scriptsize$\pm$1.5 & 56.1\scriptsize$\pm$3.5 \\
& 1 & 44.9\scriptsize$\pm$3.1 & 35.1\scriptsize$\pm$3.4 & 29.6\scriptsize$\pm$2.5 \\\bottomrule
\end{tabular}
\end{wraptable}

\eject
\subsection{Ablation Study}
\label{sec: ablation study}

An ablation study on \textsc{DisCo}-GT for reference distributions (marginal vs. uniform), and sampling steps ($1$ to $500$) is presented in Table~\ref{tab: ablation study GT}. The number of sampling steps is $\texttt{round}(\frac{1}{\tau})$ if $T=1$. QM9 dataset is chosen. A similar ablation study on \textsc{DisCo}-MPNN is in Table~\ref{tab: ablation study MPNN} in Appendix. We observe that first, generally, the fewer sampling steps, the lower the generation quality. In some cases (e.g., the marginal distribution) with the sampling steps decreasing significantly (e.g., from $500$ to $30$), the performance degradation is still very slight, implying our method's high robustness in sampling steps. Second, the marginal reference distribution is better than the uniform distribution, consistent with the observation from DiGress~\cite{DBLP:conf/iclr/VignacKSWCF23}.

\section{Related Work}
Diffusion models~\cite{DBLP:journals/corr/abs-2209-00796} can be interpreted from both the score-matching~\cite{DBLP:conf/nips/SongE19,DBLP:conf/iclr/0011SKKEP21} or the variational autoencoder perspective~\cite{DBLP:conf/nips/HoJA20,DBLP:journals/corr/abs-2107-00630,DBLP:conf/nips/KingmaG23}. Pioneering efforts on diffusion generative modeling study the process in continuous-state~\cite{DBLP:conf/icml/Sohl-DicksteinW15, DBLP:conf/nips/HoJA20, DBLP:conf/iclr/SongME21} whose typical reference distribution is Gaussian. Beyond that, some efforts propose discrete-state models~\cite{DBLP:conf/nips/HoogeboomNJFW21} to . E.g., D3PM~\cite{DBLP:conf/nips/AustinJHTB21} designs the discrete diffusion process by multiplication of transition matrices; $\tau$-LDR~\cite{DBLP:conf/nips/CampbellBBRDD22} generalizes D3PM by formulating a continuous-time Markov chain; ~\cite{DBLP:conf/iclr/SunYDSD23} proposes a singleton conditional distribution-based objective for the continuous-time Markov chain-based model whose rationale is Brook's Lemma~\cite{brook1964distinction,DBLP:conf/uai/Lyu09}.

Diffusion models are widely used in graph generation tasks~\cite{DBLP:conf/ijcai/LiuFLLLLTL23, DBLP:journals/sigkdd/DingXTL22, DBLP:conf/kdd/ZhouZ0H20,DBLP:journals/fdata/ZhouZXH19,DBLP:conf/icde/Zheng0TXZH24,DBLP:conf/wsdm/FuXTH23,DBLP:conf/icml/ZengQ00YW0HT24,DBLP:journals/fdata/FuH22,DBLP:conf/kdd/QiuWYPZT23,DBLP:conf/kdd/WangYQZGMT23,DBLP:conf/icml/0002QZYZ0ZWHT24,DBLP:conf/www/XuDT22,DBLP:conf/kdd/XuCPCDYT23,DBLP:conf/kdd/QiBH23,DBLP:conf/www/BanQH24} such as molecule design~\cite{DBLP:conf/icml/ShiLXT21,DBLP:conf/icml/HoogeboomSVW22,DBLP:conf/aaai/HuangZXW23,DBLP:journals/corr/abs-2211-11214}. Pioneering works such as EDP-GNN~\cite{DBLP:conf/aistats/NiuSSZGE20} and GDSS~\cite{DBLP:conf/icml/JoLH22} diffuse graph data in a continuous state space~\cite{DBLP:conf/icdm/HuangS0FL22}. DiscDDPM~\cite{DBLP:journals/corr/abs-2210-01549} is an early effort to modify the DDPM architecture into a discrete state. In addition, DiGress~\cite{DBLP:conf/iclr/VignacKSWCF23} is also a one-shot discrete-state diffusion model, followed by a very recent work MCD~\cite{liu2024inverse}, both in the discrete-time setting. Beyond the above-mentioned efforts, DruM~\cite{jo2023graph} proposes to mix the diffusion process. EDGE~\cite{chen2023efficient} proposes an interesting process: diffusing graphs into empty graphs. Besides, GRAPHARM~\cite{DBLP:conf/icml/KongCSZPZ23} proposes an autoregressive graph diffusion model, and ~\cite{liu2024data} applies the diffusion models for molecule property prediction tasks. In addition to the above-mentioned general graph diffusion models, there are many other task-tailored graph diffusion generative models~\cite{DBLP:conf/nips/LuoSXT21,DBLP:conf/nips/JingCCBJ22,DBLP:conf/icml/LeeJH23,DBLP:journals/corr/abs-2209-15171,DBLP:conf/iclr/XuY0SE022, xu2023geometric,bao2022equivariant,weiss2023guided}, which incorporate more in-depth domain expertise into the model design. Interested readers are referred to this survey~\cite{DBLP:conf/ijcai/LiuFLLLLTL23}.
\section{Conclusion}

This paper introduces the first discrete-state continuous-time graph diffusion generative model, \textsc{DisCo}. Our model effectively marries continuous-time Markov Chain formulation with the discrete nature of graph data, addressing the fundamental sampling limitation of prior models. \textsc{DisCo}'s training objective is concise with a solid theoretical foundation. We also propose a simplified message-passing architecture to serve as the graph-to-graph backbone, which theoretically has desirable properties against permutation of node ordering and empirically demonstrates decent performance against existing graph generative models in tests on various datasets.

\clearpage

\begin{ack}
\thanks{ZX, RQ, ZZ, and HT are partially supported by NSF (2324770). 
The content of the information in this document does not necessarily reflect the position or the policy of the Government, and no official endorsement should be inferred.  The U.S. Government is authorized to reproduce and distribute reprints for Government purposes notwithstanding any copyright notation here on.
}
\end{ack}

\bibliography{ref}
\bibliographystyle{plain}


\vfill\eject
\appendix

\textbf{\huge Appendix}

\vspace{3mm}
The organization of this appendix is as follows
\begin{itemize}
    \item Section~\ref{sec: rate factorization appendix}: the derivation of the factorized rate matrices.
    \item Section~\ref{sec: forward appendix}: the forward transition probability matrix given a rate matrix.
    \item Section~\ref{sec: proofs appendix}: all the proofs.
    \begin{itemize}
        \item Section~\ref{sec: proof of rate matrices}: proof of Proposition~\ref{prop: rate matrices for uniform and marginal dist}
        \item Section~\ref{sec: proof of objective function}: proof of Theorem~\ref{thm: correctness of objective function}
        \item Section~\ref{sec: proof of permutation layer}: proof of Lemma~\ref{lemma: permutation equivalent layer}
        \item Section~\ref{sec: proof of permutation-invariant rate}: proof of Lemma~\ref{lemma: permutation-invariant rate}
        \item Section~\ref{sec: proof of permutation-invariant transition}: proof of Lemma~\ref{lemma: permutation-invariant transition}
        \item Section~\ref{sec: proof of permutation density}: proof of Theorem~\ref{thm: permutation-invariant density}
        \item Section~\ref{sec: proof of permutation loss}: proof of Theorem~\ref{thm: permutation-invariant training loss}
    \end{itemize}
    \item Section~\ref{sec: algorithms appendix}: a step-by-step sampling algorithm.
    \item Section~\ref{sec: auxiliary features}: the auxiliary features and neural modules used by \textsc{DisCo}.
    \item Section~\ref{sec: experiments appendix}: detailed experimental settings and additional experimental results.
    \begin{itemize}
        \item Section~\ref{sec: hardware and implmentations}: hardware and software
        \item Section~\ref{sec: dataset settings appendix}: dataset setup
        \item Section~\ref{sec: hyperparameter settings}: hyperparameter settings
        \item Section~\ref{sec: plain graph results appendix}: additional effectiveness evaluation on Community Dataset
        \item Section~\ref{sec: ablation study appendix}: additional ablation study with the MPNN backbone
        \item Section~\ref{sec: convergence study}: convergence study
        \item Section~\ref{sec: visualization}: visualization
    \end{itemize}
    \item Section~\ref{sec: limitation}: this paper's limitations and future work.
    \item Section~\ref{sec: broader impact}: the broad impact of this paper.
\end{itemize}

\section{Details of the Factorization of Rate Matrices}
\label{sec: rate factorization appendix}
In this section, we detail the derivation of Remark~\ref{prop: rate factorization}, which is extended from the following Proposition 3 of~\cite{DBLP:conf/nips/CampbellBBRDD22}.
\begin{proposition}[Factorization of the rate matrix, Proposition 3 from~\cite{DBLP:conf/nips/CampbellBBRDD22}]If the forward process factorizes as $q_{t|s}(\mathbf x_t|\mathbf x_s)=\prod_{d=1}^D q_{t|s}(x_t^d|x_s^d), t>s$, then the forward and reverse rates are of the form
\begin{align}
    \mathbf R_t(\bar{\mathbf x}, \mathbf x) &= \sum_{d=1}^D \mathbf R_t^d (\bar{x}^d, x^d)\delta_{\bar{\mathbf{x}}\setminus \bar{x}^d, \mathbf{x}\setminus x^d}\\
    \tilde{\mathbf R}_t(\mathbf x, \bar{\mathbf x}) &= \sum_{d=1}^D \mathbf R_t^d (\bar{x}^d, x^d)\delta_{\bar{\mathbf{x}}\setminus \bar{x}^d, \mathbf{x}\setminus x^d} \sum_{x_0^d}q_{0|t}(x_0^d|\mathbf x)\frac{q_{t|0}(\bar{x}^d|x_0^d)}{q_{t|0}(x^d|x_0^d)}
\end{align}
where $\delta_{\bar{\mathbf{x}}\setminus \bar{x}^d, \mathbf{x}\setminus x^d}=1$ when all dimensions except for $d$ are equal.
\end{proposition}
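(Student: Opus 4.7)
The plan is to establish both claims directly from the infinitesimal definition of the rate matrix together with the factorization hypothesis; the reverse rate is then obtained from the standard identity $\tilde{\mathbf{R}}_t(\mathbf{x},\bar{\mathbf{x}}) = \frac{q_t(\bar{\mathbf{x}})}{q_t(\mathbf{x})}\mathbf{R}_t(\bar{\mathbf{x}},\mathbf{x})$ already recalled earlier in the excerpt. Since the Kolmogorov forward equation gives the identification $q_{t+\Delta t|t}(\mathbf{x}|\bar{\mathbf{x}}) = \delta_{\mathbf{x},\bar{\mathbf{x}}} + \mathbf{R}_t(\bar{\mathbf{x}},\mathbf{x})\Delta t + o(\Delta t)$, and analogously for each dimension-wise transition $q_{t+\Delta t|t}(x^d|\bar{x}^d) = \delta_{x^d,\bar{x}^d} + \mathbf{R}_t^d(\bar{x}^d,x^d)\Delta t + o(\Delta t)$, the first step is to multiply out the dimension-wise identification using the factorization hypothesis.

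For the forward claim, I would fix $\mathbf{x}\neq\bar{\mathbf{x}}$ and expand $\prod_{d=1}^D\bigl(\delta_{x^d,\bar{x}^d} + \mathbf{R}_t^d(\bar{x}^d,x^d)\Delta t + o(\Delta t)\bigr)$. If $\mathbf{x}$ and $\bar{\mathbf{x}}$ differ in two or more coordinates, every surviving term contains at least $(\Delta t)^2$, so the rate is zero; if they differ in exactly one coordinate $d^*$, then all $d\neq d^*$ factors collapse to $\delta_{x^d,\bar{x}^d}=1$ (one cannot pick the $\mathbf{R}_t^d\Delta t$ term there without incurring an extra $\Delta t$), and the $d^*$ factor supplies $\mathbf{R}_t^{d^*}(\bar{x}^{d^*},x^{d^*})\Delta t$. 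Reading off the coefficient of $\Delta t$ in $q_{t+\Delta t|t}(\mathbf{x}|\bar{\mathbf{x}})$ and packaging the ``one coordinate differs'' condition into the Kronecker factor $\delta_{\bar{\mathbf{x}}\setminus\bar{x}^d,\mathbf{x}\setminus x^d}$ produces the stated sum.

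For the reverse claim, the key algebraic observation is that, under the factorized forward kernel, if $\mathbf{x}$ and $\bar{\mathbf{x}}$ agree on all coordinates except $d$, the ratio $q_t(\bar{\mathbf{x}})/q_t(\mathbf{x})$ admits a clean posterior expression. Writing
\begin{align*}
q_t(\bar{\mathbf{x}}) &= \sum_{\mathbf{x}_0}\pi_{\texttt{data}}(\mathbf{x}_0)\,q_{t|0}(\bar{x}^d|x_0^d)\prod_{d'\neq d}q_{t|0}(x^{d'}|x_0^{d'})\\
&= \sum_{\mathbf{x}_0}\pi_{\texttt{data}}(\mathbf{x}_0)\,q_{t|0}(\mathbf{x}|\mathbf{x}_0)\,\frac{q_{t|0}(\bar{x}^d|x_0^d)}{q_{t|0}(x^d|x_0^d)},
\end{align*}
dividing by $q_t(\mathbf{x})$ and recognizing $\pi_{\texttt{data}}(\mathbf{x}_0)q_{t|0}(\mathbf{x}|\mathbf{x}_0)/q_t(\mathbf{x}) = q_{0|t}(\mathbf{x}_0|\mathbf{x})$, then marginalizing all coordinates of $\mathbf{x}_0$ except $x_0^d$ (noting the integrand depends only on $x_0^d$) yields $q_t(\bar{\mathbf{x}})/q_t(\mathbf{x}) = \sum_{x_0^d} q_{0|t}(x_0^d|\mathbf{x})\,q_{t|0}(\bar{x}^d|x_0^d)/q_{t|0}(x^d|x_0^d)$. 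Multiplying this by the already-derived forward rate produces the reverse formula; for pairs $\mathbf{x},\bar{\mathbf{x}}$ differing in more than one coordinate, the vanishing of $\mathbf{R}_t$ automatically makes $\tilde{\mathbf{R}}_t$ vanish as well, consistent with the indicator.

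I expect the main obstacle to be bookkeeping rather than conceptual difficulty: precisely justifying that the cross-terms in the product expansion are $o(\Delta t)$ uniformly (so that ``rate'' is well-defined), and cleanly rewriting the marginalization step so that $q_{0|t}(x_0^d|\mathbf{x})$ appears without residual dependence on other coordinates of $\mathbf{x}_0$. Once those two points are made rigorous, both identities follow by straightforward substitution, and the proof does not require any new machinery beyond the definitions stated in Section~\ref{sec: CTMC diffusion models}.
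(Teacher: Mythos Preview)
Your proposal is a correct and standard derivation of the factorized rate matrices, but there is no proof in this paper to compare it against. The paper merely \emph{quotes} this proposition verbatim from Campbell et al.~\cite{DBLP:conf/nips/CampbellBBRDD22} in Appendix~\ref{sec: rate factorization appendix} and then states: ``As all the nodes and edges are categorical, applying the above proposition of all the nodes and edges leads to our Remark~\ref{prop: rate factorization}.'' In other words, the paper treats the proposition as an external black box and only performs the trivial specialization to the node/edge setting; it does not re-derive either the forward or the reverse rate formula.

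For what it is worth, your argument is essentially the proof one finds in Campbell et al.: the forward rate follows from expanding $\prod_d(\delta + \mathbf{R}_t^d\Delta t + o(\Delta t))$ and reading off the $\Delta t$-coefficient, and the reverse rate follows from combining $\tilde{\mathbf{R}}_t(\mathbf{x},\bar{\mathbf{x}}) = \frac{q_t(\bar{\mathbf{x}})}{q_t(\mathbf{x})}\mathbf{R}_t(\bar{\mathbf{x}},\mathbf{x})$ with the Bayes/marginalization identity you wrote for $q_t(\bar{\mathbf{x}})/q_t(\mathbf{x})$ when the two states differ in a single coordinate. Both steps in your plan are sound, and the bookkeeping concerns you flag (uniform $o(\Delta t)$ control, collapsing the $\mathbf{x}_0$-sum to an $x_0^d$-sum) are exactly the right places to be careful. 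So your proposal goes strictly beyond what the present paper provides.
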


As all the nodes and edges are categorical, applying the above proposition of all the nodes and edges leads to our Remark~\ref{prop: rate factorization}.

\section{Details of Forward Transition Probability}
\label{sec: forward appendix}



In this section, we present the derivation of the forward transition probability for nodes; the forward process for edges can be derived similarly. Note that this derivation has been mentioned in ~\cite{DBLP:conf/nips/CampbellBBRDD22} for generic discrete cases; we graft it to the graph settings and include it here for completeness. The core derivation of the forward transition probability is to prove the following proposition.

\begin{proposition}[Analytical forward process for commutable rate matrices, Proposition 10 from~\cite{DBLP:conf/nips/CampbellBBRDD22}]
    \label{prop: analytical forward process}
    if $\mathbf{R}_t$ and $\mathbf{R}_{t'}$ commute $\forall t, t'$, $q_{t|0}(x_t=j|x_0=i)=(e^{\int_{0}^t\mathbf{R}_s ds})_{ij}$
\end{proposition}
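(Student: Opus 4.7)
The plan is to recast the Kolmogorov forward equation (Eq.~\eqref{eq: kolmogorov forward}) as a linear matrix ODE and then verify that the candidate matrix exponential solves it, invoking uniqueness. Concretely, let $P(t)$ denote the $C \times C$ transition matrix with entries $P(t)_{ij} = q_{t|0}(x_t = j \mid x_0 = i)$. Summing the forward equation over the appropriate indices and collecting into matrix form, the equation becomes
\begin{align}
    \frac{d}{dt} P(t) &= P(t)\, \mathbf{R}_t, \qquad P(0) = \mathbf{I},
\end{align}
since the initial transition is the identity (no time has elapsed, so $x_0 = x_0$ with probability $1$).

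Next, I would define $M(t) := \int_0^t \mathbf{R}_s\, ds$ and propose $P(t) = e^{M(t)}$ as the candidate solution. The key computation is differentiating the matrix exponential: expanding $e^{M(t)} = \sum_{n \geq 0} M(t)^n / n!$ and differentiating term-by-term gives
\begin{align}
    \frac{d}{dt} M(t)^n &= \sum_{k=0}^{n-1} M(t)^k\, \mathbf{R}_t\, M(t)^{n-1-k}.
\end{align}
Under the hypothesis that $\mathbf{R}_s$ and $\mathbf{R}_{s'}$ commute for all $s, s'$, one has $[M(t), \mathbf{R}_t] = \int_0^t [\mathbf{R}_s, \mathbf{R}_t]\, ds = 0$, so every summand collapses to $M(t)^{n-1} \mathbf{R}_t$ and we obtain $\frac{d}{dt} M(t)^n = n M(t)^{n-1} \mathbf{R}_t$. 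Summing over $n$ gives the clean identity $\frac{d}{dt} e^{M(t)} = e^{M(t)}\, \mathbf{R}_t$, which matches the matrix ODE above. Since $P(0) = e^0 = \mathbf{I}$, the initial condition also holds.

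Finally, I would appeal to the standard uniqueness result for first-order linear matrix ODEs with continuous coefficients (Picard--Lindel\"of) to conclude that $P(t) = e^{M(t)}$ is the unique solution, so $q_{t|0}(x_t = j \mid x_0 = i) = (e^{\int_0^t \mathbf{R}_s\, ds})_{ij}$. The main subtlety is the commutator computation in the differentiation step: without the commutativity assumption, one would instead need the Magnus expansion rather than a plain matrix exponential, so it is precisely the commutativity hypothesis that makes the clean closed form valid. Everything else (converting the forward equation into matrix form, checking the initial condition, invoking uniqueness) is routine.
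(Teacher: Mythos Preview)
Your proposal is correct and follows essentially the same route as the paper: write the Kolmogorov forward equation in matrix form, expand the candidate $e^{\int_0^t \mathbf{R}_s\,ds}$ as a power series, and use the commutativity hypothesis to differentiate term-by-term and recover $P(t)\mathbf{R}_t$. You are somewhat more explicit than the paper (spelling out the product-rule computation for $\frac{d}{dt}M(t)^n$, checking the initial condition, and invoking Picard--Lindel\"of for uniqueness), whereas the paper simply asserts the derivative identity and stops once the forward equation is verified; but the argument is the same.
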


\begin{proof}
    If $q_{t|0}=\exp\bigg(\int_{0}^t\mathbf{R}_s ds\bigg)$ is the forward transition probability matrix, it should satisfy the Kolmogorov forward equation $\frac{d}{dt}q_{t|0} = q_{t|0}\mathbf{R}_s$. The transition probability matrix
\begin{align}
    q_{t|0} = \sum_{k=0}^{\infty}\frac{1}{k!}\bigg(\int_{0}^t\mathbf{R}_s ds\bigg)^k,
\end{align}
and, based on the fact that $\mathbf{R}_t$ and $\mathbf{R}_t'$ commute $\forall t,t'$ , its derivative is
\begin{align}
    \frac{d}{dt}q_{t|0} = \sum_{k=1}^{\infty}\frac{1}{(k-1)!}\bigg(\int_{0}^t\mathbf{R}_sds\bigg)^{(k-1)} = q_{t|0}\mathbf{R}_t.
\end{align}
Thus, $q_{t|0}=\exp\bigg(\int_{0}^t\mathbf{R}_sds\bigg)$ is the solution of Kolmogorov forward equation.
\end{proof}

For the node $i$, if its forward rate matrix is set as $\mathbf{R}_t^i=\beta(t)\mathbf{R}_f$, we have $\mathbf{R}^i_t$ and $\mathbf{R}^i_{t'}$ commute, $\forall t,t'$. Thus, the transition probability for node $i$ is $q_{t|0}(f^i_t=v|f^i_0=u)=(e^{\int_{0}^t\beta(s)\mathbf{R}_f ds})_{uv}$. Based on similar derivation, we have the transition probability for the edge $(i,j)$ as $q_{t|0}(e^{(i,j)}_t=v|e^{(i,j)}_0=u)=(e^{\int_{0}^t\beta(s)\mathbf{R}_e ds})_{uv}$.

\section{Proofs}
\label{sec: proofs appendix}

\subsection{Proof of Proposition~\ref{prop: rate matrices for uniform and marginal dist}}
\label{sec: proof of rate matrices}
Proposition~\ref{prop: rate matrices for uniform and marginal dist} claims the forward process converges to uniform distributions if $\mathbf{R}_f=\mathbf{1}\mathbf{1}^{\top}-b\mathbf{I}$ and $\mathbf{R}_e=\mathbf{1}\mathbf{1}^{\top}-(a+1)\mathbf{I}$ and it converges to marginal distributions $\mathbf{m}_f$ and $\mathbf{m}_e$ if $\mathbf{R}_f=\mathbf{1}\mathbf{m}_f^{\top}-\mathbf{I}$ and $\mathbf{R}_e=\mathbf{1}\mathbf{m}_e^{\top}-\mathbf{I}$.

\begin{proof}

If we formulate the rate matrices for nodes and edges as $\mathbf{R}^{(i,j)}_t=\beta(t)\mathbf{R}_e,\ \forall i,j$ and $\mathbf{R}^{i}_t=\beta(t)\mathbf{R}_f,\ \forall i$, every rate matrix is commutable for any time steps $t$ and $t'$. In the following content, we show the proof for the node rate matrix $\mathbf{R}^{i}_t=\beta(t)\mathbf{R}_f$; the converged distribution of edge can be proved similarly. Based on Proposition~\ref{prop: analytical forward process}, the transition probability matrix between time steps $t$ and $t+\Delta t$ is
\begin{align}
    q_{t+\Delta t|t} &= \mathbf{I} + \int_{t}^{t+\Delta t} \beta(s)\mathbf{R}_f ds + O((\Delta t)^2)\\
    &\overset{(*)}{=} \mathbf{I} + \Delta t \beta(\xi) \mathbf{R}_f +  O((\Delta t)^2),
\end{align}
where (*) is based on the Mean Value Theorem. If the high-order term $O((\Delta t)^2)$ is omitted and we short $\beta_{\Delta t} = \Delta t \beta(\xi)$, for $\mathbf{R}_f=\mathbf{1}\mathbf{1}^{\top}-b\mathbf{I}$, we have
\begin{align}
    q_{t+\Delta t|t} \approx \beta_{\Delta t}\mathbf{1}\mathbf{1}^{\top} + (1-\beta_{\Delta t}b)\mathbf{I},
\end{align}
which is the transition matrix of the uniform diffusion in the discrete-time diffusion models~\cite{DBLP:conf/icml/Sohl-DicksteinW15,DBLP:conf/nips/AustinJHTB21}. Thus, with $T\rightarrow \infty$ and $q_{t+\Delta t|t}$ to the power of infinite, the converged distribution is a uniform distribution. Similarly, for $\mathbf{R}_f=\mathbf{1}\mathbf{m}_f^{\top}-\mathbf{I}$ the transition matrix is
\begin{align}
    q_{t+\Delta t|t} \approx \beta_{\Delta t}\mathbf{1}\mathbf{m}_f^{\top} + (1-\beta_{\Delta t})\mathbf{I}
\end{align}
which is a generalized transition matrix of the `absorbing state' diffusion~\cite{DBLP:conf/nips/AustinJHTB21}. The difference lies at for the `absorbing state' diffusion~\cite{DBLP:conf/nips/AustinJHTB21}, $\mathbf{m}_f$ is set as a one-hot vector for the absorbing state, and here we set it as the marginal distribution. Thus, with $T\rightarrow \infty$ and $q_{t+\Delta t|t}$ to the power of infinite, the converged distribution is a marginal distribution $\mathbf{m}_f$.
\end{proof}

\subsection{Proof of Theorem~\ref{thm: correctness of objective function}}
\label{sec: proof of objective function}
Theorem~\ref{thm: correctness of objective function} says for $\mathcal{G} \neq \bar{\mathcal{G}}$,

\begin{align}
        \Big| \tilde{\mathbf{R}}_{t}(\mathcal{G},\bar{\mathcal{G}})-\tilde{\mathbf{R}}_{\theta,t}(\mathcal{G},\bar{\mathcal{G}}) \Big|^2 &\leq C_t + C^{\texttt{node}}_t \mathbb E_{\mathcal G_0} q_{t|0}(\mathcal G|\mathcal G_0)\sum_i\mathcal{L}_{\mathtt{CE}}(\mathtt{One\text-Hot}(f_0^i), \hat{f}_0^i)\nonumber \\ 
        &+ C^{\texttt{edge}}_t\mathbb E_{\mathcal G_0} q_{t|0}(\mathcal G|\mathcal G_0)\sum_{i,j}\mathcal{L}_{\mathtt{CE}}(\mathtt{One\text-Hot}(e_0^{(i,j)}), \hat{e}_0^{(i,j)})
    \end{align}
    where the node and edge estimated probability vector (sum is $1$) is notated as $\hat{f}^i_0 = [p^{\theta}_{0|t}(f^i=1 | \mathcal{G}_t),\dots, p^{\theta}_{0|t}(f^i=b | \mathcal{G}_t)]^\top\in[0,1]^b$ and $\hat{e}^{(i,j)}_0=[p^{\theta}_{0|t}(e^{(i,j)}=1| \mathcal{G}_t), \dots, p^{\theta}_{0|t}(e^{(i,j)}=a+1| \mathcal{G}_t)]^\top \in[0,1]^{a+1}$.

\begin{proof}
    \begin{align}
        &  \Big| \tilde{\mathbf{R}}_{t}(\mathcal{G},\bar{\mathcal{G}})-\tilde{\mathbf{R}}_{\theta,t}(\mathcal{G},\bar{\mathcal{G}}) \Big|\\
        =&  \Big| \sum_i A_t^i \sum_{f_0^i}\frac{q_{t|0}(\bar{f}^i|f^i_0)}{q_{t|0}(f^i|f^i_0)} (q_{0|t}(f^i_0|\mathcal{G})- p^{\theta}_{0|t}(f^i_0|\mathcal{G})) \nonumber\\ &+ \sum_{i,j} B_t^{(i,j)} \sum_{e_0^{(i,j)}} \frac{q_{t|0}(\bar{e}^{(i,j)}|e^{(i,j)}_0)}{q_{t|0}(e^{(i,j)}|e^{(i,j)}_0)} (q_{0|t}(e^{(i,j)}_0|\mathcal{G})- p^{\theta}_{0|t}(e^{(i,j)}_0|\mathcal{G})) \Big|\\
        \leq& \Big| \sum_i A_t^i \sum_{f_0^i}\frac{q_{t|0}(\bar{f}^i|f^i_0)}{q_{t|0}(f^i|f^i_0)} (q_{0|t}(f^i_0|\mathcal{G})- p^{\theta}_{0|t}(f^i_0|\mathcal{G}))\Big| \nonumber\\ &+ \Big|\sum_{i,j} B_t^{(i,j)} \sum_{e_0^{(i,j)}} \frac{q_{t|0}(\bar{e}^{(i,j)}|e^{(i,j)}_0)}{q_{t|0}(e^{(i,j)}|e^{(i,j)}_0)} (q_{0|t}(e^{(i,j)}_0|\mathcal{G})- p^{\theta}_{0|t}(e^{(i,j)}_0|\mathcal{G})) \Big|\label{eq: total variance}
    \end{align}
    We check the first term of Eq. \eqref{eq: total variance}:
    \begin{align}
        &\Big| \sum_i A_t^i \sum_{f_0^i}\frac{q_{t|0}(\bar{f}^i|f^i_0)}{q_{t|0}(f^i|f^i_0)} (q_{0|t}(f^i_0|\mathcal{G})- p^{\theta}_{0|t}(f^i_0|\mathcal{G}))\Big|\\
        \leq& \sum_i A_t^i \sup_{f_0^i}\Big\{\frac{q_{t|0}(\bar{f}^i|f^i_0)}{q_{t|0}(f^i|f^i_0)}\Big\}\sum_{f_0^i}\Big|q_{0|t}(f^i_0|\mathcal{G})- p^{\theta}_{0|t}(f^i_0|\mathcal{G})\Big|\\
        =& \sum_i C_i \sum_{f_0^i}\Big|q_{0|t}(f^i_0|\mathcal{G})- p^{\theta}_{0|t}(f^i_0|\mathcal{G})\Big|\\
        \overset{(*)}{\leq}& \sum_i C_i\sqrt{2\sum_{f_0^i}\Big(C_{f^i_0}-q_{0|t}(f^i_0|\mathcal{G})\log p^{\theta}_{0|t}(f^i_0|\mathcal{G})\Big)}\\
        \overset{(**)}{\leq}& C_1 \sqrt{\sum_i\sum_{f_0^i}\Big(C_{f^i_0}-q_{0|t}(f^i_0|\mathcal{G})\log p^{\theta}_{0|t}(f^i_0|\mathcal{G})\Big)}\\
        =& C_1 \sqrt{C_2 - \sum_i\sum_{f_0^i}q_{0|t}(f^i_0|\mathcal{G})\log p^{\theta}_{0|t}(f^i_0|\mathcal{G})}\label{eq: qp bound on nodes}
    \end{align}
    where $C_i = A_t^i \sup_{f_0^i}\Big\{\frac{q_{t|0}(\bar{f}^i|f^i_0)}{q_{t|0}(f^i|f^i_0)}\Big\}$,  $C_{f^i_0}=q_{0|t}(f^i_0|\mathcal{G})\log q_{0|t}(f^i_0|\mathcal{G})$, (*) is based on the Pinsker's inequality, (**) is based on Cauchy–Schwarz inequality: $\sum_{i=1}^{n}\sqrt{x_i} \leq \sqrt{n\sum_{i=1}^n x_i}$, $C_1=\sqrt{2n}\sup_i\{C_i\}$, $C_2=\sum_i\sum_{f_0^i}C_{f^i_0}$. Next, the term $- \sum_i\sum_{f_0^i}q_{0|t}(f^i_0|\mathcal{G})\log p^{\theta}_{0|t}(f^i_0|\mathcal{G})$ is equivalent to:
    \begin{align}
        &- \sum_i\sum_{f_0^i}q_{0|t}(f^i_0|\mathcal{G})\log p^{\theta}_{0|t}(f^i_0|\mathcal{G})\\
        =& - \frac{1}{p_t(\mathcal{G})}\sum_i\sum_{f_0^i}p_{0,t}(f^i_0,\mathcal{G})\log p^{\theta}_{0|t}(f^i_0|\mathcal{G})\\
        =& - \frac{1}{p_t(\mathcal{G})}\sum_i \sum_{f_0^i} \sum_{\mathcal G_0(f_0^i)} p_{0,t}(\mathcal G_0,\mathcal{G})\log p^{\theta}_{0|t}(f^i_0|\mathcal{G})\\
        =& - \frac{1}{p_t(\mathcal{G})}\sum_i \sum_{f_0^i} \sum_{\mathcal G_0(f_0^i)} \pi_{\texttt{data}}(\mathcal G_0)q_{t|0}(\mathcal G|\mathcal G_0)\log p^{\theta}_{0|t}(f^i_0|\mathcal{G})\\
        =& \frac{1}{p_t(\mathcal{G})}\sum_i \sum_{f_0^i} \sum_{\mathcal G_0(f_0^i)} \pi_{\texttt{data}}(\mathcal G_0)q_{t|0}(\mathcal G|\mathcal G_0)\mathcal{L}_{\mathtt{CE}}(\mathtt{One\text-Hot}(f_0^i), \hat{f}_0^i)\\
        =& \frac{1}{p_t(\mathcal{G})} \sum_{\mathcal G_0} \pi_{\texttt{data}}(\mathcal G_0)q_{t|0}(\mathcal G|\mathcal G_0) \sum_i\mathcal{L}_{\mathtt{CE}}(\mathtt{One\text-Hot}(f_0^i), \hat{f}_0^i)\\
        =& \frac{1}{p_t(\mathcal{G})} \mathbb E_{\mathcal G_0} q_{t|0}(\mathcal G|\mathcal G_0) \sum_i\mathcal{L}_{\mathtt{CE}}(\mathtt{One\text-Hot}(f_0^i), \hat{f}_0^i) \label{eq: ce bound on nodes}
    \end{align}
    where $\sum_{\mathcal G_0(f_0^i)}$ marginalizing all the graphs at time $0$ whose $i$-th node is $f_0^i$; $p_{0,t}(f^i_0,\mathcal{G})$ is the joint probability of a graph whose $i$-th node is $f_0^i$ at time $0$ and it is $\mathcal G$ at time $t$; $p_{0,t}(\mathcal G_0,\mathcal{G})$ is the joint probability of a graph which is $\mathcal G_0$ at time $0$ and it is $\mathcal G$ at time $t$. Plugging Eq.~\eqref{eq: ce bound on nodes} into Eq.~\eqref{eq: qp bound on nodes}:
    \begin{align}
       &\Big| \sum_i A_t^i \sum_{f_0^i}\frac{q_{t|0}(\bar{f}^i|f^i_0)}{q_{t|0}(f^i|f^i_0)} (q_{0|t}(f^i_0|\mathcal{G})- p^{\theta}_{0|t}(f^i_0|\mathcal{G}))\Big|\nonumber\\  \leq& C_1 \sqrt{C_2 + C_5 \mathbb E_{\mathcal G_0} q_{t|0}(\mathcal G|\mathcal G_0) \sum_i\mathcal{L}_{\mathtt{CE}}(\mathtt{One\text-Hot}(f_0^i), \hat{f}_0^i)}\label{eq: ce node bound final}
    \end{align}
    where $C_5 = \frac{1}{p_t(\mathcal G)}$. A similar analysis can be conducted about the second term of Eq. \eqref{eq: total variance} and we directly present it here:
    \begin{align}
       &\Big|\sum_{i,j} B_t^{(i,j)} \sum_{e_0^{(i,j)}} \frac{q_{t|0}(\bar{e}^{(i,j)}|e^{(i,j)}_0)}{q_{t|0}(e^{(i,j)}|e^{(i,j)}_0)} (q_{0|t}(e^{(i,j)}_0|\mathcal{G})- p^{\theta}_{0|t}(e^{(i,j)}_0|\mathcal{G})) \Big|\nonumber\\  \leq& C_3 \sqrt{C_4 + C_5 \mathbb E_{\mathcal G_0} q_{t|0}(\mathcal G|\mathcal G_0) \sum_{i,j}\mathcal{L}_{\mathtt{CE}}(\mathtt{One\text-Hot}(e_0^{(i,j)}), \hat{e}_0^{(i,j)})}\label{eq: ce edge bound final}
    \end{align}
    where $C_3=\sqrt{2}n\sup_{i,j}\{C_{i,j}\}$, $C_4=\sum_{i,j}\sum_{e_0^{(i,j)}}C_{e_0^{(i,j)}}$, $C_{i,j} = B_t^{(i,j)} \sup_{e_0^{(i,j)}}\Big\{\frac{q_{t|0}(\bar{e}^{(i,j)}|e^{(i,j)}_0)}{q_{t|0}(e^{(i,j)}|e^{(i,j)}_0)}\Big\}$,  $C_{e^{(i,j)}_0}=q_{0|t}(e^{(i,j)}_0|\mathcal{G})\log q_{0|t}(e^{(i,j)}_0|\mathcal{G})$.

    Plugging Eqs.~\eqref{eq: ce node bound final} and~\eqref{eq: ce edge bound final} into Eq.~\eqref{eq: total variance}, being aware that $C_1$, $C_2$, $C_3$, $C_4$, $C_5$ are all $t$-related:
    \begin{align}
        \Big| \tilde{\mathbf{R}}_{t}(\mathcal{G},\bar{\mathcal{G}})-\tilde{\mathbf{R}}_{\theta,t}(\mathcal{G},\bar{\mathcal{G}}) \Big| &\leq C_1 \sqrt{C_2 + C_5 \mathbb E_{\mathcal G_0} q_{t|0}(\mathcal G|\mathcal G_0) \sum_i\mathcal{L}_{\mathtt{CE}}(\mathtt{One\text-Hot}(f_0^i), \hat{f}_0^i)}\nonumber\\ 
        &+ C_3 \sqrt{C_4 + C_5 \mathbb E_{\mathcal G_0} q_{t|0}(\mathcal G|\mathcal G_0) \sum_{i,j}\mathcal{L}_{\mathtt{CE}}(\mathtt{One\text-Hot}(e_0^{(i,j)}), \hat{e}_0^{(i,j)})}\\
        &\overset{(*)}{\leq} \Big( C_t + C^{\texttt{node}}_t \mathbb E_{\mathcal G_0} q_{t|0}(\mathcal G|\mathcal G_0)\sum_i\mathcal{L}_{\mathtt{CE}}(\mathtt{One\text-Hot}(f_0^i), \hat{f}_0^i)\nonumber \\ 
        &+ C^{\texttt{edge}}_t\mathbb E_{\mathcal G_0} q_{t|0}(\mathcal G|\mathcal G_0)\sum_{i,j}\mathcal{L}_{\mathtt{CE}}(\mathtt{One\text-Hot}(e_0^{(i,j)}), \hat{e}_0^{(i,j)}) \Big)^{1/2} 
    \end{align}
    where (*) is based on Cauchy–Schwarz inequality, $C_t = 2C_1^2C_2+2C_3^2C_4$, $C_t^{\texttt{node}} = 2C_1^2C_5$, $C_t^{\texttt{edge}} = 2C_3^2C_5$.
\end{proof}

\subsection{Proof of Lemma~\ref{lemma: permutation equivalent layer}}
\label{sec: proof of permutation layer}
We clarify that the term "permutation" in this paper refers to the reordering of the node indices, i.e., the first dimension of $\mathbf{F}$ and the first two dimensions of $\mathbf{E}$.
\begin{proof}
The input of an MPNN layer is $\mathbf{F}=\{\mathbf{r}_i\}_{i=1}^n\in\mathbb{R}^{n\times d},\mathbf{E}=\{\mathbf{r}_{i,j}\}_{i,j=1}^n\in\mathbb{R}^{n\times n\times d},\mathbf{y}\in\mathbb{R}^{d}$, where $d$ is the hidden dimension. The updating formulas of an MPNN layer can be presented as
\begin{align}
    \mathbf{r}^i &\gets \mathtt{FiLM}\Bigg(\mathtt{FiLM}\Bigg(\mathbf{r}^i, \mathtt{MLP}\bigg(\frac{\sum_{j=1}^n \mathbf{r}^{(j,i)}}{n}\bigg)\Bigg), \mathbf y\Bigg)\label{eq: update node repeat},\\
    \mathbf{r}^{(i,j)} &\gets \mathtt{FiLM} \big( \mathtt{FiLM} (\mathbf{r}^{(i,j)}, \mathbf{r}^{i} \odot \mathbf{r}^{j}), \mathbf{y}\big)\label{eq: update edge repeat},\\
    \mathbf{y} &\gets \mathbf{y} + \mathtt{PNA}\big(\{\mathbf{r}^{i}\}_{i=1}^{n}\big) + \mathtt{PNA}\big(\{\mathbf{r}^{(i,j)}\}_{i,j=1}^{n}\big)\label{eq: update global repeat},
\end{align}
The permutation $\mathcal{P}$ of the input of an MPNN layer can be presented as $\mathcal{P}\big(\mathbf{F}=\{\mathbf{r}_i\}_{i=1}^n,\mathbf{E}=\{\mathbf{r}_{i,j}\}_{i,j=1}^n,\mathbf{y}\big)=\big(\{\mathbf{r}_{\sigma(i)}\}_{i=1}^n,\{\mathbf{r}_{\sigma(i),\sigma(j)}\}_{i,j=1}^n,\mathbf{y}\big)$ where $\sigma:\{1,\dots,n\}\mapsto \{1,\dots,n\}$ is a bijection.

For $\mathtt{PNA}$ (Eq.~\eqref{eq: PNA}), it includes operations $\mathtt{max}$, $\mathtt{min}$, $\mathtt{mean}$, and $\mathtt{std}$ which are all permutation-invariant and thus, the $\mathtt{PNA}$ module is permutation-invariant. Then,
\begin{align}
    \mathbf{y} + \mathtt{PNA}\big(\{\mathbf{r}^{i}\}_{i=1}^{n}\big) + \mathtt{PNA}\big(\{\mathbf{r}^{(i,j)}\}_{i,j=1}^{n}\big) = \mathbf{y} + \mathtt{PNA}\big(\{\mathbf{r}^{\sigma(i)}\}_{i=1}^{n}\big) + \mathtt{PNA}\big(\{\mathbf{r}^{(\sigma(i),\sigma(j))}\}_{i,j=1}^{n}\big)
\end{align}


Because $\sum_{j=1}^n \mathbf{r}^{(j,i)}=\sum_{j=1}^n \mathbf{r}^{(\sigma(j),\sigma(i))}$, $\mathbf{r}^{i} \odot \mathbf{r}^{j}=\mathbf{r}^{\sigma(i)} \odot \mathbf{r}^{\sigma(j)}$, and the FiLM module (Eq.~\eqref{eq: film}) is not related to the node ordering,
\begin{align}
    \mathbf r^{(\sigma(i),\sigma(j))}\gets \mathtt{FiLM} \big( \mathtt{FiLM} (\mathbf{r}^{(\sigma(i),\sigma(j))}, \mathbf{r}^{\sigma(i)} \odot \mathbf{r}^{\sigma(j)}), \mathbf{y}\big) = \mathtt{FiLM} \big( \mathtt{FiLM} (\mathbf{r}^{(i,j)}, \mathbf{r}^{i} \odot \mathbf{r}^{j}), \mathbf{y}\big)
\end{align}
\begin{align}
    \mathbf r^{\sigma(i)}&\gets \mathtt{FiLM}\Bigg(\mathtt{FiLM}\Bigg(\mathbf{r}^{\sigma(i)}, \mathtt{MLP}\bigg(\frac{\sum_{j=1}^n \mathbf{r}^{(\sigma(j),\sigma(i))}}{n}\bigg)\Bigg), \mathbf y\Bigg)\\ &= \mathtt{FiLM}\Bigg(\mathtt{FiLM}\Bigg(\mathbf{r}^i, \mathtt{MLP}\bigg(\frac{\sum_{j=1}^n \mathbf{r}^{(j,i)}}{n}\bigg)\Bigg), \mathbf y\Bigg)
\end{align}
Thus, we proved that 
\begin{align}
    \mathtt{MPNN}\bigg(\mathcal{P}\big(\mathbf{F},\mathbf{E},\mathbf{y}\big)\bigg) = \mathcal{P}\bigg(\mathtt{MPNN}(\mathbf{F},\mathbf{E},\mathbf{y})\bigg)
\end{align}
\end{proof}

\subsection{Proof of Lemma~\ref{lemma: permutation-invariant rate}}
\label{sec: proof of permutation-invariant rate}
\begin{proof}
    The forward rate matrix (Eq.~\eqref{eq: forward rate matrix}) is the sum of component-specific forward rate matrices ($\{\mathbf{R}^{(i,j)}_t\}_{i,j\in\mathbb{N}^{+}_{\leq n}}$ and $\{\mathbf{R}^i_t\}_{i\in\mathbb{N}^{+}_{\leq n}}$). It is permutation-invariant because the summation is permutation-invariant.

    The parametric reverse rate matrix is \begin{align}
        \tilde{\mathbf{R}}_{\theta,t} (\mathcal{G}, \bar{\mathcal{G}}) = \sum_i \tilde{\mathbf{R}}^i_{\theta,t}(f^i,\bar{f}^i) + \sum_{i,j} \tilde{\mathbf{R}}^{(i,j)}_{\theta,t}(e^{(i,j)},\bar{e}^{(i,j)})
    \end{align}
    where $\tilde{\mathbf{R}}^i_{\theta,t}(f^i,\bar{f}^i) = A_t^i \sum_{f_0^i}\frac{q_{t|0}(\bar{f}^i|f^i_0)}{q_{t|0}(f^i|f^i_0)} p^{\theta}_{0|t}(f^i_0| \mathcal{G}_t)$, $\tilde{\mathbf{R}}^{(i,j)}_{\theta,t}(e^{(i,j)},\bar{e}^{(i,j)})=B_t^{(i,j)} \sum_{e_0^{(i,j)}}\frac{q_{t|0}(\bar{e}^{(i,j)}|e^{(i,j)}_0)}{q_{t|0}(e^{(i,j)}|e^{(i,j)}_0)} p^{\theta}_{0|t}(e^{(i,j)}_0| \mathcal{G}_t)$. If we present the permutation $\mathcal P$ on every node as a bijection $\sigma:\{1,\dots,n\}\mapsto \{1,\dots,n\}$, the term
    \begin{align}
        \tilde{\mathbf{R}}^i_{\theta,t}(f^i,\bar{f}^i) &= A_t^i \sum_{f_0^i}\frac{q_{t|0}(\bar{f}^i|f^i_0)}{q_{t|0}(f^i|f^i_0)} p^{\theta}_{0|t}(f^i_0| \mathcal{G}_t)\\
        &= \mathbf{R}^i_t(\bar{f}^i,f^i)\delta_{\bar{\mathcal{G}}\setminus \bar{f}^i, \mathcal{G}\setminus f^i}\sum_{f_0^i}\frac{q_{t|0}(\bar{f}^i|f^i_0)}{q_{t|0}(f^i|f^i_0)} p^{\theta}_{0|t}(f^i_0| \mathcal{G}_t)\\
        &\overset{(*)}{=} \mathbf{R}^{\sigma(i)}_t(\bar{f}^{\sigma(i)},f^{\sigma(i)})\delta_{\mathcal P(\bar{\mathcal{G}})\setminus \bar{f}^{\sigma(i)}, \mathcal P(\mathcal{G})\setminus f^{\sigma(i)}}\sum_{f_0^{\sigma(i)}}\frac{q_{t|0}(\bar{f}^{\sigma(i)}|f^{\sigma(i)}_0)}{q_{t|0}(f^{\sigma(i)}|f^{\sigma(i)}_0)} p^{\theta}_{0|t}(f^i_0| \mathcal{G}_t)\\
        &\overset{(**)}{=}\mathbf{R}^{\sigma(i)}_t(\bar{f}^{\sigma(i)},f^{\sigma(i)})\delta_{\mathcal P(\bar{\mathcal{G}})\setminus \bar{f}^{\sigma(i)}, \mathcal P(\mathcal{G})\setminus f^{\sigma(i)}}\sum_{f_0^{\sigma(i)}}\frac{q_{t|0}(\bar{f}^{\sigma(i)}|f^{\sigma(i)}_0)}{q_{t|0}(f^{\sigma(i)}|f^{\sigma(i)}_0)} p^{\theta}_{0|t}(f^{\sigma(i)}_0| \mathcal P(\mathcal{G}_t))\\
        &= \tilde{\mathbf{R}}^{\sigma(i)}_{\theta,t}(f^{\sigma(i)},\bar{f}^{\sigma(i)})
    \end{align}
    where (*) is based on the permutation invariant of the forward process and its rate matrix; (**) is based on the permutation equivariance of the graph-to-graph backbone $p^{\theta}_{0|t}$.
\end{proof}

\subsection{Proof of Lemma~\ref{lemma: permutation-invariant transition}}
\label{sec: proof of permutation-invariant transition}
Recall the Kolmogorov forward equation, for $s<t$,
\begin{align}
    \frac{d}{dt}q_{t|s}(\mathbf{x}_t|\mathbf{x}_s) = \sum_{\xi \in \mathcal{X}} q_{t|s}(\xi|\mathbf{x}_s)\mathbf{R}_t(\xi, \mathbf{x}_t).
    \label{eq: kolmogorov forward repeat}
\end{align}
\begin{proof}
    We aim to show that $q_{t|s}(\mathcal P(\mathbf{x}_t)|\mathcal P(\mathbf{x}_s))$ is a solution of Eq.~\eqref{eq: kolmogorov forward repeat}. Because the permutation $\mathcal P$ is a bijection, we have
    \begin{align}
       &\frac{d}{dt}q_{t|s}(\mathcal P(\mathbf{x}_t)|\mathcal P(\mathbf{x}_s))\\ =& \sum_{\xi \in \mathcal{X}} q_{t|s}(\mathcal P(\xi)|\mathcal P(\mathbf{x}_s))\mathbf{R}_t(\mathcal P(\xi), \mathcal P(\mathbf{x}_t))\\
       \overset{(*)}{=}& \sum_{\xi \in \mathcal{X}} q_{t|s}(\mathcal P(\xi)|\mathcal P(\mathbf{x}_s))\mathbf{R}_t(\xi, \mathbf{x}_t) \label{eq: kolmogorov forward permute}
    \end{align}
    where (*) is because $\mathbf R_t$ is permutation-invariant.
    As Eq.~\ref{eq: kolmogorov forward permute} and Eq.~\ref{eq: kolmogorov forward repeat} share the same rate matrix, and the rate matrix completely determines the CTMC (and its Kolmogorov forward equation)~\cite{resnick1992adventures}, thus, their solutions are the same: $q_{t|s}(\mathbf{x}_t|\mathbf{x}_s) = q_{t|s}(\mathcal P(\mathbf{x}_t)|\mathcal P(\mathbf{x}_s))$, i.e., the transition probability is permutation-invariant.
\end{proof}

\subsection{Proof of Theorem~\ref{thm: permutation-invariant density}}
\label{sec: proof of permutation density}
\begin{proof}
    We start from a simple case where the parametric rate matrix is fixed all the time,
   \begin{align}
    p_0^\theta(\mathcal G_0) = \sum_{\mathcal G_T}q^\theta_{0|T}(\mathcal G_0 | \mathcal G_T)\pi_{\texttt{ref}}(\mathcal G_T),
    \end{align}
    where the transition probability is by solving the Kolmogorov forward equation
    \begin{align}
    \frac{d}{dt}q^\theta_{t|s}(\mathcal{G}_t|\mathcal{G}_s) = \sum_{\xi} q^\theta_{t|s}(\xi|\mathcal{G}_s)\tilde{\mathbf{R}}_\theta(\xi, \mathcal G_t).
    \end{align}
    Thus, the sampling probability of permuted graph $\mathcal P(\mathcal G_0)$
    \begin{align}
        p_0^\theta(\mathcal P(\mathcal G_0)) &= \sum_{\mathcal G_T}q^\theta_{0|T}(\mathcal P(\mathcal G_0) | \mathcal P(\mathcal G_T))\pi_{\texttt{ref}}(\mathcal P(\mathcal G_T))\\
        &\overset{(*)}{=} \sum_{\mathcal G_T}q^\theta_{0|T}(\mathcal G_0 | \mathcal G_T)\pi_{\texttt{ref}}(\mathcal P(\mathcal G_T))\\
        &\overset{(**)}{=} \sum_{\mathcal G_T}q^\theta_{0|T}(\mathcal G_0 | \mathcal G_T)\pi_{\texttt{ref}}(\mathcal G_T)\\
        &=p_0^\theta(\mathcal G_0)
    \end{align}
    where (*) is based on Lemma~\ref{lemma: permutation-invariant rate} and Lemma~\ref{lemma: permutation-invariant transition}, the transition probability of \textsc{DisCo} is permutation-invariant and (**) is from the assumption that the reference distribution $\pi_{\texttt{ref}}(\mathcal G_T)$ is permutation-invariant. Thus, we proved that for the simple case, $\tilde{\mathbf{R}}_{\theta,t}$ fixed $\forall t$, the sampling probability is permutation-invariant.

    For the practical sampling, as we mentioned in Section~\ref{sec: sampling reverse process}, the $\tau$-leaping algorithm assumes that the time interval $[0,T]$ is divided into various length-$\tau$ intervals $[0,\tau), [\tau,2\tau),\dots,[T-\tau,T]$ (here both close sets or open sets work) and assume the reverse rate matrix is fixed as $\tilde{\mathbf{R}}_{\theta,t}$ within every length-$\tau$ interval, such as $(t-\tau, t]$. Thus, the sampling probability can be computed as
    \begin{align}
        p^{\theta}_0(\mathcal{G}_0) = \sum_{\mathcal{G}_{T}, \mathcal{G}_{T-\tau},\dots, \mathcal{G}_{\tau}}q_{0|\tau}(\mathcal{G}_0|\mathcal{G}_{\tau})\dots q_{T-\tau|T}(\mathcal{G}_{T-\tau}|\mathcal{G}_T)\pi_{\texttt{ref}}(\mathcal{G}_T).
    \end{align}
    The conclusion from the simple case can be generalized to this $\tau$-leaping-based case because all the transition probability $q_{t-\tau|t}(\mathcal{G}_{t-\tau}|\mathcal{G}_t)$ and the reference distribution are permutation-invariant.
\end{proof}
Note that Xu et al.~\cite{DBLP:conf/iclr/XuY0SE022} have a similar analysis in their Proposition 1 on a DDPM-based model.

\subsection{Proof of Theorem~\ref{thm: permutation-invariant training loss}}
\label{sec: proof of permutation loss}
Recall our training objective is
\begin{align}
    \min_\theta  T\mathbb{E}_{t\sim\mathcal{U}_{(0,T)}}\mathbb{E}_{\mathcal{G}_0}\mathbb{E}_{q_{t|0}(\mathcal{G}_t|\mathcal{G}_0)}\bigg[\sum_i \mathcal{L}_{\mathtt{CE}}(\mathtt{One\text-Hot}(f_0^i),\hat{f}_0^i) +  \sum_{i,j} \mathcal{L}_{\mathtt{CE}}(\mathtt{One\text-Hot}(e_0^{(i,j)}),\hat{e}_0^{(i,j)})\bigg]
\end{align}
where $\hat{f}^i_0 = [p^{\theta}_{0|t}(f^i=1 | \mathcal{G}_t),\dots, p^{\theta}_{0|t}(f^i=b | \mathcal{G}_t)]^\top\in[0,1]^b$ and $\hat{e}^{(i,j)}_0=[p^{\theta}_{0|t}(e^{(i,j)}=1| \mathcal{G}_t), \dots, p^{\theta}_{0|t}(e^{(i,j)}=a+1| \mathcal{G}_t)]^\top \in[0,1]^{a+1}$
\begin{proof}
    We follow the notation and present the permutation $\mathcal P$ on every node as a bijection $\sigma:\{1,\dots,n\}\mapsto \{1,\dots,n\}$. We first analyze the cross-entropy loss on the nodes for a single training graph $\mathcal G_0$ and taking expectation $\mathbb E_{\mathcal G_0}$ keeps the permutation invariance:
    \begin{align}
        \mathcal{L}_{\texttt{node}}(\mathcal G_0) &= T\mathbb{E}_{t\sim\mathcal{U}_{(0,T)}}\mathbb{E}_{q_{t|0}(\mathcal{G}_t|\mathcal{G}_0)}\sum_i \mathcal{L}_{\mathtt{CE}}(\mathtt{One\text-Hot}(f_0^i),\hat{f}_0^i)\\
        &= T\mathbb{E}_{t\sim\mathcal{U}_{(0,T)}}\sum_{\mathcal G_t}q_{t|0}(\mathcal{G}_t|\mathcal{G}_0)\sum_i \mathcal{L}_{\mathtt{CE}}(\mathtt{One\text-Hot}(f_0^i),\hat{f}_0^i)\\
        &\overset{(*)}{=} T\mathbb{E}_{t\sim\mathcal{U}_{(0,T)}}\sum_{\mathcal G_t}q_{t|0}(\mathcal P (\mathcal{G}_t)|\mathcal P (\mathcal{G}_0))\sum_i \mathcal{L}_{\mathtt{CE}}(\mathtt{One\text-Hot}(f_0^i),\hat{f}_0^i)\\
        &\overset{(**)}{=} T\mathbb{E}_{t\sim\mathcal{U}_{(0,T)}}\sum_{\mathcal G_t}q_{t|0}(\mathcal P (\mathcal{G}_t)|\mathcal P (\mathcal{G}_0))\sum_i \mathcal{L}_{\mathtt{CE}}(\mathtt{One\text-Hot}(f_0^{\sigma(i)}),\hat{f}_0^{\sigma(i)})\\
        &= \mathcal{L}_{\texttt{node}}(\mathcal P (\mathcal G_0))
    \end{align}
    where (*) is from the permutation invariance of the forward process and (**) is from the permutation equivariance of the graph-to-graph backbone and the permutation invariance of the cross-entropy loss. A similar result can be analyzed on the cross-entropy loss on the edges
    \begin{align}
        \mathcal{L}_{\texttt{edge}}(\mathcal G_0) &= T\mathbb{E}_{t\sim\mathcal{U}_{(0,T)}}\mathbb{E}_{q_{t|0}(\mathcal{G}_t|\mathcal{G}_0)}\sum_{i,j} \mathcal{L}_{\mathtt{CE}}(\mathtt{One\text-Hot}(e_0^{(i,j)}),\hat{e}_0^{(i,j)}) = \mathcal{L}_{\texttt{edge}}(\mathcal P(\mathcal G_0))
    \end{align}
    and we omit the proof here for brevity.
\end{proof}

\section{Sampling Algorithm}
\label{sec: algorithms appendix}

A Step-by-step procedure about the $\tau$-leaping graph generation is presented in Algorithm~\ref{alg: tau leaping}.

\begin{algorithm}
\caption{$\tau$-Leaping Graph Generation}
\label{alg: tau leaping}

\begin{algorithmic}[1]
\State $t\gets T$
\State $\mathcal{G}_t = (\{e^{(i,j)}\}_{i,j\in\mathbb{N}^{+}_{\leq n}},
\{f^i\}_{i\in\mathbb{N}^{+}_{\leq n}}) \gets \pi_{\texttt{ref}}(\mathcal{G})$
\While {$t>0$}

  \For{$i = 1, \dots, n$}
    \For{$s = 1, \dots, b$}
      \State $\tilde{\mathbf{R}}^i_{\theta,t}(f^i,s) = \mathbf{R}_t^i(s,f^i) \sum_{f_0^i}\frac{q_{t|0}(s|f^i_0)}{q_{t|0}(f^i|f^i_0)} p_{\theta}(f^i|\mathcal{G}_t,t)$
      \State $J_{f^i,s} \gets  \textrm{Poisson}(\tau\mathbf{R}_t^i(s,f^i))$ \Comment{\# of transition for every node}
    \EndFor
  \EndFor

  \For{$i,j = 1, \dots, n$}
      \For{$s = 1, \dots, a$}
        \State $\tilde{\mathbf{R}}^{(i,j)}_{\theta,t}(e^{(i,j)},s) = \mathbf{R}_t^{(i,j)}(s,e^{(i,j)}) \sum_{e_0^{(i,j)}}\frac{q_{t|0}(s|e^{(i,j)}_0)}{q_{t|0}(e^{(i,j)}|e^{(i,j)}_0)} p_{\theta}(e^{(i,j)}|\mathcal{G}_t,t)$
        \State $J_{e^{(i,j)},s} \gets  \textrm{Poisson}(\tau\mathbf{R}_t^{(i,j)}(s,e^{(i,j)}))$ \Comment{\# of transition for every edge}
      \EndFor
  \EndFor

  \For{$i = 1,\dots,n$}
    \If{$\sum_{s=1}^{b} J_{f^i,s} > 1$ or $\sum_{s=1}^{b} J_{f^i,s} = 0$}
    \State $f^i\gets f^i$ \Comment{stay the same}
    \Else
    \State $s^*=\arg\max_{s} \{J_{f^i,s}\}_{s=1}^b$
    \State $f^i\gets s^*$ \Comment{update node}
    \EndIf
  \EndFor

  \For{$i,j = 1,\dots,n$}
      \If{$\sum_{s=1}^{a} J_{e^{(i,j)},s} > 1$ or $\sum_{s=1}^{a} J_{e^{(i,j)},s} = 0$}
      \State $e^{(i,j)}\gets e^{(i,j)}$ \Comment{stay the same}
      \Else
      \State $s^*=\arg\max_{s} \{J_{e^{(i,j)},s}\}_{s=1}^b$
      \State $e^{(i,j)}\gets s^*$ \Comment{update edge}
      \EndIf
  \EndFor
  
  \State $t \gets t-\tau$
\EndWhile 
\end{algorithmic}
\end{algorithm}

\section{Auxiliary Features, PNA and FiLM Modules}
\label{sec: auxiliary features}
For learning a better graph-to-graph mapping $p_{0|t}^{\theta}(\mathcal{G}_{0}|\mathcal{G}_{t})$, artificially augmenting the node-level features and graph-level features is proved effective to enhance the expressiveness of graph learning models~\cite{DBLP:conf/aaai/YouGYL21,DBLP:conf/iclr/VignacKSWCF23}. For this setting, we keep consistent with the state-of-the-art model, DiGress~\cite{DBLP:conf/iclr/VignacKSWCF23}, and extract the following three sets of auxiliary features. Note that the following features are extracted on the noised graph $\mathcal{G}_{t}$.

We binarize the edge tensor $\mathbf{E}$ into an adjacency matrix $\mathbf{A}\in \{0,1\}^{n\times n}$ whose $1$ entries denote the corresponding node pair is connected by any type of edge.

\textbf{Motif features. }
The number of length-$3/4/5$ cycles every node is included in is counted as the topological node-level features; also, the total number of length-$3/4/5/6$ cycles is the topological graph-level features.

\textbf{Spectral features. }
The graph Laplacian is decomposed. The number of connected components and the first $5$ non-zero eigenvalues are selected as the spectral graph-level features. An estimated indicator of whether a node is included in the largest connected component and the first $2$ eigenvectors of the non-zero eigenvalues are selected as the spectral node-level features.

\textbf{Molecule features.} 
On molecule datasets, the valency of each atom is selected as the node-level feature, and the total weight of the whole molecule is selected as the graph-level feature.

The above node-level features and graph-level features are concatenated together as the auxiliary node-level features $\mathbf{F}_{\texttt{aux}}$ and graph-level features $\mathbf{y}$. An important property is that the above node-level features are permutation-equivariant and the above graph-level features are permutation-invariant, whose proof is straightforward so we omit it here.

Next, two important modules used in the MPNN backbone: PNA and FiLM are detailed.

\textbf{PNA module.} The PNA module~\cite{DBLP:conf/nips/CorsoCBLV20} is implemented as follows,
\begin{align}
    \mathtt{PNA}(\{\mathbf x_i\}_{i=1}^n) = \mathtt{MLP}(\mathtt{min}(\{\mathbf x_i\}_{i=1}^n)\oplus\mathtt{max}(\{\mathbf x_i\}_{i=1}^n)\oplus\mathtt{mean}(\{\mathbf x_i\}_{i=1}^n)\oplus\mathtt{std}(\{\mathbf x_i\}_{i=1}^n))
    \label{eq: PNA}
\end{align}
where $\oplus$ is the concatenation operator, $\mathbf x_i\in\mathbb R^d$; $\mathtt{min}$, $\mathtt{max}$, $\mathtt{mean}$, and $\mathtt{std}$ are coordinate-wise, e.g., $\mathtt{min}(\{\mathbf x_i\}_{i=1}^n)\in\mathbb R^d$.

\textbf{FiLM module.} FiLM~\cite{DBLP:conf/aaai/PerezSVDC18} is implemented as follows,
\begin{align}
    \mathtt{FiLM}(\mathbf x_i, \mathbf x_j) = \mathtt{Linear}(\mathbf x_i) + \mathtt{Linear}(\mathbf x_i)\odot \mathbf x_j + \mathbf x_j
    \label{eq: film}
\end{align}
where $\mathtt{Linear}$ is a single fully-connected layer without activation function and $\odot$ is the Hadamard product.
\section{Supplementary Details about Experiments}
\label{sec: experiments appendix}

\subsection{Hardware and Software}
\label{sec: hardware and implmentations}
We implement \textsc{DisCo} in PyTorch\footnote{\url{https://pytorch.org}} and PyTorch-geometric\footnote{\url{https://pytorch-geometric.readthedocs.io/en/latest}}. All the efficiency study results are from one NVIDIA Tesla V100 SXM2-32GB GPU on a server with 96 Intel(R) Xeon(R) Gold 6240R CPU @ 2.40GHz processors and 1.5T RAM. The training on QM9 and Community can be finished in 2 hours. For the training on SBM, Planar, it can be finished within 48 hours to get decent validity. The training on MOSES and GuacaMol can be finished within 96 hours.

\subsection{Dataset Setup}
\label{sec: dataset settings appendix}
\subsubsection{Dataset Statistics}
The statistics about all the datasets used in this paper are presented in Table~\ref{tab: dataset statistics}, where $a$ is the number of edge types, $b$ is the number of node types, $|\mathbf{E}|$ is the number of edges and $|\mathbf{F}|$ is the number of nodes.

\begin{table}[ht!]
\centering
\caption{Dataset statistics.}
\label{tab: dataset statistics}
\resizebox{\textwidth}{!}{%
\begin{tabular}{lllllllll}
\toprule
Name & \# Graphs & Split & $a$ & $b$ & Avg. $|\mathbf{E}|$ & Max $|\mathbf{E}|$ & Avg. $|\mathbf{F}|$ & Max $|\mathbf{F}|$ \\\midrule
SBM & 200 & 128/32/40 & 1 & 1 & 1000.8 & 2258 & 104.0 & 187 \\
Planar & 200 & 128/32/40 & 1 & 1 & 355.7 & 362 & 64.0 & 64 \\
Community & 100 & 64/16/20 & 1 & 1 & 74.0 & 122 & 15.7 & 20 \\
QM9 & 130831 & 97734/20042/13055 & 4 & 4 & 18.9 & 28 & 8.8 & 9 \\
MOSES & 1733214 & 1419512/156176/157526 & 4 & 8 & 46.3 & 62 & 21.6 & 27 \\
GuacaMol & 1398213 & 1118633/69926/209654 & 4 & 12 & 60.4 & 176 & 27.8 & 88 \\\bottomrule
\end{tabular}%
}
\end{table}

\subsubsection{Detailed Settings on Plain Graph Datasets}
\label{sec: detailed settings on plain data}
\paragraph{Dataset Split. }
We follow the settings of SPECTRE~\cite{DBLP:conf/icml/MartinkusLPW22} and DiGress~\cite{DBLP:conf/iclr/VignacKSWCF23} to split the SBM, Planar~\cite{DBLP:conf/icml/MartinkusLPW22}, and Community~\cite{DBLP:conf/icml/YouYRHL18} datasets into $64/16/20\%$ for training/validation/test set.

\paragraph{Metrics. }
The Maximum Mean Discrepancy (MMD)~\cite{DBLP:conf/icml/YouYRHL18} measures the discrepancy between two sets of distributions. The relative squared MMD~\cite{DBLP:conf/iclr/VignacKSWCF23}is defined as follows
\begin{align}
    score = \frac{\textrm{MMD}^2(\{\mathcal{G}\}_{\texttt{gen}}||\{\mathcal{G}\}_{\texttt{test}})}{\textrm{MMD}^2(\{\mathcal{G}\}_{\texttt{train}}||\{\mathcal{G}\}_{\texttt{test}})},
\end{align}
where $(\{\mathcal{G}\}_{\texttt{gen}}$, $(\{\mathcal{G}\}_{\texttt{train}}$, and $(\{\mathcal{G}\}_{\texttt{test}}$ are the sets of generated graphs, training graphs, and test graphs, respectively. We report the above relative squared MMD for degree distributions (Deg.), clustering coefficient distributions (Clus.), and average orbit counts (Orb.) statistics (the number of occurrences of all substructures with $4$ nodes). In addition, the Uniqueness, Novelty, and Validity are chosen. Uniqueness reports the fraction of the generated nonisomorphic graphs; Novelty reports the fraction of the generated graphs not isomorphic with any graph from the training set; Validity checks the fraction of the generated graphs following some specific rules. For the SBM dataset, we follow the validity check from~\cite{DBLP:conf/icml/MartinkusLPW22} whose core idea is to check whether real SBM graphs are statistically indistinguishable from the generated graphs; for the Planar dataset, we check whether the generated graphs are connected and are indeed planar graphs. Because the Community dataset does not have the Validity metric, we only report the Uniqueness, Novelty, and Validity results on the SBM and Planar datasets.

We report mean$\pm$std in $5$ runs.


\paragraph{Baseline methods.}
GraphRNN~\cite{DBLP:conf/icml/YouYRHL18}, GRAN~\cite{DBLP:conf/nips/LiaoLSWHDUZ19},
GG-GAN~\cite{krawczuk2020gg}, 
MolGAN~\cite{DBLP:journals/corr/abs-1805-11973}, SPECTRE~\cite{DBLP:conf/icml/MartinkusLPW22}, EDP-GNN~\cite{DBLP:conf/aistats/NiuSSZGE20},
GraphGDP~\cite{DBLP:conf/icdm/HuangS0FL22},
DiscDDPM~\cite{DBLP:journals/corr/abs-2210-01549},
EDGE~\cite{chen2023efficient}, ConGress~\cite{DBLP:conf/iclr/VignacKSWCF23},
DiGress~\cite{DBLP:conf/iclr/VignacKSWCF23} are chosen.

\subsubsection{Detailed Settings on Molecule Graph Datasets}
\label{sec: detailed settings on molecule data}
\paragraph{Dataset Split. }
We follow the split of QM9 from DiGress~\cite{DBLP:conf/iclr/VignacKSWCF23} and follow the split of MOSES~\cite{DBLP:journals/corr/abs-1811-12823} and GuacaMol~\cite{DBLP:journals/jcisd/BrownFSV19} according to their benchmark settings. Their statistics are presented in Table~\ref{tab: dataset statistics}.

\paragraph{Metrics. }
For QM9, Uniqueness, Novelty, and Validity are chosen as metrics. The first two are the same as introduced in Section~\ref{sec: detailed settings on plain data}. The Validity is computed by building a molecule with RdKit~\footnote{\url{https://www.rdkit.org/}} and checking if we can obtain a valid SMILES string from it.

For MOSES, the chosen metrics include Uniqueness, Novelty, Validity, Filters, Fréchet ChemNet Distance (FCD), Similarity to a nearest neighbor (SNN), and Scaffold similarity (Scaf), which is consistent with DiGress~\cite{DBLP:conf/iclr/VignacKSWCF23}. The official evaluation code~\footnote{\url{https://github.com/molecularsets/moses}} is used to report the performance.

For GuacaMol, the chosen metrics include Uniqueness, Novelty, Validity, KL Divergence, and Frećhet ChemNet Distance (FCD), which is consistent with DiGress~\cite{DBLP:conf/iclr/VignacKSWCF23}. The official evaluation code~\footnote{\url{https://github.com/BenevolentAI/guacamol}} is used to report the performance.

We report mean$\pm$std in $5$ runs except MOSES and GuacaMol, whose computations are too expensive to repeat multiple times.

\paragraph{Baseline methods.}
CharacterVAE~\cite{DBLP:journals/corr/Gomez-Bombarelli16}, GrammarVAE~\cite{DBLP:journals/corr/KusnerH16}, GraphVAE~\cite{DBLP:conf/icann/SimonovskyK18}, GT-VAE~\cite{DBLP:journals/corr/abs-2104-04345}, Set2GraphVAE~\cite{DBLP:conf/iclr/VignacF22}, GG-GAN~\cite{krawczuk2020gg},  MolGAN~\cite{DBLP:journals/corr/abs-1805-11973}, 
SPECTRE~\cite{DBLP:conf/icml/MartinkusLPW22}, GraphNVP~\cite{DBLP:journals/corr/abs-1905-11600}, GDSS~\cite{DBLP:conf/icml/JoLH22}, EDGE~\cite{chen2023efficient}, ConGress~\cite{DBLP:conf/iclr/VignacKSWCF23},
DiGress~\cite{DBLP:conf/iclr/VignacKSWCF23}, GRAPHARM~\cite{DBLP:conf/icml/KongCSZPZ23},VAE~\cite{gomez2018automatic}, JT-VAE~\cite{DBLP:conf/icml/JinBJ18}, GraphINVENT~\cite{mercado2021graph}, LSTM~\cite{segler2018generating}, NAGVAE~\cite{kwon2020compressed}, and MCTS~\cite{jensen2019graph} are chosen.

\subsection{Hyperparameter Settings}
\label{sec: hyperparameter settings}
\paragraph{Forward Diffusion Settings.}
As we introduced in Proposition~\ref{prop: rate matrices for uniform and marginal dist}, we tried two sets of rate matrices for the node and edge forward diffusion, so that the converged distribution is either uniform or marginal distribution. We found the marginal distribution leads to better results than the uniform distribution. Thus, the reference distribution is the marginal distribution for all the main results, except Tables~\ref{tab: ablation study GT} and~\ref{tab: ablation study MPNN}. The performance comparison between the marginal diffusion and uniform diffusion is presented in the ablation study in Sections~
\ref{sec: ablation study} and\ref{sec: ablation study appendix}. The $\beta(t)$ controls how fast the forward process converges to the reference distribution, which is set as $\beta(t)=\alpha\gamma^t log(\gamma)$, which is consistent with many existing works~\cite{DBLP:conf/nips/HoJA20, DBLP:conf/iclr/0011SKKEP21, DBLP:conf/nips/CampbellBBRDD22}. In our implementation, we assume the converged time $T=1$ and for the forward diffusion hyperparameters $(\alpha, \gamma)$ we tried two sets: $(1.0,5.0)$ and $(0.8,2.0)$ where the former one can ensure at $T=1$ the distribution is very close to the reference distribution, and the latter one does not fully corrupt the raw data distribution so the graph-to-graph model $p_{0|t}^{\theta}$ is easier to train.

\paragraph{Reverse Sampling Settings.}
The number of sampling steps is determined by $\tau$, which is $\texttt{round}(\frac{1}{\tau})$ if we set the converged time $T=1$. We select the number of sampling steps from $\{50, 100, 500\}$, which is much smaller the number of sampling steps of DiGress~\cite{DBLP:conf/iclr/VignacKSWCF23} from $\{500, 1000\}$. For the number of nodes $n$ in every generated graph, we compute a graph size distribution of the training set by counting the number of graphs for different sizes (and normalize the counting to sum it up to $1$). Then, we will sample the number of nodes from this graph size distribution for graph generation.

\paragraph{Neural Network Settings.}
For \textsc{DisCo}-GT, the parametric graph-to-graph model $p^{\theta}_{0|t}$ is graph transformer (GT). We use the exactly same GT architecture as DiGress~\cite{DBLP:conf/iclr/VignacKSWCF23} and adopt their recommended configurations~\footnote{\url{https://github.com/cvignac/DiGress/tree/main/configs/experiment}}. The reason is that this architecture is not our contribution, and setting the graph-to-graph model $p^{\theta}_{0|t}$ same can ensure a fair comparison between the discrete-time graph diffusion framework (from DiGress) and the continuous-time graph diffusion framework (from this work). For \textsc{DisCo}-MPNN, we search the number of MPNN layers from $\{3,5,8\}$, set all the hidden dimensions the same, and search it from $\{256, 512\}$. For both variants, the dropout is set as $0.1$, the learning rate is set as $2e^{-4}$, and the weight decay is set as $0$.

\begin{table}[t!]
\centering
\caption{Generation performance (mean$\pm$std) on the Community dataset.}
\label{tab: effectiveness comparison on community}
\begin{tabular}{llll}
\toprule
Model     & Deg.$\downarrow$ & Clus.$\downarrow$ & Orb.$\downarrow$ \\\midrule           
GraphRNN~\cite{DBLP:conf/icml/YouYRHL18}  & 4.0              & 1.7               & 4.0              \\
GRAN~\cite{DBLP:conf/nips/LiaoLSWHDUZ19}      & 3.0              & 1.6               & 1.0              \\
EDP-GNN~\cite{DBLP:conf/aistats/NiuSSZGE20} & 2.5 & 2.0 & 3.0 \\
GraphGDP~\cite{DBLP:conf/icdm/HuangS0FL22} & 2.0 & 1.1 & - \\
DiscDDPM~\cite{DBLP:journals/corr/abs-2210-01549} & 1.2 & \textbf{0.9} & 1.5\\
EDGE~\cite{chen2023efficient} & 1.0 & 1.0 & 2.0\\
GG-GAN~\cite{krawczuk2020gg}    & 4.0              & 3.1               & 8.0              \\
MolGAN~\cite{DBLP:journals/corr/abs-1805-11973}    & 3.0              & 1.9               & 1.0              \\
SPECTRE~\cite{DBLP:conf/icml/MartinkusLPW22}   & 0.5              & 2.7               & 2.0              \\
DiGress~\cite{DBLP:conf/iclr/VignacKSWCF23}   & 1.0              & \textbf{0.9}               & 1.0              \\
\rowcolor{Gray}\textsc{DisCo}-MPNN & 1.4\scriptsize$\pm$0.5              & \textbf{0.9\scriptsize$\pm$0.2}      & \textbf{0.9\scriptsize$\pm$0.3}     \\
\rowcolor{Gray}\textsc{DisCo}-GT   & \textbf{0.9\scriptsize$\pm$0.2}     & \textbf{0.9\scriptsize$\pm$0.3}               & 1.1\scriptsize$\pm$0.4             
\\\bottomrule
\end{tabular}
\end{table}

\subsection{Additional Results on Community}
\label{sec: plain graph results appendix}

Additional Community plain graph dataset results are in Table~\ref{tab: effectiveness comparison on community}. Our observation is consistent with the main content: both variants of \textsc{DisCo} are on par with, or even better than the SOTA general graph diffusion generative model, DiGress.

\subsection{Additional Ablation Study}
\label{sec: ablation study appendix}
Table~\ref{tab: ablation study MPNN} shows the ablation study of \textsc{DisCo}-MPNN on QM9 dataset. Our observations are consistent with the main content: (1) generally, the fewer sampling steps, the lower the generation quality but method's performance is robust in terms of the decreasing of sampling steps; (2) the marginal reference distribution is better than the uniform distribution, consistent with the observation from DiGress~\cite{DBLP:conf/iclr/VignacKSWCF23}.

\begin{table*}[t!]
\centering
\caption{Ablation study (mean$\pm$std\%) with MPNN backbone. V., U., and N. mean Valid, Unique, and Novel.}
\label{tab: ablation study MPNN}
\begin{tabular}{lllll}
\toprule
Ref. Dist. & Steps & Valid $\uparrow$ & V.U. $\uparrow$ & V.U.N. $\uparrow$ \\\midrule
\multirow{6}{*}{Marginal} & 500 & 98.9\scriptsize$\pm$0.7 & 98.7\scriptsize$\pm$0.5 & 68.7\scriptsize$\pm$0.2 \\
& 100 & 98.4\scriptsize$\pm$1.1 & 98.0\scriptsize$\pm$1.0 & 69.1\scriptsize$\pm$0.6 \\
& 30 & 97.7\scriptsize$\pm$1.2 & 97.5\scriptsize$\pm$0.8 & 70.4\scriptsize$\pm$1.1 \\
& 10 & 92.3\scriptsize$\pm$1.9 & 91.9\scriptsize$\pm$2.2 & 66.4\scriptsize$\pm$1.7 \\
& 5 & 88.8\scriptsize$\pm$3.3 & 87.1\scriptsize$\pm$2.8 & 67.3\scriptsize$\pm$2.9 \\
& 1 & 64.4\scriptsize$\pm$2.7 & 63.2\scriptsize$\pm$1.9 & 55.8\scriptsize$\pm$1.4 \\\midrule
\multirow{6}{*}{Uniform} & 500 & 93.5\scriptsize$\pm$1.7 & 93.2\scriptsize$\pm$1.1 & 64.9\scriptsize$\pm$1.0 \\
& 100 & 93.1\scriptsize$\pm$2.1 & 92.6\scriptsize$\pm$1.7 & 66.2\scriptsize$\pm$1.9 \\
& 30 & 87.1\scriptsize$\pm$1.8 & 86.8\scriptsize$\pm$1.1 & 64.0\scriptsize$\pm$1.0 \\
& 10 & 83.7\scriptsize$\pm$3.2 & 81.9\scriptsize$\pm$2.1 & 61.3\scriptsize$\pm$2.0 \\
& 5 & 81.5\scriptsize$\pm$2.9 & 75.4\scriptsize$\pm$3.4 & 64.6\scriptsize$\pm$2.3 \\
& 1 & 71.3\scriptsize$\pm$2.3 & 42.2\scriptsize$\pm$4.0 & 36.9\scriptsize$\pm$3.2 \\\bottomrule
\end{tabular}
\end{table*}

\subsection{Convergence Study}
\label{sec: convergence study}
Figure~\ref{fig: convergence study} shows the training loss of \textsc{DisCo}-GT and \textsc{DisCo}-MPNN on four datasets, whose X-axis is the number of iterations (i.e., the number of epochs $\times$ the number of training samples $/$ batch size). We found that overall the training losses converge smoothly on $4$ datasets.

\begin{figure*}[ht]
    \subfigure[SBM]{\includegraphics[width = 0.5\textwidth]{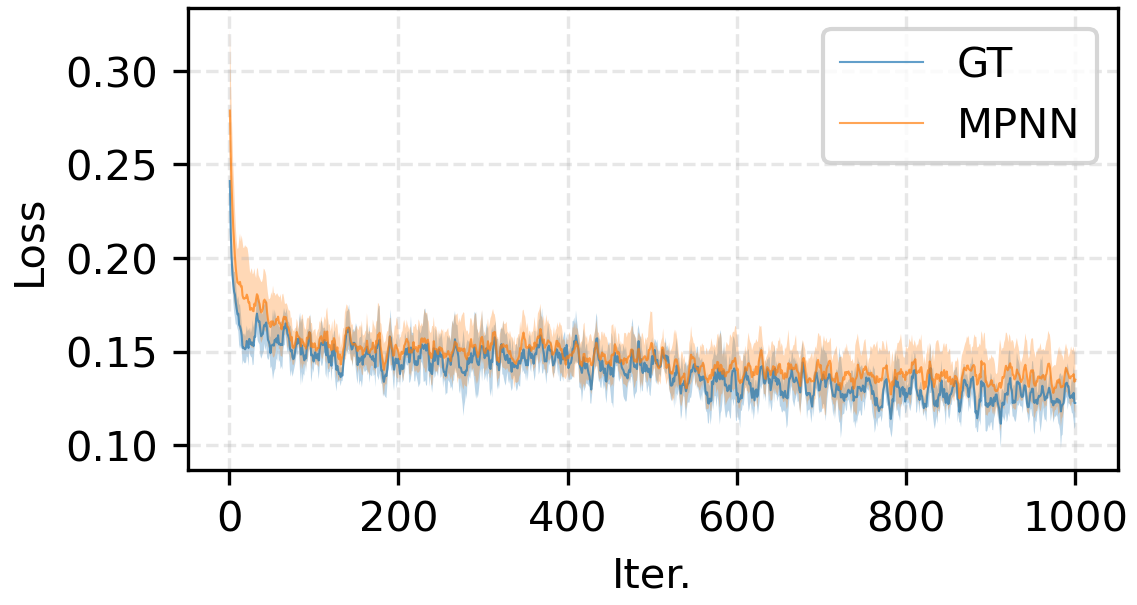}}
    \subfigure[Planar]{\includegraphics[width = 0.5\textwidth]{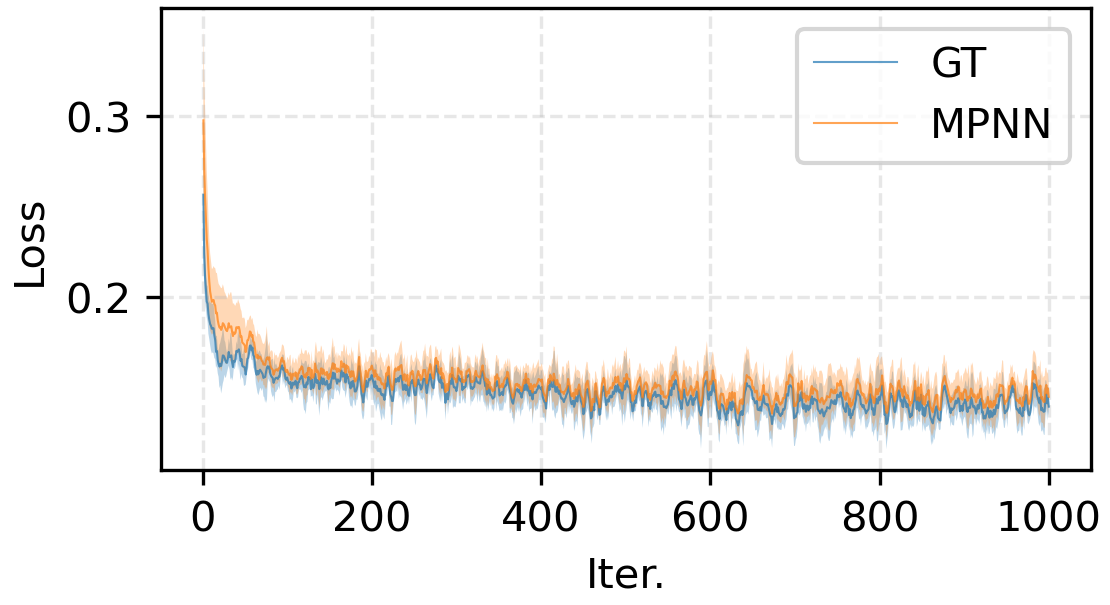}}
    \subfigure[Community]{\includegraphics[width = 0.5\textwidth]{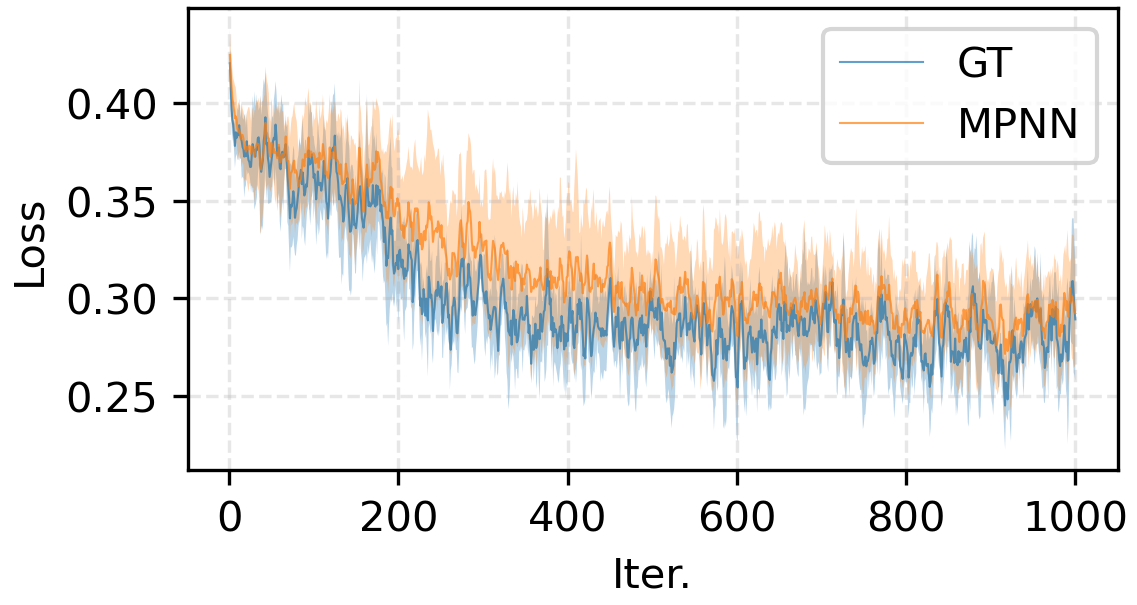}}
    \subfigure[QM9]{\includegraphics[width = 0.5\textwidth]{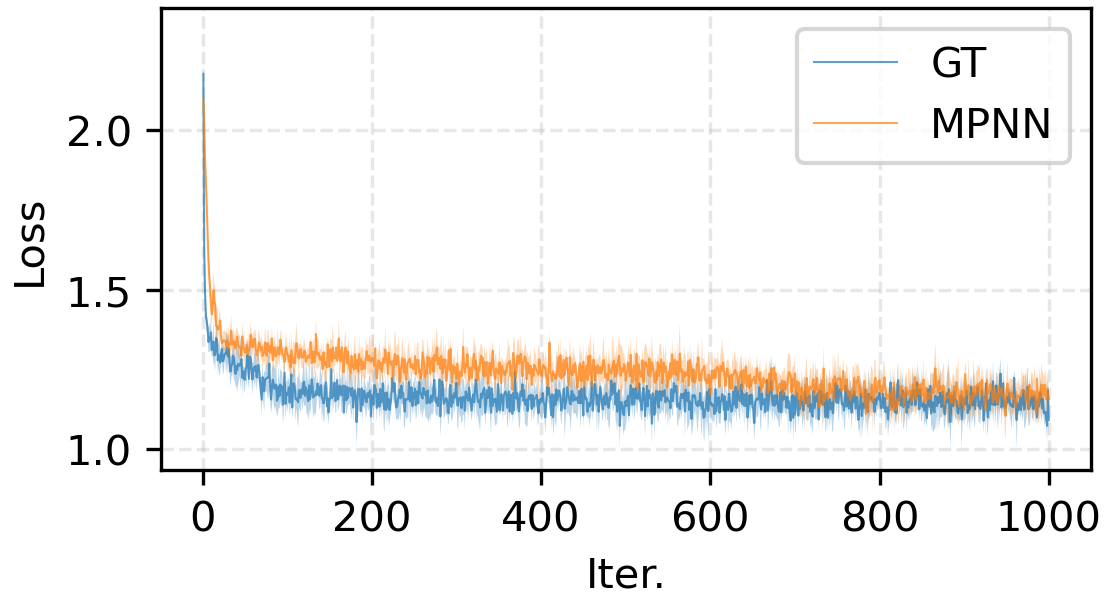}}
    \vspace{-5pt}
    \caption{Training loss of \textsc{DisCo} on different datasets and backbone models.}
    \label{fig: convergence study}
\vskip -0.2in
\end{figure*}

\subsection{Visualization}
\label{sec: visualization}
The generated graphs on the SBM and Planar datasets are presented in Figure~\ref{fig: generated graphs}. We clarify that the generated planar graphs are \emph{selected to be valid} because, as Table~\ref{tab: effectiveness comparison on plain graphs} shows, not all the generated graphs are valid planar graphs, but the planar layout can only visualize valid planar graphs in our setting~\footnote{\url{https://networkx.org/documentation/stable/reference/generated/networkx.drawing.layout.planar_layout.html}}. The generated SBM graphs are not selected; even if a part of them cannot pass the strict SBM statistic test (introduced in Section~\ref{sec: detailed settings on plain data} - Metrics), most, if not all, of them still form $2-5$ densely connected clusters.

The generation trajectory of SBM graphs is presented in Figure~\ref{fig: generation trajectory} which demonstrates the reverse denoising process visually.

\begin{figure*}[ht]
\vskip 0.2in
    \subfigure[SBM]{\includegraphics[width=\textwidth, trim=28mm 10mm 28mm 10mm, clip]{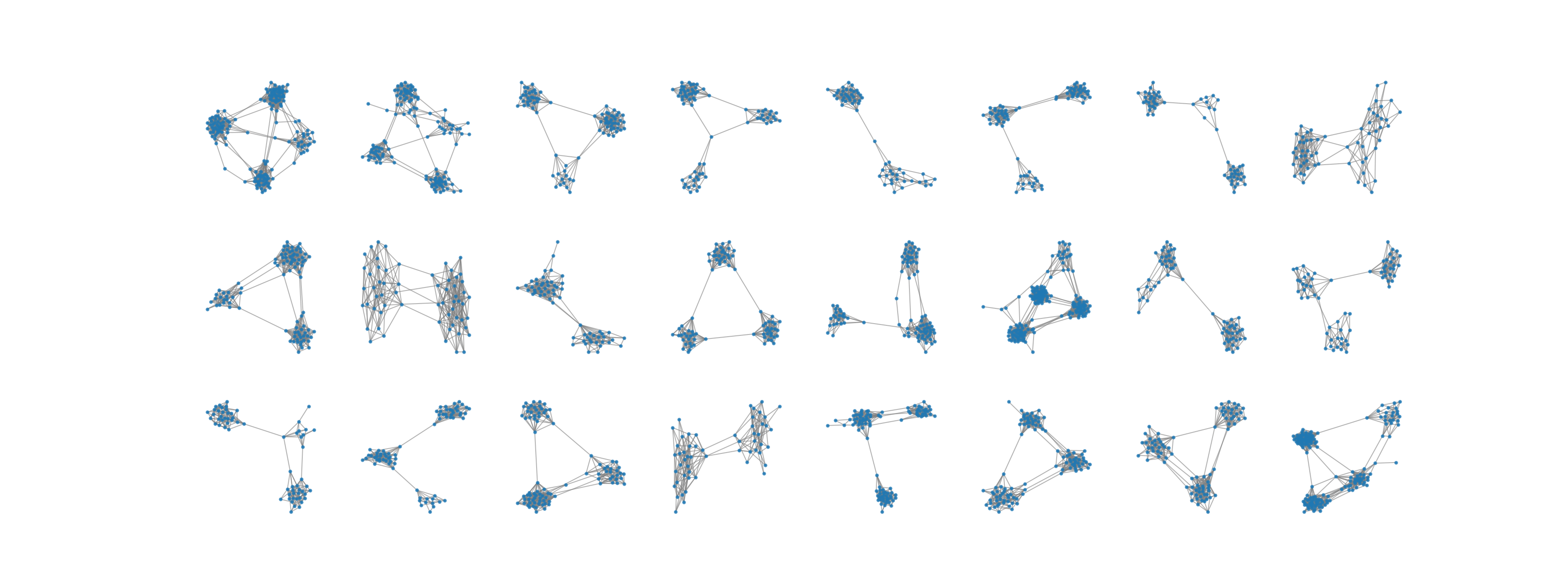}}
    \subfigure[Planar]{\includegraphics[width=\textwidth, trim=28mm 10mm 28mm 10mm, clip]{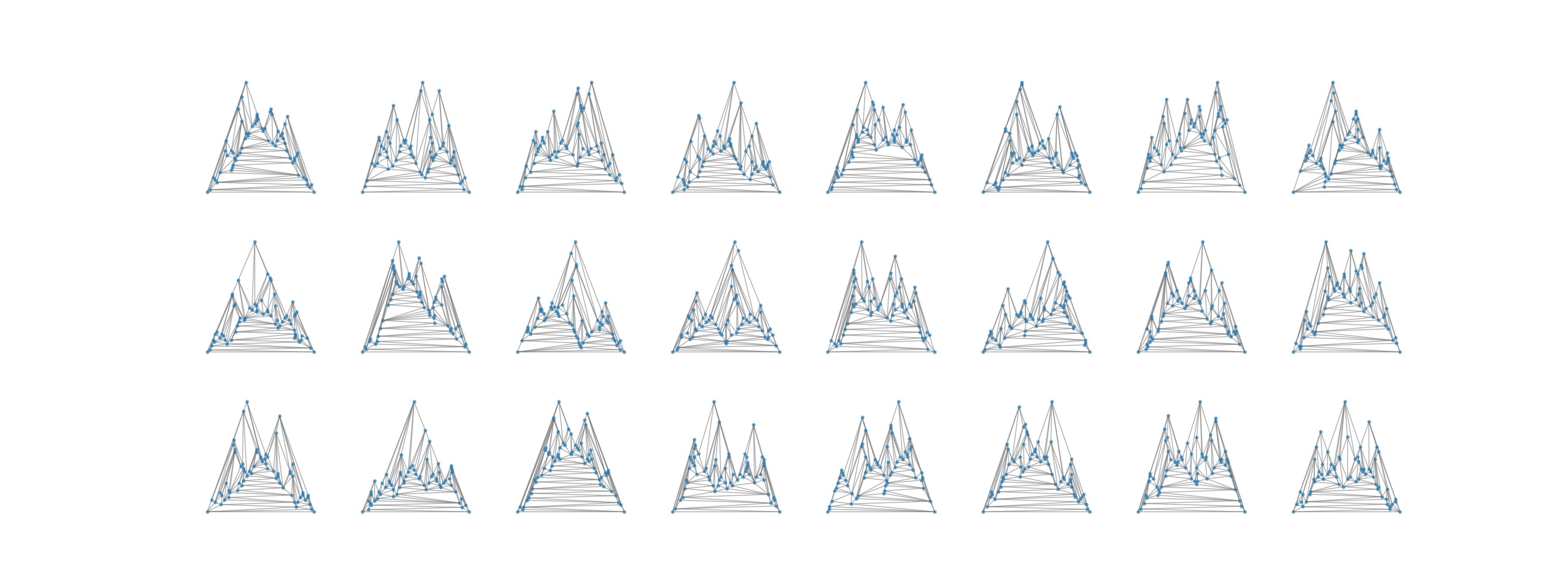}}
    \caption{Generated graphs.}
    \label{fig: generated graphs}
\vskip -0.2in
\end{figure*}

\begin{figure*}[t!]
\vskip 0.2in
\centering
\includegraphics[width=\textwidth, trim=55mm 40mm 55mm 40mm, clip]{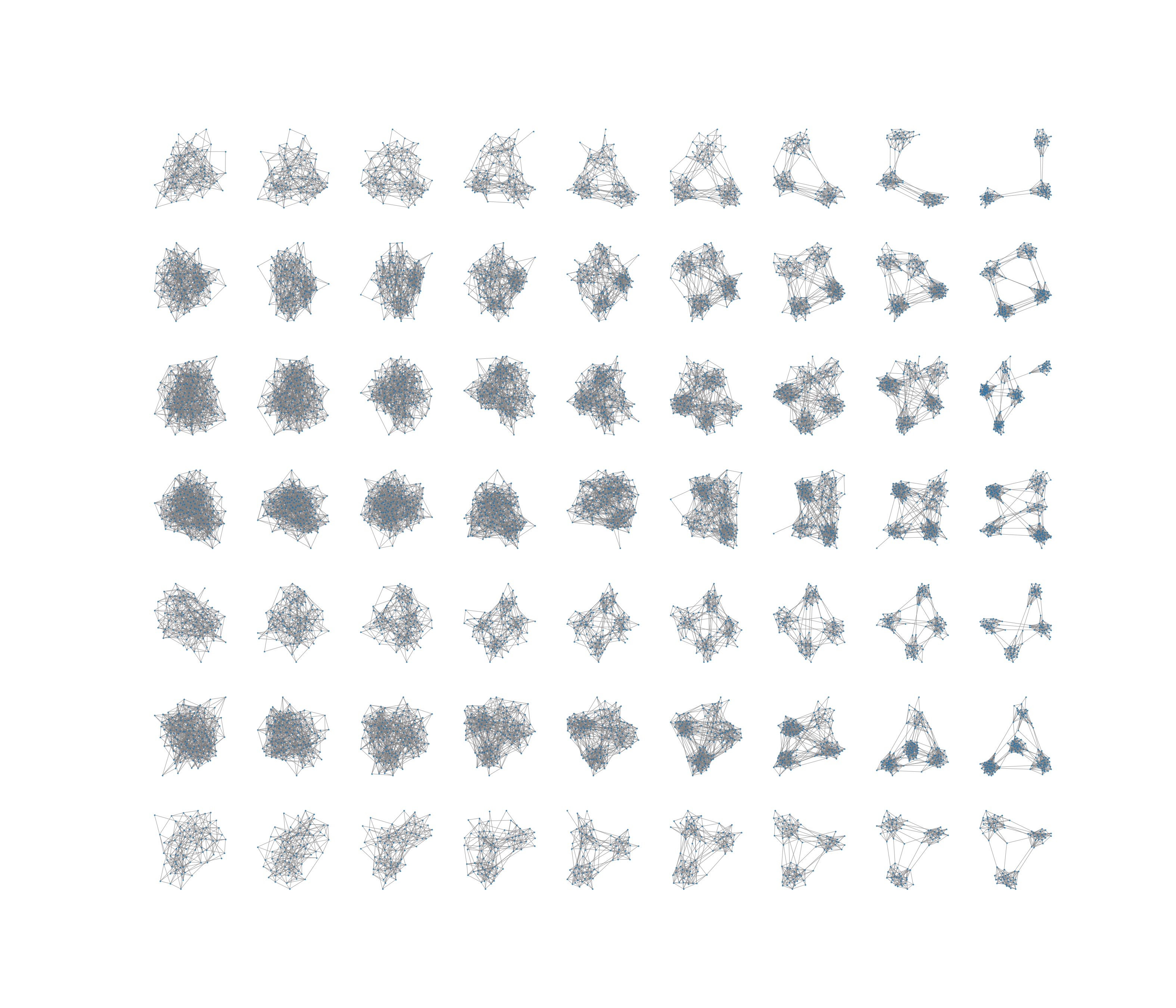}
\vspace{-4mm}
\caption{Generation trajectory of SBM graphs with different sizes. Every row is the generation trajectory of one graph from time $t=T$ (left) to $t=0$ (right) with equal time intervals.}
\label{fig: generation trajectory}
\end{figure*}

\section{Limitation and Future Work}
\label{sec: limitation}
In this paper, we study the generation of graphs with categorical node and edge types. The current model \textsc{DisCo} cannot be applied to generate graphs with multiple node/edge features (e.g., multiplex networks) and this is an important future work to study. Also, we view the absence of edge as a special type of edge, which forms a complete graph and promotes the expressiveness of our MPNN backbone model. However, it will lead to quadratic complexity concerning the number of nodes. For our current dataset (e.g. graphs with $<1000$ nodes) the complexity is still acceptable but for future studies on generating \emph{large} graphs, we aim to design more efficient diffusion generative models.

\section{Broader Impact}
\label{sec: broader impact}
This paper presents work whose goal is to advance the field of Machine Learning. There are many potential societal consequences of our work, none of which we feel must be specifically highlighted here.

\vfill\eject
\section*{NeurIPS Paper Checklist}

\begin{enumerate}

\item {\bf Claims}
    \item[] Question: Do the main claims made in the abstract and introduction accurately reflect the paper's contributions and scope?
    \item[] Answer: \answerYes{} 
    \item[] Justification: The abstract and introduction summarize all the theoretical and experimental contributions of this paper.
    \item[] Guidelines:
    \begin{itemize}
        \item The answer NA means that the abstract and introduction do not include the claims made in the paper.
        \item The abstract and/or introduction should clearly state the claims made, including the contributions made in the paper and important assumptions and limitations. A No or NA answer to this question will not be perceived well by the reviewers. 
        \item The claims made should match theoretical and experimental results, and reflect how much the results can be expected to generalize to other settings. 
        \item It is fine to include aspirational goals as motivation as long as it is clear that these goals are not attained by the paper. 
    \end{itemize}

\item {\bf Limitations}
    \item[] Question: Does the paper discuss the limitations of the work performed by the authors?
    \item[] Answer: \answerYes{} 
    \item[] Justification: The limitation is mentioned in Section~\ref{sec: limitation}.
    \item[] Guidelines:
    \begin{itemize}
        \item The answer NA means that the paper has no limitation while the answer No means that the paper has limitations, but those are not discussed in the paper. 
        \item The authors are encouraged to create a separate "Limitations" section in their paper.
        \item The paper should point out any strong assumptions and how robust the results are to violations of these assumptions (e.g., independence assumptions, noiseless settings, model well-specification, asymptotic approximations only holding locally). The authors should reflect on how these assumptions might be violated in practice and what the implications would be.
        \item The authors should reflect on the scope of the claims made, e.g., if the approach was only tested on a few datasets or with a few runs. In general, empirical results often depend on implicit assumptions, which should be articulated.
        \item The authors should reflect on the factors that influence the performance of the approach. For example, a facial recognition algorithm may perform poorly when image resolution is low or images are taken in low lighting. Or a speech-to-text system might not be used reliably to provide closed captions for online lectures because it fails to handle technical jargon.
        \item The authors should discuss the computational efficiency of the proposed algorithms and how they scale with dataset size.
        \item If applicable, the authors should discuss possible limitations of their approach to address problems of privacy and fairness.
        \item While the authors might fear that complete honesty about limitations might be used by reviewers as grounds for rejection, a worse outcome might be that reviewers discover limitations that aren't acknowledged in the paper. The authors should use their best judgment and recognize that individual actions in favor of transparency play an important role in developing norms that preserve the integrity of the community. Reviewers will be specifically instructed to not penalize honesty concerning limitations.
    \end{itemize}

\item {\bf Theory Assumptions and Proofs}
    \item[] Question: For each theoretical result, does the paper provide the full set of assumptions and a complete (and correct) proof?
    \item[] Answer: \answerYes{} 
    \item[] Justification: Detailed assumptions and proofs are included in the Section~\ref{sec: proofs appendix}.
    \item[] Guidelines:
    \begin{itemize}
        \item The answer NA means that the paper does not include theoretical results. 
        \item All the theorems, formulas, and proofs in the paper should be numbered and cross-referenced.
        \item All assumptions should be clearly stated or referenced in the statement of any theorems.
        \item The proofs can either appear in the main paper or the supplemental material, but if they appear in the supplemental material, the authors are encouraged to provide a short proof sketch to provide intuition. 
        \item Inversely, any informal proof provided in the core of the paper should be complemented by formal proofs provided in appendix or supplemental material.
        \item Theorems and Lemmas that the proof relies upon should be properly referenced. 
    \end{itemize}

    \item {\bf Experimental Result Reproducibility}
    \item[] Question: Does the paper fully disclose all the information needed to reproduce the main experimental results of the paper to the extent that it affects the main claims and/or conclusions of the paper (regardless of whether the code and data are provided or not)?
    \item[] Answer: \answerYes{} 
    \item[] Justification: Detailed experimental settings are included in Section~\ref{sec: experiments appendix}. Also, the code of this paper is released in the supplementary materials.
    \item[] Guidelines:
    \begin{itemize}
        \item The answer NA means that the paper does not include experiments.
        \item If the paper includes experiments, a No answer to this question will not be perceived well by the reviewers: Making the paper reproducible is important, regardless of whether the code and data are provided or not.
        \item If the contribution is a dataset and/or model, the authors should describe the steps taken to make their results reproducible or verifiable. 
        \item Depending on the contribution, reproducibility can be accomplished in various ways. For example, if the contribution is a novel architecture, describing the architecture fully might suffice, or if the contribution is a specific model and empirical evaluation, it may be necessary to either make it possible for others to replicate the model with the same dataset, or provide access to the model. In general. releasing code and data is often one good way to accomplish this, but reproducibility can also be provided via detailed instructions for how to replicate the results, access to a hosted model (e.g., in the case of a large language model), releasing of a model checkpoint, or other means that are appropriate to the research performed.
        \item While NeurIPS does not require releasing code, the conference does require all submissions to provide some reasonable avenue for reproducibility, which may depend on the nature of the contribution. For example
        \begin{enumerate}
            \item If the contribution is primarily a new algorithm, the paper should make it clear how to reproduce that algorithm.
            \item If the contribution is primarily a new model architecture, the paper should describe the architecture clearly and fully.
            \item If the contribution is a new model (e.g., a large language model), then there should either be a way to access this model for reproducing the results or a way to reproduce the model (e.g., with an open-source dataset or instructions for how to construct the dataset).
            \item We recognize that reproducibility may be tricky in some cases, in which case authors are welcome to describe the particular way they provide for reproducibility. In the case of closed-source models, it may be that access to the model is limited in some way (e.g., to registered users), but it should be possible for other researchers to have some path to reproducing or verifying the results.
        \end{enumerate}
    \end{itemize}

\item {\bf Open access to data and code}
    \item[] Question: Does the paper provide open access to the data and code, with sufficient instructions to faithfully reproduce the main experimental results, as described in supplemental material?
    \item[] Answer: \answerYes{} 
    \item[] Justification: All the datasets are publicly available and their links are included in Section~\ref{sec: experiments appendix}. The code is released in the supplementary materials; we will formally release the code after acceptance.
    \item[] Guidelines:
    \begin{itemize}
        \item The answer NA means that paper does not include experiments requiring code.
        \item Please see the NeurIPS code and data submission guidelines (\url{https://nips.cc/public/guides/CodeSubmissionPolicy}) for more details.
        \item While we encourage the release of code and data, we understand that this might not be possible, so “No” is an acceptable answer. Papers cannot be rejected simply for not including code, unless this is central to the contribution (e.g., for a new open-source benchmark).
        \item The instructions should contain the exact command and environment needed to run to reproduce the results. See the NeurIPS code and data submission guidelines (\url{https://nips.cc/public/guides/CodeSubmissionPolicy}) for more details.
        \item The authors should provide instructions on data access and preparation, including how to access the raw data, preprocessed data, intermediate data, and generated data, etc.
        \item The authors should provide scripts to reproduce all experimental results for the new proposed method and baselines. If only a subset of experiments are reproducible, they should state which ones are omitted from the script and why.
        \item At submission time, to preserve anonymity, the authors should release anonymized versions (if applicable).
        \item Providing as much information as possible in supplemental material (appended to the paper) is recommended, but including URLs to data and code is permitted.
    \end{itemize}

\item {\bf Experimental Setting/Details}
    \item[] Question: Does the paper specify all the training and test details (e.g., data splits, hyperparameters, how they were chosen, type of optimizer, etc.) necessary to understand the results?
    \item[] Answer: \answerYes{} 
    \item[] Justification: All the training details are included in Section~\ref{sec: experiments appendix} and the released codes.
    \item[] Guidelines:
    \begin{itemize}
        \item The answer NA means that the paper does not include experiments.
        \item The experimental setting should be presented in the core of the paper to a level of detail that is necessary to appreciate the results and make sense of them.
        \item The full details can be provided either with the code, in appendix, or as supplemental material.
    \end{itemize}

\item {\bf Experiment Statistical Significance}
    \item[] Question: Does the paper report error bars suitably and correctly defined or other appropriate information about the statistical significance of the experiments?
    \item[] Answer: \answerYes{} 
    \item[] Justification: We report average results with standard deviation on all the datasets, except MOSES and GuacaMol, whose computations are too expensive to repeat multiple times.
    \item[] Guidelines:
    \begin{itemize}
        \item The answer NA means that the paper does not include experiments.
        \item The authors should answer "Yes" if the results are accompanied by error bars, confidence intervals, or statistical significance tests, at least for the experiments that support the main claims of the paper.
        \item The factors of variability that the error bars are capturing should be clearly stated (for example, train/test split, initialization, random drawing of some parameter, or overall run with given experimental conditions).
        \item The method for calculating the error bars should be explained (closed form formula, call to a library function, bootstrap, etc.)
        \item The assumptions made should be given (e.g., Normally distributed errors).
        \item It should be clear whether the error bar is the standard deviation or the standard error of the mean.
        \item It is OK to report 1-sigma error bars, but one should state it. The authors should preferably report a 2-sigma error bar than state that they have a 96\% CI, if the hypothesis of Normality of errors is not verified.
        \item For asymmetric distributions, the authors should be careful not to show in tables or figures symmetric error bars that would yield results that are out of range (e.g. negative error rates).
        \item If error bars are reported in tables or plots, The authors should explain in the text how they were calculated and reference the corresponding figures or tables in the text.
    \end{itemize}

\item {\bf Experiments Compute Resources}
    \item[] Question: For each experiment, does the paper provide sufficient information on the computer resources (type of compute workers, memory, time of execution) needed to reproduce the experiments?
    \item[] Answer: \answerYes{} 
    \item[] Justification: The compute resources are detailed in Section~\ref{sec: hardware and implmentations}.
    \item[] Guidelines:
    \begin{itemize}
        \item The answer NA means that the paper does not include experiments.
        \item The paper should indicate the type of compute workers CPU or GPU, internal cluster, or cloud provider, including relevant memory and storage.
        \item The paper should provide the amount of compute required for each of the individual experimental runs as well as estimate the total compute. 
        \item The paper should disclose whether the full research project required more compute than the experiments reported in the paper (e.g., preliminary or failed experiments that didn't make it into the paper). 
    \end{itemize}
    
\item {\bf Code Of Ethics}
    \item[] Question: Does the research conducted in the paper conform, in every respect, with the NeurIPS Code of Ethics \url{https://neurips.cc/public/EthicsGuidelines}?
    \item[] Answer: \answerYes{} 
    \item[] Justification: We checked the code of Ethics and the paper conforms with the Code of Ethics. 
    \item[] Guidelines:
    \begin{itemize}
        \item The answer NA means that the authors have not reviewed the NeurIPS Code of Ethics.
        \item If the authors answer No, they should explain the special circumstances that require a deviation from the Code of Ethics.
        \item The authors should make sure to preserve anonymity (e.g., if there is a special consideration due to laws or regulations in their jurisdiction).
    \end{itemize}

\item {\bf Broader Impacts}
    \item[] Question: Does the paper discuss both potential positive societal impacts and negative societal impacts of the work performed?
    \item[] Answer: \answerYes{} 
    \item[] Justification: It is mentioned in Section~\ref{sec: broader impact}. Our work is general graph generative modeling, which shares potential societal consequences with many established graph generative models.
    \item[] Guidelines:
    \begin{itemize}
        \item The answer NA means that there is no societal impact of the work performed.
        \item If the authors answer NA or No, they should explain why their work has no societal impact or why the paper does not address societal impact.
        \item Examples of negative societal impacts include potential malicious or unintended uses (e.g., disinformation, generating fake profiles, surveillance), fairness considerations (e.g., deployment of technologies that could make decisions that unfairly impact specific groups), privacy considerations, and security considerations.
        \item The conference expects that many papers will be foundational research and not tied to particular applications, let alone deployments. However, if there is a direct path to any negative applications, the authors should point it out. For example, it is legitimate to point out that an improvement in the quality of generative models could be used to generate deepfakes for disinformation. On the other hand, it is not needed to point out that a generic algorithm for optimizing neural networks could enable people to train models that generate Deepfakes faster.
        \item The authors should consider possible harms that could arise when the technology is being used as intended and functioning correctly, harms that could arise when the technology is being used as intended but gives incorrect results, and harms following from (intentional or unintentional) misuse of the technology.
        \item If there are negative societal impacts, the authors could also discuss possible mitigation strategies (e.g., gated release of models, providing defenses in addition to attacks, mechanisms for monitoring misuse, mechanisms to monitor how a system learns from feedback over time, improving the efficiency and accessibility of ML).
    \end{itemize}
    
\item {\bf Safeguards}
    \item[] Question: Does the paper describe safeguards that have been put in place for responsible release of data or models that have a high risk for misuse (e.g., pretrained language models, image generators, or scraped datasets)?
    \item[] Answer: \answerNA{} 
    \item[] Justification: We did not scrape any dataset from Internet and our released model is of low risk for misuse.
    \item[] Guidelines:
    \begin{itemize}
        \item The answer NA means that the paper poses no such risks.
        \item Released models that have a high risk for misuse or dual-use should be released with necessary safeguards to allow for controlled use of the model, for example by requiring that users adhere to usage guidelines or restrictions to access the model or implementing safety filters. 
        \item Datasets that have been scraped from the Internet could pose safety risks. The authors should describe how they avoided releasing unsafe images.
        \item We recognize that providing effective safeguards is challenging, and many papers do not require this, but we encourage authors to take this into account and make a best faith effort.
    \end{itemize}

\item {\bf Licenses for existing assets}
    \item[] Question: Are the creators or original owners of assets (e.g., code, data, models), used in the paper, properly credited and are the license and terms of use explicitly mentioned and properly respected?
    \item[] Answer: \answerYes{} 
    \item[] Justification: All the datasets and codes of baseline methods are publicly available and for the academic purpose.
    \item[] Guidelines:
    \begin{itemize}
        \item The answer NA means that the paper does not use existing assets.
        \item The authors should cite the original paper that produced the code package or dataset.
        \item The authors should state which version of the asset is used and, if possible, include a URL.
        \item The name of the license (e.g., CC-BY 4.0) should be included for each asset.
        \item For scraped data from a particular source (e.g., website), the copyright and terms of service of that source should be provided.
        \item If assets are released, the license, copyright information, and terms of use in the package should be provided. For popular datasets, \url{paperswithcode.com/datasets} has curated licenses for some datasets. Their licensing guide can help determine the license of a dataset.
        \item For existing datasets that are re-packaged, both the original license and the license of the derived asset (if it has changed) should be provided.
        \item If this information is not available online, the authors are encouraged to reach out to the asset's creators.
    \end{itemize}

\item {\bf New Assets}
    \item[] Question: Are new assets introduced in the paper well documented and is the documentation provided alongside the assets?
    \item[] Answer: \answerNA{} 
    \item[] Justification: We did not release any new dataset and the code released will be well documented after the acceptance.
    \item[] Guidelines:
    \begin{itemize}
        \item The answer NA means that the paper does not release new assets.
        \item Researchers should communicate the details of the dataset/code/model as part of their submissions via structured templates. This includes details about training, license, limitations, etc. 
        \item The paper should discuss whether and how consent was obtained from people whose asset is used.
        \item At submission time, remember to anonymize your assets (if applicable). You can either create an anonymized URL or include an anonymized zip file.
    \end{itemize}

\item {\bf Crowdsourcing and Research with Human Subjects}
    \item[] Question: For crowdsourcing experiments and research with human subjects, does the paper include the full text of instructions given to participants and screenshots, if applicable, as well as details about compensation (if any)? 
    \item[] Answer: \answerNA{} 
    \item[] Justification: The paper does not involve crowdsourcing nor research with human subjects.
    \item[] Guidelines:
    \begin{itemize}
        \item The answer NA means that the paper does not involve crowdsourcing nor research with human subjects.
        \item Including this information in the supplemental material is fine, but if the main contribution of the paper involves human subjects, then as much detail as possible should be included in the main paper. 
        \item According to the NeurIPS Code of Ethics, workers involved in data collection, curation, or other labor should be paid at least the minimum wage in the country of the data collector. 
    \end{itemize}

\item {\bf Institutional Review Board (IRB) Approvals or Equivalent for Research with Human Subjects}
    \item[] Question: Does the paper describe potential risks incurred by study participants, whether such risks were disclosed to the subjects, and whether Institutional Review Board (IRB) approvals (or an equivalent approval/review based on the requirements of your country or institution) were obtained?
    \item[] Answer: \answerNA{} 
    \item[] Justification: The paper does not involve crowdsourcing nor research with human subjects.
    \item[] Guidelines:
    \begin{itemize}
        \item The answer NA means that the paper does not involve crowdsourcing nor research with human subjects.
        \item Depending on the country in which research is conducted, IRB approval (or equivalent) may be required for any human subjects research. If you obtained IRB approval, you should clearly state this in the paper. 
        \item We recognize that the procedures for this may vary significantly between institutions and locations, and we expect authors to adhere to the NeurIPS Code of Ethics and the guidelines for their institution. 
        \item For initial submissions, do not include any information that would break anonymity (if applicable), such as the institution conducting the review.
    \end{itemize}

\end{enumerate}

\end{document}